\DeclareMathOperator*{\argsup}{arg\,sup} 
\DeclareMathOperator*{\arginf}{arg\,inf}
\DeclareMathOperator*{\dist}{dist} 
\newcommand{\algname}{\text{DR-LSVI-UCB}}
\newcommand{\intextmath}[1]{{\small #1}}
\crefname{assumption}{assumption}{assumptions}
\renewcommand{\todo}[2][]{%
    \@todo[caption={#2}, #1]{\begin{spacing}{0.5}#2\end{spacing}}%
} 
\renewcommand*{\backref}[1]{}
\renewcommand*{\backrefalt}[4]{%
\ifcase #1 %
    No citations.%
\or
    (p. #2.)%
\else
    (pp. #2.)%
\fi}%
\titlespacing*{\section}{0pt}{*0.1}{*0.1}
\titlespacing*{\subsection}{0pt}{*0.1}{*0.1}
\titlespacing*{\subsubsection}{0pt}{*0.1}{*0.1}
\begin{document}

\runningtitle{Distributionally Robust Off-Dynamics Reinforcement Learning}

\twocolumn[

\aistatstitle{Distributionally Robust Off-Dynamics Reinforcement Learning: Provable Efficiency with Linear Function Approximation}

\aistatsauthor{ Zhishuai Liu \And  Pan Xu }

\aistatsaddress{ 
Duke University \\
zhishuai.liu@duke.edu

\And  
Duke University \\
pan.xu@duke.edu
} 
]

\begin{abstract}
  We study off-dynamics Reinforcement Learning (RL), where the policy is trained on a source domain and deployed to a distinct target domain. We aim to solve this problem via online distributionally robust Markov decision processes (DRMDPs), where the learning algorithm actively interacts with the source domain while seeking the optimal performance under the worst possible dynamics that is within an uncertainty set of the source domain's transition kernel. We provide the first study on online DRMDPs with function approximation for off-dynamics RL. We find that DRMDPs' dual formulation can induce nonlinearity, even when the nominal transition kernel is linear, leading to error propagation. By designing a $d$-rectangular uncertainty set using the total variation distance, we remove this additional nonlinearity and bypass the error propagation. We then introduce DR-LSVI-UCB, the first provably efficient online DRMDP algorithm for off-dynamics RL with function approximation, and establish a polynomial suboptimality bound that is independent of the state and action space sizes. Our work makes the first step towards a deeper understanding of the provable efficiency of online DRMDPs with linear function approximation. Finally, we substantiate the performance and robustness of DR-LSVI-UCB through different numerical experiments.
\end{abstract}
\section{INTRODUCTION}

The Markov decision process (MDP) is a prevalent model in dynamic decision-making and reinforcement learning \citep{puterman2014markov,sutton2018reinforcement}. A central challenge in employing MDPs in various applications lies in the lack of knowledge of model parameters, notably the transition kernels. Existing studies mostly hinge on the assumption that the environment in which a policy is trained is identical to that in which it is deployed. However, in practical scenarios where this assumption is violated, standard RL methods are prone to severe failures \citep{farebrother2018generalization, packer2018assessing, zhao2020sim}, a phenomenon known as the {\it sim-to-real gap}. Infectious disease control \citep{laber2018optimal, liu2023deep} exemplifies such a case wherein an agent trains policies on simulators extensively utilized in environmental studies. Nonetheless, these simulators cannot fully capture the environmental evolution complexity, and environmental changes may also occur over time, further contributing to the sim-to-real gap. Another instance is found in robotics learning, where slight variations between training and testing environments, such as terrain or target parameters, may lead to task failure \citep{maitin2010cloth, tobin2017domain, peng2018sim}.

Learning under the sim-to-real gap can be conceptualized as an off-dynamics RL problem \citep{koos2012transferability,wulfmeier2017mutual,eysenbach2020off,jiang2021simgan}, where an agent trains a policy in an accessible source domain, such as a simulator or the present environment, then deploys the learned policy in a distinct target domain, which could be the real environment the agent encounters during operation or a future changing environment. The dynamics shift between environments necessitates a robust strategy for policy learning in the source domain, ensuring that the policy can work effectively in different yet structurally similar target domains.

Distributionally robust Markov decision process (DRMDPs) \citep{satia1973markovian, nilim2005robust, iyengar2005robust} address the sim-to-real gap challenge by modeling the uncertainty of transition kernels. It aims to learn a robust policy that performs well under the worst-case transition kernel within the uncertainty set defined based on the source environment \citep{xu2006robustness, wiesemann2013robust, zhang2021robust, yang2022toward, panaganti2022robust, shi2022distributionally,yang2023distributionally,shen2024wasserstein}. Existing DRMDP research can be categorized based on the assumption on the source domain: (i) planning problems where the exact model is assumed known, (ii) learning under a generative model, and (iii) learning from offline datasets utilizing specific data coverage assumptions. However, in practice, formulating and solving a planning problem is often infeasible due to imperfect knowledge or complexity of the source domain. Similarly, an accurate generative model representing the source domain is usually unavailable. Additionally, most data coverage assumptions require the datasets have sufficient coverage of distributions induced by the optimal policy under any transition kernel in the uncertainty set. 
Since the optimal policy is usually unknown and there are infinite number of transition kernels in the uncertainty set, practical verification of data coverage assumptions is intractable. %
Thus, when incremental collection of data  through active interactions with the source domain is feasible, online algorithms  without relying on additional oracles or data coverage assumptions about the optimal policy will be preferred. We refer to this as the online DRMDP problem.

Another significant challenge in RL is the ubiquitous presence of applications with arbitrarily large state and action spaces, which require suitable function approximations to alleviate the curse of dimensionality. Although approaches based on linear function approximation have exhibited theoretical and empirical success in numerous settings under standard MDP \citep{bhandari2018finite, modi2020sample, jin2020provably, he2023nearly, he2021logarithmic, yang2020reinforcement}, DRMDP encounters additional difficulties when combined with linear function approximations since the dual formulation in worst-case analyses may induce extra non-linearity, even when the source domain transition kernel is linear \citep{tamar2014scaling, pinto2017robust, derman2018soft, mankowitz2019robust, derman2020bayesian,  zhang2021robust, badrinath2021robust}. Consequently, the theoretical understanding of online DRMDPs with function approximation  remains elusive, even when the approximation is linear. This leads to the open question: %
\begin{center}
\vspace{-0.1in}
{\it When is it possible to design a provably efficient algorithm for online DRMDPs \\ with linear function approximation?}  
\vspace{-0.1in}
\end{center}

In this work, we provide the first analysis of online DRMDP with linear function approximation where an agent actively interacts with the source domain to learn a robust policy.

\textbf{Our main contributions} are summarized as follows.
\begin{itemize}[nosep,leftmargin=*]
    \item We first investigate the differences in applying linear function approximation in DRMDPs with uncertainty sets defined on different probability divergence metrics. We show that the strong duality for Chi-square or Kullback-Leibler (KL) based DRMDPs induces additional nonlinearity which can cause severe error amplification and regret accumulation
    (see \Cref{remark:linear regression} for more details).
    We then identify a feasible setting that assumes a $d$-rectangular linear DRMDP and a total variation (TV) based uncertainty set, which permits linear representations on the robust Q-functions, and bypasses the error amplification and regret accumulation. 
    
    \item We introduce a model-free online algorithm, viz., DR-LSVI-UCB, based on the LSVI-UCB algorithm in the non-robust setting \citep{jin2020provably}. The design of the DR-LSVI-UCB incorporates a robust Upper Confidence Bonus (UCB) quantity and a truncated estimation of the robust state-action value function at the MDP's fail state, both of which are explicitly devised for the online DRMDP setting (refer to \Cref{remark:alg-8&9} for more details). 
    
    \item We prove an average suboptimality bound for DR-LSVI-UCB in the order of $\tilde O(\sqrt{H^4d^4/K})$, where $H$ is the horizon length, $d$ the feature dimension, and $K$ the number of episodes. Our result matches the average regret\footnote{Since in DRMDP, we trade off the performance in the source domain for the robustness in the target domain, we evaluate a robust algorithm by its suboptimality gap from the optimal robust policy, comparable to the average regret in standard MDP, i.e., the cumulative regret divided by $K$.} bound of its non-robust counterpart LSVI-UCB \citep{jin2020provably} regarding $H$ and  $K$, but is worse regarding feature dimension by a factor of $\sqrt{d}$. 
    To the best of our knowledge, this is the first non-asymptotic suboptimality bound for online DRMDPs with linear function approximation, which guarantees efficient robust learning in off-dynamics RL.  
    Interesting, when reduced to the tabular setting where $d=SA$ with $S$ and $A$ being the state and action space sizes, the average suboptimality gap of \algname\ exactly matches the average regret bound of LSVI-UCB, indicating tabular DRMDPs with a TV uncertainty set might not be more challenging than the standard tabular MDP.
    
    \item We perform numerical experiments to illustrate the efficacy of \algname\ on a simulated linear MDP environment and an emulated American put option environment \citep{tamar2014scaling}. Our results demonstrate that the policies derived by \algname\  are robust against dynamics shifts, further substantiating our theoretical findings.
\end{itemize}

\section{RELATED WORK}
\paragraph{Episodic Linear MDP} Our study focuses on the episodic linear MDP setting. Specifically, we assume the nominal transition probability in our DRMDP admits the linear MDP structure. There has been a recent surge in research on episodic linear MDPs \citep{yang2020reinforcement, jin2020provably, modi2020sample, zanette2020frequentist, wang2020reward, he2021logarithmic, wagenmaker2022reward, ishfaq2023provable,he2023nearly}.  The most relevant study to ours is the seminal work of \cite{jin2020provably}, which introduced a model-free online algorithm, LSVI-UCB, for standard RL. Through a `Hoeffding-type' exploration bonus, LSVI-UCB can actively explore the nominal environment and achieves a $\tilde{O}(\sqrt{d^3H^4K})$ regret bound. However, the episodic linear MDP setting still remains understudied in the context of DRMDPs. 

\paragraph{DRMDPs} 
Numerous works have extensively studied the DRMDP framework under different settings. \cite{xu2006robustness, wiesemann2013robust, yu2015distributionally, mannor2016robust, goyal2023robust} studied the DRMDP assuming the exact environment is known, and establishing DRMDPs as classic planning problems. \cite{zhou2021finite, yang2022toward, panaganti2022sample, xu2023improved, shi2023curious, yang2023avoiding} studied the DRMDP assuming the access to a generative model. \cite{panaganti2022robust, shi2022distributionally, blanchet2023double} studied the DRMDP in the offline RL setting assuming strong data coverage or concentratability conditions. Moreover, \cite{dong2022online} studied the online DRMDP under the episodic tabular MDP setting. They proposed a model-based algorithm ROPO, which achieves an average suboptimality bound of $\tilde{O}(\sqrt{H^4S^2A/K})$ under the $(s,a)$-rectangular assumption. However, their method cannot deal with settings where state space size $S$ and action space size $A$ are large or infinite in practical applications.

\paragraph{DRMDPs with linear function approximation} \cite{tamar2014scaling} first proposed to use linear function approximation to solve DRMDPs with large state and action spaces, and provided an asymptotic convergence guarantee for their sampling-based approach. \cite{badrinath2021robust} proposed a model-free online algorithm based on linear projection, and provided the corresponding asymptotic convergence guarantee. Recently, \cite{ma2022distributionally} pointed out that the nonlinearity of DRMDPs might make linear projection fall short, resulting in poor decision-making. \cite{ma2022distributionally} then studied the novel $d$-rectangular linear DRMDP that naturally admits linear representations of the robust state-action value function. 
They studied the offline setting and proposed two value iteration based algorithms under the uniformly well-explored dataset assumption and the sufficient coverage of the optimal policy assumption, respectively. \cite{blanchet2023double} also studied the offline $d$-rectangular linear DRMDP based on the robust partial coverage assumption. However, the data coverage assumptions cannot be verified and guaranteed in practice as we discussed in \Cref{rmk:compare_with_data_coverage}. Thus, an online algorithm, which automates the acquisition of the optimal robust policy through actively interacting with the source domain, for the episodic $d$-rectangular linear DRMDP is in need.

\section{DISTRIBUTIONALLY ROBUST MDP WITH LINEAR FUNCTION APPROXIMATION}\label{sec:DRMDP and Linear MDP}
\subsection{Preliminaries}
A finite horizon Markov decision process can be denoted as $\text{MDP}(\mathcal{S}, \mathcal{A}, H,  P, r)$. Here $\mathcal{S}$ and $\mathcal{A}$ are the state and action spaces, 
$H\in \ZZ_+$ is the horizon length, 
$P=\{P_h\}_{h=1}^{H}$ and $r=\{r_h\}_{h=1}^H$ are the set of transition kernels and reward functions, respectively. For each step $h\in[H]$, we denote $P_{h}(\cdot|s,a)$ as the transition probability measure over the next state if action $a$ is taken at state $s$, and 
$r_h: \cS\times\cA \rightarrow [0,1]$ is the deterministic reward function, which for simplicity is assumed to be known.

A non-stationary Markov policy $\pi=\{\pi_h\}_{h=1}^H$ is a sequence of decision rules, where $\pi_h: \cS \rightarrow \Delta(\cA)$ is the policy at step $h$ and $\Delta(\cA)$ is the probability simplex defined over the action space $\cA$. For any transition kernel $P$ and any policy $\pi$, we define the value function and the state-action value function (viz., the Q-function) at step $h$ as
{\small
\begin{align*}
    & {\textstyle V_h^{\pi, P}(s):=\mathbb{E}^P\big[\sum_{t=h}^H r_t(s_t,a_t)\big|s_h=s, \pi \big]},\\ 
    &{\textstyle Q_h^{\pi, P}(s,a):=\mathbb{E}^P\big[\sum_{t=h}^H r_t(s_t,a_t)\big|s_h=s,a_h=a, \pi \big]}.
\end{align*}}%
As the rewards are bounded in $[0,1]$, thus any value function and Q-function are bounded in $[0, H]$.

A finite horizon distributionally robust Markov decision process (DRMDP) is formally defined by a tuple $\text{DRMDP}(\mathcal{S}, \mathcal{A}, H, \mathcal{U}^{\rho}(P^0), r)$. Here, $P^0=\{P^0_h\}_{h=1}^H$ is the set of nominal transition kernels, and $\cU^{\rho}(P^0) =  \bigotimes_{h\in[H]}\cU^{\rho}_h(P_h^0)$ denotes an uncertainty set centered around the nominal transition kernel with an uncertainty level $\rho\geq 0$.  $\cU^{\rho}_h(P_h^0)$ is often defined as a ball centered around $P_h^0$ with radius $\rho$ based on different probability divergence measures \citep{iyengar2005robust, yang2022toward, xu2023improved}. 

In contrast with the standard MDP where only the nominal transition kernel $P^0$ is considered, in DRMDPs, we consider all transition kernels within the uncertainty set $\cU^{\rho}(P^0)$. Then for $h \in [H]$ and any policy $\pi$, we define the robust value function $V_h^{\pi, \rho}:\cS\rightarrow \RR$ as the value function under the worst possible transition kernel within the uncertainty set: 
{\small
\begin{align*}
   {\textstyle V_h^{\pi, \rho}(s) =\inf_{P \in \mathcal{U}^{\rho}(P^0)}V_{h}^{\pi, P}(s), \quad \forall (h,s) \in [H] \times \cS.} 
\end{align*}}%
Accordingly, we define the robust state-action value function as  
${\intextmath{\textstyle Q_h^{\pi, \rho}(s,a) =\inf_{P \in \mathcal{U}^{\rho}(P^0)}Q_{h}^{\pi, P}(s,a)} }$,
for any $(h,s,a) \in [H]\times \mathcal{S} \times \mathcal{A}$.

We then define the optimal robust value function and optimal robust state-action value function:  $\forall(h,s,a) \in [H]\times \mathcal{S} \times \mathcal{A}$, $V_h^{\star, \rho}(s) = \sup_{\pi \in \Pi} V_h^{\pi, \rho}(s)$, $
    Q_h^{\star, \rho}(s,a) = \sup_{\pi \in \Pi} Q_h^{\pi, \rho}(s,a)$,
where $\Pi$ is the set of all (possibly randomized and nonstationary) policies. Then the optimal robust policy  $\pi^{\star}=\{\pi^{\star}_h\}_{h=1}^H$, defined as the policy that achieves the optimal robust value function, is given by $\pi^{\star} = \argsup_{\pi \in \Pi}V_h^{\pi, \rho}(s)$, for any $(h,s) \in [H]\times \mathcal{S}$.
Our goal is to learn the optimal robust policy by actively interacting with the nominal environment within $K$ episodes. At the beginning of episode $k$, the agent receives an initial state $s_1^k$. Denote $\pi^k$ as the current policy of the agent. We use \intextmath{$V_1^{\star, \rho}(s_1^k)-V_1^{\pi^k, \rho}(s_1^k)$} to measure the suboptimality of policy $\pi^k$ at episode $k$. Hence, we are interested in the average suboptimality of an algorithm after $K$ episodes, i.e., $\text{AveSubopt}(K)$, defined as follows
{\small
\begin{align*}
   {\textstyle \text{AveSubopt}(K) = \frac{1}{K}\sum_{k=1}^K\big[V_1^{\star,\rho}(s_1^k) - V_1^{\pi^k, \rho}(s_1^k)\big].} 
\end{align*}}%

\subsection{$d$-Rectangular Linear DRMDP}
\label{sec:DRMDP}
In this paper, we define the uncertainty set $\cU_h^\rho(P_h^0)$ based on a linear structure of the nominal transition kernel $P_h^0$, called the linear MDP \citep{jin2020provably, wei2021learning, wagenmaker2022reward, he2023nearly}.

\begin{assumption}
\label{assumption:linear MDP}
(Linear MDP) Given a known state-action feature mapping $\bphi: \cS \times \cA \rightarrow \RR^d$ satisfying $ \sum_{i=1}^d\phi_i(s,a)=1$, $\phi_i(s,a) \geq 0$,
for any $(i, s,a)\in [d] \times \cS \times \cA$,
we assume the reward function $\{r_h\}_{h=1}^H$ and nominal transition kernels $\{P_h^0\}_{h=1}^H$ have linear structures. Specifically, for any $(h,s,a) \in [H]\times \mathcal{S} \times \mathcal{A}$, $r_h(s,a)=\langle \bphi(s,a), \btheta_h\rangle$, and $P_h^0(\cdot|s,a)=\langle \bphi(s,a), \bmu_h^0(\cdot)\rangle$,
where $\{\btheta_h\}_{h=1}^H$ are known vectors with bounded norm $\Vert \btheta_h \Vert_2 \leq \sqrt{d}$ and $\{\bmu_h\}_{h=1}^H$ are unknown probability measures over $\cS$. 
\end{assumption}
\Cref{assumption:linear MDP} is slightly stronger than the linear MDP studied in the standard RL literature. Following similar works in DRMDPs \citep{ma2022distributionally,blanchet2023double}, 
we assume the coordinates of the feature mapping $\bphi(\cdot, \cdot)$ to be positive and add up to one, which could be achieved by normalization. Meanwhile, the factor measures $\{\bmu_h\}_{h=1}^H$ are required to be proper probability measures. Under these additional constraints, the nominal transition kernel $P^0_h(\cdot|s,a)$ can be seen as a mixture of factor distributions $\bmu_h(\cdot)$ with the aggregated feature $\phi(s,a)$ determining the weights.

To incorporate the linear structure of $P^0$ into the uncertainty set $\cU_h^\rho (P_h^0)$, we adopt the notion of $d$-rectangular uncertainty set \citep{ma2022distributionally,goyal2023robust}. More specifically, we assume $\cU^{\rho}(P^0)$ is parameterized by $\{\bmu_h^0\}_{h=1}^H$ and can be decomposed into \intextmath{$\cU^{\rho}_h(P_h^0) = \bigotimes_{(s,a)\in \cS\times\cA}\cU_h^{\rho}(s,a; \bmu^0_h)$}, where \intextmath{$\cU_h^{\rho}(s,a; \bmu^0_h)=\{\sum_{i=1}^d \phi_i(s,a)\mu_{h,i}(\cdot): \mu_{h,i}(\cdot) \in \cU_{h, i}^{\rho}(\mu^0_{h,i}), \forall i \in [d]\}$}, 
and \intextmath{$\cU_{h,i}^{\rho}(\mu^0_{h,i})$} is defined as 
{\small
\begin{align}
    \label{eq:factor uncertainty set}
    {\textstyle \cU_{h, i}^{\rho}(\mu^0_{h,i}) = \big\{\mu: \mu\in \Delta(\cS), D(\mu||\mu_{h,i}^0)\leq \rho\big\}}.
\end{align}}%
Here $D(\cdot||\cdot)$ is a probability divergence metric that will be instantiated later. 
We remark that the factor uncertainty sets $\{\cU_{h,i}^{\rho}(\mu_{h,i}^0)\}_{i\in[d]}$ are independent of the state-action pair $(s,a)$, and also independent with each other. 
As we will show in the proof of \Cref{prop:Linear form}, these attributes are essential in deriving that, for all policies, the robust Q-functions are always linear in the feature mapping $\bphi(\cdot,\cdot)$.

\subsection{Robust Bellman Equation and the Optimal Policy in DRMDPs} 
We show that the robust value function and the robust Q-function defined in DRMDPs satisfy the following robust Bellman equation. We denote $[\PP_hV](s,a) = \EE_{s'\sim P_h(\cdot|s,a)}[V(s')]$ for simplicity.

\begin{proposition}
\label{prop:Robust Bellman equation}
(Robust Bellman equation) Under the $d$-rectangular linear DRMDP setting, for any nominal transition kernel $P^0=\{P^0_h\}_{h=1}^H$ and any stationary policy $\pi=\{\pi_h\}_{h=1}^H$, the following robust Bellman equation holds: for any $(h,s,a) \in [H]\times \cS \times \cA$,  %
{\small
\begin{align}
\label{eq:robust bellman equation}
& Q_h^{\pi, \rho}(s,a)=r_h(s,a)+\inf_{P_h(\cdot|s,a)\in\cU_h^{\rho}(s,a;\bmu_h^0)}[\PP_h V_{h+1}^{\pi,\rho}](s,a)\notag\\
&{\textstyle V_h^{\pi, \rho}(s) =\EE_{a\sim\pi_h(\cdot|s)}\big[Q_h^{\pi,\rho}(s,a)\big]}.
\end{align}}%
\end{proposition}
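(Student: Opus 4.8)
The plan is to prove the two identities by exploiting the product (rectangular) structure of $\cU^\rho(P^0)$ both across steps $h$ and across state-action pairs, reducing everything to a single interchange of an infimum with an expectation. I would argue by backward induction on $h$, taking as the base case the terminal condition $V_{H+1}^{\pi,\rho}\equiv 0$, and assuming at the inductive step that $V_{h+1}^{\pi,\rho}(s')=\inf_{P\in\cU^\rho(P^0)}V_{h+1}^{\pi,P}(s')$ is the robust value function at step $h+1$, realized (up to arbitrarily small slack) by a worst-case kernel obtained by concatenating per-$(s,a)$ worst-case transitions.

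For the $Q$-identity, I would first observe that $Q_h^{\pi,P}(s,a)$ depends on $P=(P_1,\dots,P_H)$ only through the components $P_h,\dots,P_H$, so the outer infimum in $Q_h^{\pi,\rho}(s,a)=\inf_{P\in\cU^\rho(P^0)}Q_h^{\pi,P}(s,a)$ ranges only over those. Plugging in the non-robust Bellman recursion $Q_h^{\pi,P}(s,a)=r_h(s,a)+\EE_{s'\sim P_h(\cdot|s,a)}[V_{h+1}^{\pi,P}(s')]$ and pulling out the deterministic reward, the step-$h$ product structure $\cU_h^\rho(P_h^0)=\bigotimes_{(s,a)}\cU_h^\rho(s,a;\bmu_h^0)$ lets me restrict the step-$h$ component of the infimum to the single factor $P_h(\cdot|s,a)\in\cU_h^\rho(s,a;\bmu_h^0)$, and since $V_{h+1}^{\pi,P}$ does not involve $P_h$, I can split the infimum as $\inf_{P_h(\cdot|s,a)}\inf_{P_{h+1},\dots,P_H}\EE_{s'\sim P_h(\cdot|s,a)}[V_{h+1}^{\pi,P}(s')]$. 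The heart of the proof is then to push the inner infimum through the expectation to obtain $\EE_{s'\sim P_h(\cdot|s,a)}[V_{h+1}^{\pi,\rho}(s')]$, after which the definition of $[\PP_hV](s,a)$ yields the claimed identity. The $V$-identity follows analogously: starting from $V_h^{\pi,\rho}(s)=\inf_P\EE_{a\sim\pi_h(\cdot|s)}[Q_h^{\pi,P}(s,a)]$, I interchange the infimum with the kernel-independent average over $a\sim\pi_h(\cdot|s)$.

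The main obstacle is justifying the interchange $\inf_{\mathrm{future}}\EE_{s'}[V_{h+1}^{\pi,P}(s')]=\EE_{s'}[V_{h+1}^{\pi,\rho}(s')]$ (and likewise over $a$). The direction ``$\ge$'' is immediate, since for any fixed future kernel each summand $V_{h+1}^{\pi,P}(s')$ is bounded below by $V_{h+1}^{\pi,\rho}(s')$ while the weights $P_h(s'|s,a)\ge0$ are fixed. The reverse direction ``$\le$'' is where rectangularity is indispensable: I must produce a single future kernel that is simultaneously near-worst-case for every $s'$ appearing in the expectation. Because the factor uncertainty sets $\{\cU_{h,i}^\rho(\mu_{h,i}^0)\}$ are independent across $i$ and across $(s,a)$, the per-state worst-case choices supplied by the induction hypothesis never conflict, so concatenating them yields an admissible kernel in $\cU^\rho(P^0)$ achieving the infimum (up to any fixed slack) simultaneously for all states; taking expectations then gives ``$\le$''. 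I would bake this common-minimizer property into the induction hypothesis so that it is available at each step, and I expect the bookkeeping of the product structure --- verifying that the concatenated kernel indeed lies in $\cU^\rho(P^0)$ and that the inner infimum constrains only the relevant factors --- to be the most delicate part of the write-up.
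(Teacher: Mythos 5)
Your proposal is correct and takes essentially the same route as the paper's own proof: both argue by backward induction with a strengthened hypothesis asserting a single worst-case kernel that is simultaneously (near-)optimal for all states (your ``common-minimizer property,'' the paper's auxiliary statement on the existence of $\{\tilde{P}_i^{\pi}\}$), both use the rectangular product structure over $(s,a)$ to build the step-$h$ component of that kernel without conflicts, and both reduce the identity to the interchange of infimum and expectation, with one direction trivial and the other supplied by the common minimizer. The only cosmetic differences are that you allow arbitrarily small slack where the paper invokes closedness of the TV uncertainty sets to take exact minimizers, and that the paper packages the two directions as a chain of inequalities starting and ending at $Q_h^{\pi,\rho}$ that collapses to equalities, rather than proving ``$\le$'' and ``$\ge$'' separately.
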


Furthermore, it is well-known that the optimal (robust) value function can be achieved by a deterministic and stationary policy in standard MDPs \citep{sutton2018reinforcement, agarwal2019reinforcement} and tabular DRMDPs with $(s,a)$-rectangular assumption \citep{iyengar2005robust, nilim2005robust}. Similarly, we show that the optimal robust value function and Q-function can be achieved by a deterministic and stationary policy $\pi^{\star}$ in the $d$-rectangular linear DRMDP. 

\begin{proposition}
\label{prop:Deterministic and stationary}
(Existence of the optimal policy) Assume the nominal transition kernel $P^0$ satisfies \Cref{assumption:linear MDP} and the uncertainty set $\cU^\rho(P^0)$ is defined as in \Cref{sec:DRMDP}. Then there exists a deterministic and stationary policy $\pi^{\star}$ such that \intextmath{$ V_h^{\pi^{\star}, \rho}(s)=V^{\star, \rho}_h(s)$} and \intextmath{$Q_h^{\pi^{\star}, \rho}(s,a)=Q_h^{\star, \rho}(s,a)$}, for any $(h,s,a)\in[H]\times\cS\times\cA$.
\end{proposition}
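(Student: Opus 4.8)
The plan is to prove the statement by backward induction on the step index $h$, constructing a deterministic Markov policy through greedy maximization of a dynamic-programming recursion and then verifying that it simultaneously attains the optimal robust value and Q-functions. First I would set $\widetilde V_{H+1}\equiv 0$ and, for $h=H,H-1,\dots,1$, recursively define the candidate functions
\begin{align*}
\widetilde Q_h(s,a) &= r_h(s,a)+\inf_{P_h(\cdot|s,a)\in\cU_h^{\rho}(s,a;\bmu_h^0)}[\PP_h \widetilde V_{h+1}](s,a),\\
\widetilde V_h(s) &= \max_{a\in\cA}\widetilde Q_h(s,a),
\end{align*}
together with the greedy decision rule $\widetilde\pi_h(s)\in\argsup_{a\in\cA}\widetilde Q_h(s,a)$, which depends only on the current state, so $\widetilde\pi=\{\widetilde\pi_h\}_{h=1}^H$ is a deterministic Markov policy of exactly the asserted class. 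The goal is then to show $\widetilde V_h=V_h^{\star,\rho}$, $\widetilde Q_h=Q_h^{\star,\rho}$, and that $\widetilde\pi$ attains them.

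Second, I would establish two matching inequalities. For the achievability direction, I apply the robust Bellman equation of \Cref{prop:Robust Bellman equation} to $\widetilde\pi$: since $\widetilde\pi$ is deterministic, $V_h^{\widetilde\pi,\rho}(s)=Q_h^{\widetilde\pi,\rho}(s,\widetilde\pi_h(s))$, and a backward induction using $V_{h+1}^{\widetilde\pi,\rho}=\widetilde V_{h+1}$ gives $V_h^{\widetilde\pi,\rho}(s)=r_h(s,\widetilde\pi_h(s))+\inf_{P_h}[\PP_h\widetilde V_{h+1}](s,\widetilde\pi_h(s))=\widetilde V_h(s)$, hence $V_h^{\star,\rho}\ge \widetilde V_h$. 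For the optimality direction, I show $V_h^{\pi,\rho}(s)\le \widetilde V_h(s)$ for every $\pi\in\Pi$ by backward induction: assuming $V_{h+1}^{\pi,\rho}\le\widetilde V_{h+1}$ pointwise, the monotonicity of the map $V\mapsto[\PP_h V](s,a)$ together with the fact that an infimum over the same uncertainty set preserves the pointwise order yields $\inf_{P_h}[\PP_h V_{h+1}^{\pi,\rho}](s,a)\le\inf_{P_h}[\PP_h\widetilde V_{h+1}](s,a)=\widetilde Q_h(s,a)-r_h(s,a)$, so that $V_h^{\pi,\rho}(s)=\EE_{a\sim\pi_h(\cdot|s)}[Q_h^{\pi,\rho}(s,a)]\le\EE_{a\sim\pi_h(\cdot|s)}[\widetilde Q_h(s,a)]\le\max_{a}\widetilde Q_h(s,a)=\widetilde V_h(s)$. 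Combining the two inequalities identifies $\widetilde V_h$ with $V_h^{\star,\rho}$ and shows $\widetilde\pi$ is optimal, while the corresponding identity for the Q-functions follows directly from the robust Bellman recursion.

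The main obstacle I anticipate is justifying the optimality inequality against the full class $\Pi$, which contains randomized and nonstationary (potentially history-dependent) policies rather than only the policies for which \Cref{prop:Robust Bellman equation} is stated. This is precisely where the rectangular structure across steps, $\cU^{\rho}(P^0)=\bigotimes_{h\in[H]}\cU_h^{\rho}(P_h^0)$, becomes indispensable: it allows the infimum over the product uncertainty set to be resolved step by step, so that the tail robust value still admits the recursive decomposition used above even when $\pi$ depends on the past. I would therefore first record a one-sided robust Bellman inequality valid for arbitrary $\pi\in\Pi$, leveraging step-rectangularity to decouple nature's per-step choices, before running the induction. A secondary technical point is the existence and measurability of the greedy selector $\widetilde\pi_h(s)$; since the linear representation established earlier in the excerpt makes $\widetilde Q_h(\cdot,\cdot)$ well-behaved in the feature $\bphi(\cdot,\cdot)$, attainment of the maximum over $\cA$ (or a measurable selection argument) secures a valid deterministic decision rule, completing the construction.
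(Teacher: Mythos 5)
Your proof is correct, and it rests on the same backbone as the paper's: backward induction powered by the robust Bellman equation (\Cref{prop:Robust Bellman equation}), a greedy deterministic decision rule, and monotonicity of the robust Bellman operator. The organizational difference is where the greediness is anchored. The paper defines $\tilde\pi_h(s)=\argmax_{a\in\cA}\big\{r_h(s,a)+\inf_{P_h(\cdot|s,a)\in\cU_h^{\rho}(s,a;\bmu_h^0)}[\PP_h V_{h+1}^{\star,\rho}](s,a)\big\}$ directly with respect to the optimal robust value function and runs a single induction showing $V_h^{\tilde\pi,\rho}=V_h^{\star,\rho}$, folding your two directions (achievability and optimality) into one inductive step; you instead build auxiliary value-iteration functions $\widetilde V_h,\widetilde Q_h$ from scratch, take the greedy policy with respect to those, and sandwich $\widetilde V_h$ between $V_h^{\widetilde\pi,\rho}$ and $\sup_{\pi\in\Pi}V_h^{\pi,\rho}$. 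Once $\widetilde V_{h+1}=V_{h+1}^{\star,\rho}$ is established the two constructions coincide, so the mathematical content is the same; what your version buys is that (i) it establishes the robust Bellman optimality equation \eqref{eq:optimal robust bellman equation} as a byproduct rather than presupposing the one-step characterization of $V_h^{\star,\rho}$, (ii) it addresses the Q-function identity explicitly, whereas the paper's proof only establishes the value-function identity and leaves the claim $Q_h^{\pi^\star,\rho}=Q_h^{\star,\rho}$ implicit, and (iii) you flag the measurable-selection/attainment issue for the argmax, which the paper passes over. One caveat on your anticipated obstacle: in this paper's formalism a policy is by definition a sequence of maps $\pi_h:\cS\rightarrow\Delta(\cA)$, so $\Pi$ contains randomized, nonstationary, but only Markov policies; hence \Cref{prop:Robust Bellman equation} already applies to every $\pi\in\Pi$, and the separate ``one-sided Bellman inequality for history-dependent policies'' you propose, while a legitimate extra layer of rigor in a more general formulation, is not needed here.
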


The results in \Cref{prop:Robust Bellman equation,prop:Deterministic and stationary} have been used in existing analyses of DRMDPs without proof \citep{ma2022distributionally,blanchet2023double}. For completeness, we provide their proofs in \Cref{sec:proof of prop}. With these results, we can safely restrict the policy class $\Pi$ to the deterministic and stationary one. This  leads to the robust Bellman optimality equation:
{\small
\begin{align}
\label{eq:optimal robust bellman equation}
     Q_h^{\star, \rho}(s,a) &= r_h(s,a) +\inf_{P_h(\cdot|s,a) \in \cU_h^{\rho}(s,a;\bmu_h^0)}[\PP_h V_{h+1}^{\star, \rho}](s,a)\notag\\
    {\textstyle V_h^{\star, \rho}(s)}&={\textstyle \max_{a\in\cA}Q_h^{\star}(s,a)}.
\end{align}}%
\eqref{eq:optimal robust bellman equation} suggests that the optimal robust policy is greedy with respect to the optimal robust Q-function. Therefore, it suffices to estimate $Q_h^{*,\rho}$ to find $\pi^*$.

\subsection{DRMDPs with TV divergence}
In this work, we focus on the total variation (TV) distance as the probability divergence metric employed in defining the uncertainty set \eqref{eq:factor uncertainty set}. Given any two probability distributions $P$ and $Q$, the TV divergence, denoted by $D_{TV}(P\Vert Q)$, can be expressed as 
{\small
\begin{align}
\label{def:TV-divergence}
{\textstyle D_{TV}(P\Vert Q)=1/2\int_{\cS}|P(s)-Q(s)|ds}.
\end{align}}%
The optimization problem in \eqref{eq:robust bellman equation} has the following dual formulation under the TV uncertainty set.
\begin{proposition}
\label{prop:strong duality for TV}
    (Strong duality for TV \citep[Lemma 4]{shi2023curious}). Given any probability measure $\mu^0$ over $\cS$, a fixed uncertainty level $\rho$, the uncertainty set $ \cU^{\rho}(\mu^0) =\{\mu: \mu\in \Delta(\cS), D_{TV}(\mu||\mu^0)\leq \rho\}$, and any function $V:\cS \rightarrow [0,H]$, we obtain 
    {\small
    \begin{align}\label{eq:duality}
        &{\textstyle \inf_{\mu\in\cU^{\rho}(\mu^0)}\EE_{s\sim\mu}V(s)} = \max_{\alpha \in [V_{\min}, V_{\max}]}\big\{\EE_{s\sim \mu^0}[V(s)]_{\alpha} \notag\\
        &\qquad {\textstyle-\rho\big(\alpha - \min_{s'}[V(s')]_{\alpha}\big) \big\}},
    \end{align}}%
    where $[V(s)]_{\alpha}=\min\{V(s), \alpha\}$, $V_{\min}=\min_{s}V(s)$ and $V_{\max}=\max_{s}V(s)$. Notably, the range of $\alpha$ can be relaxed to $[0,H]$ without impacting the optimization. 
\end{proposition}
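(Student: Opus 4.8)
The plan is to prove the identity by Lagrangian (convex) duality, since for fixed $V$ the left-hand side is a linear program over the intersection of the probability simplex $\Delta(\cS)$ with the total-variation ball. First I would record that this feasible set is convex and compact and that, for $\rho>0$, the center $\mu^0$ is a Slater point (it satisfies $D_{TV}(\mu^0\|\mu^0)=0<\rho$), so strong duality holds and the infimum is attained; the boundary case $\rho=0$ is checked directly, since then both sides collapse to $\EE_{\mu^0}V$ (the right-hand side being maximized at $\alpha=V_{\max}$). I would keep the simplex constraint $\mu\in\Delta(\cS)$ inside the inner minimization and dualize only the TV constraint $\tfrac12\|\mu-\mu^0\|_1\le\rho$ with a multiplier $\lambda\ge0$, giving
\[
\inf_{\mu\in\cU^{\rho}(\mu^0)}\EE_\mu V \;=\; \sup_{\lambda\ge0}\Big\{\inf_{\mu\in\Delta(\cS)}\big[\EE_\mu V+\tfrac{\lambda}{2}\|\mu-\mu^0\|_1\big]-\lambda\rho\Big\}.
\]

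Next I would evaluate the inner infimum $h(\lambda):=\inf_{\mu\in\Delta(\cS)}[\EE_\mu V+\tfrac{\lambda}{2}\|\mu-\mu^0\|_1]$ in closed form. Writing $\nu=\mu-\mu^0$, the constraints become $\sum_s\nu(s)=0$ and $\nu(s)\ge-\mu^0(s)$, and the objective becomes $\EE_{\mu^0}V+\sum_s[V(s)\nu(s)+\tfrac{\lambda}{2}|\nu(s)|]$. I would dualize the equality constraint $\sum_s\nu(s)=0$ with a free multiplier $\alpha$, which decouples the problem across states into the univariate convex piecewise-linear programs $\min_{\nu(s)\ge-\mu^0(s)}\{(V(s)-\alpha)\nu(s)+\tfrac{\lambda}{2}|\nu(s)|\}$.

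Then I would solve each scalar subproblem by inspecting the two slopes $V(s)-\alpha\pm\tfrac{\lambda}{2}$ at the kink $\nu(s)=0$: the minimizer is $\nu(s)=0$ when $V(s)\le\alpha+\tfrac{\lambda}{2}$, and $\nu(s)=-\mu^0(s)$ (remove all mass from that state) when $V(s)>\alpha+\tfrac{\lambda}{2}$, while boundedness forces the constraint $\alpha\le V_{\min}+\tfrac{\lambda}{2}$. Summing the optimal values and setting $\gamma:=\alpha+\tfrac{\lambda}{2}$ collapses the per-state terms into $\EE_{\mu^0}[\min(V,\gamma)]$, so that $h(\lambda)=\max_{\gamma\le V_{\min}+\lambda}\EE_{\mu^0}[\min(V,\gamma)]$. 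Finally I would fold the two remaining maximizations together: for each candidate cap $\gamma$ the coefficient $-\lambda\rho$ is nonincreasing in $\lambda$, so the optimal multiplier is the smallest feasible value $\max(0,\gamma-V_{\min})$, and substituting it back yields $\sup_{\lambda\ge0}(h(\lambda)-\lambda\rho)=\max_\gamma\{\EE_{\mu^0}[\min(V,\gamma)]-\rho(\gamma-V_{\min})\}$. Monotonicity of the objective outside $[V_{\min},V_{\max}]$ (it is increasing below $V_{\min}$ and strictly decreasing above $V_{\max}$) restricts the maximizer to $\alpha:=\gamma\in[V_{\min},V_{\max}]$, where $V_{\min}=\min_{s'}[V(s')]_\alpha$, giving exactly the claimed dual; the same monotonicity shows the range may be relaxed to $[0,H]$ without changing the value.

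The main obstacle I anticipate is the bookkeeping in the inner problem: correctly handling the regimes of the scalar subproblems, tracking how the free multiplier $\alpha$ and the TV multiplier $\lambda$ combine into the single threshold $\gamma$, and verifying that the boundedness constraint $\alpha\le V_{\min}+\tfrac{\lambda}{2}$ is precisely what pins down the effective range of $\alpha$. A secondary technical point is justifying the interchange of $\inf$ and $\sup$ and the attainment of the infimum for a general, possibly non-finite, state space $\cS$, which requires the compactness and Slater argument above rather than a bare finite-dimensional LP-duality citation.
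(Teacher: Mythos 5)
The paper does not actually prove \Cref{prop:strong duality for TV}: it imports the statement wholesale from \citet[Lemma 4]{shi2023curious}, and no argument for it appears in the appendix. Your proposal is therefore a genuinely different, self-contained route, and its core computations are correct. Dualizing first the TV ball (multiplier $\lambda\ge 0$) and then the mass constraint $\sum_s\nu(s)=0$ (free multiplier $\alpha$) does decouple the inner problem into scalar piecewise-linear programs; your kink analysis is right (minimizer $\nu(s)=0$ when $V(s)\le\alpha+\lambda/2$, $\nu(s)=-\mu^0(s)$ otherwise, with unboundedness unless $\alpha\le V_{\min}+\lambda/2$); summing gives $\EE_{\mu^0}[\min(V,\gamma)]$ with $\gamma=\alpha+\lambda/2$ and the constraint $\gamma\le V_{\min}+\lambda$; and eliminating $\lambda$ at its smallest feasible value $\max\{0,\gamma-V_{\min}\}$ reproduces exactly the penalty $\rho\big(\alpha-\min_{s'}[V(s')]_{\alpha}\big)$, including the monotonicity argument that lets the range of $\alpha$ be restricted to $[V_{\min},V_{\max}]$ or relaxed to $[0,H]$. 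What your route buys over the paper's citation is an explanation of \emph{why} the dual has the truncation-at-$\alpha$ structure, which is precisely the feature the paper later exploits (\Cref{remark:linear regression}) to keep ridge-regression targets linear in $V$; what the citation buys is brevity and outsourced rigor.

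The one place your argument is thinner than it should be is the pair of inf--sup interchanges, which you flag but resolve only by "compactness and Slater." Two corrections there. First, if $\mu^0$ lies on the boundary of $\Delta(\cS)$ (some state has zero mass), it is not a relative-interior point, so it cannot itself serve as the Slater point; for finite $\cS$ you should either perturb toward an interior distribution, e.g.\ $(1-\epsilon)\mu^0+\epsilon\,\mathrm{Unif}$, or simply invoke LP strong duality, which needs no Slater condition at all. Second, for a general (non-finite) $\cS$, compactness of $\Delta(\cS)$ is not available without fixing a topology in which it holds, so the abstract duality claim does not go through as stated. A cleaner patch that works for arbitrary state spaces: the direction "$\geq$" is weak duality and is elementary, since for any feasible $\mu$ and any $\alpha$ one has $\EE_{\mu}V\ge\EE_{\mu}[V]_{\alpha}\ge\EE_{\mu^0}[V]_{\alpha}-\rho\big(\alpha-\min_{s'}[V(s')]_{\alpha}\big)$, using that $[V]_{\alpha}$ has oscillation at most $\alpha-\min_{s'}[V(s')]_{\alpha}$ and $D_{TV}(\mu\Vert\mu^0)\le\rho$; the direction "$\leq$" then follows by exhibiting a (near-)optimal measure that transfers mass $\rho$ from the region $\{V>\alpha^{\star}\}$ onto a (near-)minimizer of $V$, where the optimal $\alpha^{\star}$ satisfies $\mu^0(V>\alpha^{\star})\le\rho\le\mu^0(V\ge\alpha^{\star})$. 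With that replacement your proof is complete and fully rigorous at the generality of the statement.
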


\section{ROBUST LEAST SQUARE VALUE ITERATION WITH UCB EXPLORATION}\label{sec:linear representation of dual and the algorithm}

\subsection{Linear Representation of the Robust State-Action Value Function}
Recall the strong duality in \eqref{eq:duality}, we need to solve the minimization problem, $\min_{s'}[V(s')]_{\alpha}$, which is challenging when it is not convex with respect to $s'$ and computationally inefficient when  $\cS$ is large. To overcome this issue, we make the same fail-state assumption made in the function approximation setting \citep{panaganti2022robust} and show that it is compatible with the $d$-rectangular linear DRMDP.
\begin{assumption}
\label{assumption:fail-state}
    (Fail-state) The linear MDP has a `fail state' $s_f$, such that for all $ (h,a) \in [H]\times \cA$, $r_h(s_f, a)=0$, $P_h^0(s_f|s_f,a)=1$.
\end{assumption}

The existence of fail states is natural in many real-world applications such as the collapse of a robot in robotics \citep{panaganti2022robust}. As another example in the context of cancer treatments, patients could die, or the cancer may advance further, during the course of a finite-stage treatment process \citep{goldberg2012q, zhao2018constructing, Liu2023dtr}, both of which could be considered as fail states.

We show that \Cref{assumption:fail-state} is compatible with the linear MDP structure.
In particular, we show that we can extend the original $d$-rectangular linear DRMDP as follows. First, we define a new feature mapping $\tilde{\bphi}: \cS\times\cA\rightarrow \RR^{d+1}$ based on the original one:
{\small
\begin{align*}
    &{\textstyle \tilde{\bphi}(s_f,a)=[1, 0, \cdots, 0]^{\top}, \quad \forall a\in \cA},\\
    &{\textstyle \tilde{\bphi}(s,a)=\big[0, \bphi(s,a)^{\top}\big]^{\top}, \quad \forall (s,a)\in \cS/\{s_f\}\times\cA}.
\end{align*}}%
It is easy to verify that
{\small $\sum_{i=1}^{d+1}\tilde{\phi}_i(s,a)=1$, $\forall (s,a)\in\cS\times\cA$}, and $\tilde{\phi}_i(s,a) \geq 0$, $\forall i\in[d+1]$. Let {\small$\tilde{\btheta}_h = [0, \btheta_h^{\top}]^{\top}$}, and {\small$\tilde{\bmu}_h^0(\cdot)=[\delta_{s_f}(\cdot), \bmu_h^0(\cdot)^{\top}]^{\top}$}, where $\delta_{s_f}$ is the Dirac delta distribution with mass at $s_f$. We can show that the reward functions and transition kernels are still linear based on the new notations. Then we can define the same $d$-rectangular uncertainty set as in \Cref{sec:DRMDP}. For simplicity, we assume the fail-state assumption holds in the original linear MDP in this paper.

\begin{remark}
\label{remark:dual with fail state}
     Under \Cref{assumption:fail-state}, \Cref{prop:strong duality for TV} can be further simplified. For any function $V:\cS \rightarrow [0,H]$ with $\min_{s\in\cS}V(s)=V(s_f)=0$, we have $\inf_{\mu\in\cU^{\rho}(\mu^0)}\EE_{s\sim\mu}V(s) = \max_{\alpha \in [0,H]}\{\EE_{s\sim \mu^0}[V(s)]_\alpha - \rho\alpha \}$.
Then with the fail state $s_f$, for any $(\pi, h,a) \in \Pi\times [H]\times \cA$, we have $Q^{\pi, \rho}_h(s_f,a)=0$, and $V^{\pi,\rho}_h(s_f)=0$.
\end{remark}
Now we show that the robust Q-function $Q_h^{\pi, \rho}(\cdot, \cdot)$ is linear in the feature mapping $\bphi(\cdot,\cdot)$ for any policy $\pi$. 
\begin{proposition}
\label{prop:Linear form}
    Under \Cref{assumption:linear MDP,assumption:fail-state}, for any $(\pi,s,a,h)\in \Pi\times\cS \times \cA \times [H]$, the robust Q-function $Q_h^{\pi,\rho}(s,a)$ has a linear form as follows: %
    {\small
    \begin{align*}
        {\textstyle Q_h^{\pi,\rho}(s,a) = \la \bphi(s,a), \btheta_h + \bnu_{h}^{\pi,\rho} \ra \ind\{s\neq s_f\}},
    \end{align*}}%
    where {\small$\bnu_h^{\pi,\rho}=(\nu_{h,1}^{\pi, \rho},\ldots,\nu_{h,d}^{\pi, \rho})^{\top}$}, {\small$\nu_{h,i}^{\pi, \rho}=\max_{\alpha\in[0,H]}\{ \allowbreak z_{h,i}^{\pi}(\alpha)-\rho\alpha\}$}, and {\small$z_{h,i}^{\pi}(\alpha)=\EE^{\mu_{h,i}^0}[V_{h+1}^{\pi, \rho}(s')]_{\alpha}$}.
\end{proposition}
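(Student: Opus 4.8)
The plan is to prove the proposition by a single application of the robust Bellman equation from \Cref{prop:Robust Bellman equation}, followed by a decoupling of the worst-case expectation over the $d$-rectangular uncertainty set and a coordinate-wise application of the strong duality in \Cref{prop:strong duality for TV}. I would first dispose of the trivial case $s=s_f$: by the fail-state structure recorded in \Cref{remark:dual with fail state} we have $Q_h^{\pi,\rho}(s_f,a)=0$ for all $(\pi,h,a)$, which matches the right-hand side since the indicator $\ind\{s\neq s_f\}$ vanishes there. It then remains to treat $s\neq s_f$.

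For $s\neq s_f$, I start from
\[
Q_h^{\pi,\rho}(s,a)=r_h(s,a)+\inf_{P_h(\cdot|s,a)\in\cU_h^{\rho}(s,a;\bmu_h^0)}[\PP_h V_{h+1}^{\pi,\rho}](s,a),
\]
substitute the linear reward $r_h(s,a)=\la\bphi(s,a),\btheta_h\ra$ from \Cref{assumption:linear MDP}, and expand each feasible kernel as $P_h(\cdot|s,a)=\sum_{i=1}^d\phi_i(s,a)\mu_{h,i}(\cdot)$ with $\mu_{h,i}\in\cU_{h,i}^{\rho}(\mu_{h,i}^0)$. By linearity of expectation the inner term becomes $\sum_{i=1}^d\phi_i(s,a)\,\EE_{s'\sim\mu_{h,i}}[V_{h+1}^{\pi,\rho}(s')]$.

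The key step is the decoupling of the infimum. Because the factor uncertainty sets $\{\cU_{h,i}^{\rho}(\mu_{h,i}^0)\}_{i\in[d]}$ carry no shared constraints across $i$ and the weights $\phi_i(s,a)$ are nonnegative by \Cref{assumption:linear MDP}, minimizing a nonnegative-weighted sum over the product set equals minimizing each summand separately, so
\[
\inf_{P_h\in\cU_h^{\rho}(s,a;\bmu_h^0)}[\PP_h V_{h+1}^{\pi,\rho}](s,a)=\sum_{i=1}^d\phi_i(s,a)\inf_{\mu_{h,i}\in\cU_{h,i}^{\rho}(\mu_{h,i}^0)}\EE_{s'\sim\mu_{h,i}}[V_{h+1}^{\pi,\rho}(s')].
\]
I expect this exchange of $\inf$ and $\sum$ to be the main point requiring careful justification, as it is precisely where the $d$-rectangular and state-action-independent structure of the uncertainty set is used; this is exactly the property flagged before the statement.

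Finally, to each factor infimum I apply the simplified strong duality of \Cref{remark:dual with fail state}. Since the rewards are nonnegative, $V_{h+1}^{\pi,\rho}\geq 0$, and since $V_{h+1}^{\pi,\rho}(s_f)=0$, the function attains its minimum value $0$ at the fail state, so the hypothesis $\min_s V(s)=V(s_f)=0$ holds and
\[
\inf_{\mu_{h,i}\in\cU_{h,i}^{\rho}(\mu_{h,i}^0)}\EE_{s'\sim\mu_{h,i}}[V_{h+1}^{\pi,\rho}(s')]=\max_{\alpha\in[0,H]}\{z_{h,i}^{\pi}(\alpha)-\rho\alpha\}=\nu_{h,i}^{\pi,\rho}.
\]
Collecting terms gives $Q_h^{\pi,\rho}(s,a)=\la\bphi(s,a),\btheta_h\ra+\sum_{i=1}^d\phi_i(s,a)\nu_{h,i}^{\pi,\rho}=\la\bphi(s,a),\btheta_h+\bnu_h^{\pi,\rho}\ra$, the claimed form for $s\neq s_f$. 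Note that no induction over $h$ is needed, because the statement expresses $Q_h^{\pi,\rho}$ directly in terms of $V_{h+1}^{\pi,\rho}$; the only facts about the next layer invoked are its nonnegativity and its fail-state value $0$, both guaranteed by \Cref{assumption:fail-state} and \Cref{remark:dual with fail state}.
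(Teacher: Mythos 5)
Your proof is correct and follows essentially the same route as the paper's: the robust Bellman equation from \Cref{prop:Robust Bellman equation}, decoupling of the infimum over the product ($d$-rectangular) uncertainty set using nonnegativity of $\bphi(s,a)$ and the independence of the factor sets, and a coordinate-wise application of the fail-state-simplified TV duality (\Cref{remark:dual with fail state}), justified exactly as you do via $V_{h+1}^{\pi,\rho}\geq 0$ and $V_{h+1}^{\pi,\rho}(s_f)=0$. The only cosmetic difference is bookkeeping: the paper performs the same computation in the extended $(d+1)$-dimensional feature space $\tilde{\bphi}$ with a degenerate first factor pinned to $\delta_{s_f}$ and then collapses the extra coordinate to recover the indicator $\ind\{s\neq s_f\}$, whereas you dispose of $s=s_f$ as a separate trivial case and work directly in the original $d$-dimensional representation.
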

Therefore, with the known feature mapping $\bphi(\cdot, \cdot)$, it suffices to estimate the weight vectors $\{\bnu_h^{\pi, \rho}\}_{h=1}^{H}$ to recover the robust Q-functions. 
Based on \Cref{prop:Linear form}, we can iteratively perform backward induction to estimate the robust Q-functions.
Specifically, %
given any estimated robust Q-function at step $h+1$, $Q_{h+1}^k(s,a)$, and estimated robust value function  $V_{h+1}^k(s)=\max_{a\in \mathcal{A}} Q_{h+1}^k(s, a)$, the one step backward induction leads to the following linear term
{\small
\begin{align*}
    {\textstyle \big\langle \bphi(s,a), \btheta_h+\bnu_h^{\rho,k}\big\rangle \ind\{s\neq s_f\}},
\end{align*}}%
where $\nu_{h,i}^{\rho} := \max_{\alpha \in [0,H]}\{z_{h,i}(\alpha)-\rho\alpha\}$ and $z_{h,i}(\alpha) := \EE^{\mu_{h,i}^0}[V_{h+1}^k(s') ]_{\alpha}$, for any $i\in[d]$. %
According to the linear structure defined in \Cref{assumption:linear MDP} on the nominal transition kernel, $z_{h,i}(\alpha)$ is the parameter of the following linear formulation,  
{\small
\begin{align*}
{\textstyle \big[\PP_h^0\big[V_{h+1}^k\big]_{\alpha}\big](s,a) =\big\la \bphi(s,a),\bz_h(\alpha) \big\ra},
\end{align*}}%
which is an expectation with respect to the nominal transition kernel $P_h^0$. Therefore, we can collect trajectories $\{(s_h^{\tau}, a_h^{\tau}, s_{h+1}^{\tau})\}_{\tau=1}^{k-1}$ and estimate $z_{h,i}(\alpha)$ from samples.
In particular, we will solve the following ridge regression problem with regularizer $\lambda>0$,
{\small
\begin{align}
\label{eq:ridge regression}
{\textstyle \hat{\bz}_h(\alpha)} &{\textstyle=\argmin_{\bz \in \RR^d}\sum_{\tau=1}^{k-1} \big(\big[V_{h+1}^k(s_{h+1}^{\tau})\big]_{\alpha}} \notag\\
& \qquad {\textstyle- \bphi_h^{\tau \top}\bz\big)^2 + \lambda\Vert \bz \Vert_2^2},
\end{align}}%
with the close-form solution being 
{\small
\begin{align}\label{eq:closed form}
    {\textstyle \hat{\bz}_h(\alpha)= \big(\Lambda_h^k \big)^{-1} \big[
                    \sum_{\tau=1}^{k-1}\bphi_h^\tau[V_{h+1}^k(s_{h+1})]_{\alpha}
                    \big]},
\end{align}}%
where $\bphi_h^\tau$ is a shorthand notation for $\bphi(s_h^{\tau}, a_h^{\tau})$, and $\Lambda_h^k=\sum_{\tau=1}^{k-1}\bphi_h^{\tau}(\bphi_h^{\tau})^{\top}+ \lambda\Ib$.
 We then approximate $\bnu_h^{\rho,k}$ by $\hat{\nu}_{h,i}^{\rho, k} = \max_{\alpha \in [0,H]}\{\hat{z}_{h,i} (\alpha)-\rho\alpha\}, i\in[d]$, and obtain the estimated robust Q-function at step $h$: 
{\small
\begin{align}\label{eq:ridge_est_Q}
{\textstyle Q_h^k(s,a) = \big\langle \bphi(s,a), \btheta_h+\hat{\bnu}_h^{\rho,k}\big\rangle \ind\{s\neq s_f\}}.
\end{align}}%
\begin{remark}\label{remark:linear regression} 
Thanks to the linear representation of the robust Q-function in terms of $\bphi(\cdot,\cdot)$ (\Cref{prop:Linear form}) and the linear dependence on the value function $V(s)$ in strong duality (\Cref{prop:strong duality for TV}), we can apply ridge regression with the estimated value function {\small$V_{h+1}^k(s')$} as the target. 

In comparison, the strong duality under KL uncertainty set \citep{shi2022distributionally} is  {\small$\inf_{\mu\in\cU^{\rho}(\mu^0)}\EE_{s\sim\mu}V(s) = \max_{\alpha\in[0, H/\rho]}\{-\alpha\log\EE_{s\sim\mu^0}[e^{-V(s)/\alpha}]-\alpha\rho\}$}.
Since the expectation is nonlinear in the value function $V(s)$, we have to apply ridge regression with $\exp(-V(s)/\alpha)$ as the target, and take logarithm back to the approximator (see (8) - (10) of \cite{ma2022distributionally} for details).
This logarithm operation could amplify the approximation error by $\exp(H)$, 
 which leads to the $O(\exp(H/\underline{\beta}))$ term in Theorem 4.1 of \cite{ma2022distributionally}. This amplified error could accumulate through the backward induction and ultimately lead to an $O(\exp(H^2))$ term in the regret bound of online DRMDPs.
Similar argument applies to the Chi-square divergence based uncertainty set, with the strong duality 
 {\small$ \inf_{\mu\in\cU^{\rho}(\mu^0)}\EE_{s\sim\mu}V(s) =\max_{\alpha\in[0, H]}\{\EE_{s\sim \mu^0}[V(s)]_{\alpha}-\sqrt{\rho\text{Var}_{s\sim \mu^0}([V(s)]_{\alpha})}\}$} \citep{shi2023curious}.
The non-linearity could lead to $O(H)$ error amplification in the regression approximation and $O(H^H)$ error accumulation in the regret bound in online DRMDPs. This justifies our choice of TV distance in the definition of the $d$-rectangular uncertainty set.

\end{remark}

\subsection{UCB Exploration in DRMDP}
In online DRMDPs, the ridge estimator in \eqref{eq:ridge_est_Q} is not sufficient for finding the optimal robust policy due to being greedy on past data that provides only partial information of the environment. Hence, we propose to incorporate a robust Upper Confidence Bonus (UCB) in the Q-function estimation to explore the source environment to avoid such myopic behavior. 

We present our algorithm DR-LSVI-UCB in \Cref{alg:DR-LSVI-UCB}. In each episode, DR-LSVI-UCB consists of two phases. In Phase 1 (Lines \ref{algline:start}-\ref{algline:end of the first loop}), it updates the robust Q-function estimation through backward induction. Specifically, the parameters used to form the robust Q-function estimation are updated by first solving ridge regressions according to \eqref{eq:ridge regression} and then solving optimization problems derived from \Cref{prop:strong duality for TV}. Next, 
a robust UCB is added to the Q-function estimation, whose exact form will be discussed in \Cref{remark:alg-8&9}. Finally, we truncate the robust Q-function at the fail state, by setting $Q_h^{k,\rho}(s_f,a)=0$ for any $a\in\cA$. In Phase 2 (Lines \ref{algline:begin of the second loop}-\ref{algline:end of the second loop}), it executes the greedy policy associated with the estimated robust Q-function to explore the source domain, and collects a new trajectory.

\begin{remark}
    \label{remark:alg-8&9} 
    In Line \ref{algline:update_nu} of \Cref{alg:DR-LSVI-UCB}, we denote {\small$\alpha_i^{\star} = \argmax_{\alpha\in[0,H]}\{z_{h,i}^k(\alpha)-\rho\alpha\}$} for any $i\in[d]$. Then we compute 
    {\small$\nu_{h,i}^{\rho,k}=z_{h,i}^k(\alpha_i^{\star})-\rho\alpha_i^{\star}$}, where {\small$z_{h,i}^k(\alpha_i^{\star})$} is the $i$-th element of vector $\bz_{h}^k(\alpha_i^{\star})$. This immediately implies that we have to solve $d$ distinct ridge regressions in Line \ref{algline:update_z} to obtain different coordinates of $\bnu_h^{\rho,k}$. This further leads to our design of the robust UCB term in Line \ref{algline:UCB},  {\small$\beta\sum_{i=1}^d\phi_i(s, a)[\mathbf{1}_i^\top(\Lambda_h^k)^{-1}\mathbf{1}_i]^{1/2}$}, which consists of $d$ different upper confidence bonuses. 
    This design is motivated from the optimism principle used in standard MDPs \citep{azar2017minimax, jin2020provably}, where a bonus term proportional to the approximation error is added to guide exploration.
    A distinctive feature of the robust UCB term in \Cref{alg:DR-LSVI-UCB} is that the approximation error arises from $d$ ridge regressions, due to the $d$-rectangular uncertainty set.

\end{remark}

\begin{remark}
\label{remark:heterogeneous-rho}
In practice, \Cref{alg:DR-LSVI-UCB} extends to broader scenarios, where uncertainty level $\rho$ varies across different uncertainty sets {\small$\{\cU_{h,i}^{\rho}(\mu_{h,i}^0)\}_{i, h=1}^{d,H}$}. We denote {\small$\bm{\rho}=\{\rho_{h,i}\}_{i, h=1}^{d,H}$}, where $\rho_{h,i}$ is the uncertainty level for the $i$-th factor uncertainty set at step $h$. To generalize \Cref{alg:DR-LSVI-UCB}, we simply replace $\rho$ in Line \ref{algline:update_nu} with $\rho_{h,i}$. This updated algorithm handles varied uncertainty levels with $\bm{\rho}$ chosen to satisfy various objectives. Importantly, due to the bounded range of \intextmath{$\{\rho_{h,i}\}_{i, h=1}^{d,H}$} in $[0,1]$ and the independence of factor uncertainty sets, heterogeneity in uncertainty level does not impact our analysis. Therefore, the modified algorithm maintains the average suboptimality bound of the original algorithm, as depicted in \Cref{sec:Algorithm and theoretical analysis}.
\end{remark}

\begin{algorithm}[ht]
    \caption{DR-LSVI-UCB}\label{alg:DR-LSVI-UCB}
    \begin{algorithmic}[1]
    \REQUIRE{
        Parameters $\beta>0$ and $\lambda>0$
        }
        \FOR {episode $k=1, \cdots, K$}{
            \STATE Receive the initial state $s_1^k$. \label{algline:start}
            \FOR {stage $h=H, \cdots, 1$}{
                \STATE {\small$\Lambda_h^k \leftarrow \sum_{\tau=1}^{k-1}\bphi(s_h^{\tau}, a_h^{\tau})\bphi(s_h^{\tau}, a_h^{\tau})^\top + \lambda \bI$}
                \IF{$h=H$}{
                    \STATE $\bnu_h^{\rho, k} \leftarrow 0$
                }
                \ELSE{
                    \STATE Update $z_h^k(\alpha)$ according to \eqref{eq:closed form}.
                    \label{algline:update_z}

                    \STATE $\nu_{h,i}^{\rho, k} \leftarrow \max_{\alpha\in [0,H]}\{z_{h,i}^k(\alpha)-\rho\alpha\}, i \in [d]$ \label{algline:update_nu}
                }
                \ENDIF
                \STATE {\small$\Gamma_{h}^k(s, a) \leftarrow \beta\sum_{i=1}^d\phi_i(s, a)[\mathbf{1}_i^\top(\Lambda_h^k)^{-1}\mathbf{1}_i]^{1/2}$ }
                \label{algline:UCB}
                \STATE {\small$Q_{h}^{k, \rho}(s, a) \leftarrow \min \{\bphi(s, a)^\top(\btheta_h+\bnu_h^{\rho, k}) + \Gamma_{h}^k(s, a), H-h+1 \}_+\ind\{s\neq s_f\}$}
                \label{algline:update Q}
                \STATE {\small$\pi_{h}^{k} (s)\leftarrow \argmax_{a\in \mathcal{A}}Q_{h}^{k, \rho}(s, a)$}
                \label{algline:update policy}
            }
            \ENDFOR \label{algline:end of the first loop}
            \FOR{stage $h=1, \cdots, H$}\label{algline:begin of the second loop}
            {
                \STATE Take the action $a_h^k \leftarrow \pi_{h}^{k}(s_h^k)$, and receive the next state $s_{h+1}^k$.
            }
            \ENDFOR \label{algline:end of the second loop}
        }
        \ENDFOR
    \end{algorithmic}
\end{algorithm}

\section{MAIN THEORETICAL RESULTS}
\label{sec:Algorithm and theoretical analysis}

Now we present our main result for \Cref{alg:DR-LSVI-UCB}.%
\begin{theorem}
\label{th:DRLSVIUCB}
Under \Cref{assumption:linear MDP,assumption:fail-state}, there exists an absolute constant $c > 0$ such that, for any fixed $p \in (0,1)$, if we set $\lambda=1$ and $\beta = c\cdot dH\sqrt{\iota}$ with $\iota=\log(3dKH/p)$ in \Cref{alg:DR-LSVI-UCB}, then with probability at least $1-p$ the average suboptimality of DR-LSVI-UCB satisfies
{\small
\begin{align}
\label{eq:AveSubopt-th}
   &{\textstyle \text{AveSubopt}(K) \leq \sqrt{{2H^3\log(3/p)}/{K}}} \notag \\
    & \qquad +{2\beta}/{K}\underbrace{{\textstyle\sum_{k=1}^K\sum_{h=1}^H\sum_{i=1}^d\phi_{h,i}^k\sqrt{\mathbf{1}_i^\top (\Lambda_h^k)^{-1}\mathbf{1}_i}}}_{d\text{-rectangular estimation error}},
\end{align}}%
where $\phi_{h,i}^k$ is the $i$-th element of $\bphi_{h}^k = \bphi(s_h^k, a_h^k)$ and $\mathbf{1}_i$ is the one hot vector with its $i$-th entry being 1.
\end{theorem}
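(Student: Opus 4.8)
The plan is to follow the optimism-based template of LSVI-UCB, adapted to the $d$-rectangular robust setting. The analysis rests on three ingredients: (i) a high-probability \emph{good event} on which every coordinate-wise ridge estimate is close to its population counterpart; (ii) \emph{optimism}, i.e.\ $Q_h^{k,\rho}(s,a)\ge Q_h^{\star,\rho}(s,a)$ on the good event; and (iii) a recursive \emph{suboptimality decomposition} that telescopes the per-step error into the $d$-rectangular bonus sum plus a martingale term. By \Cref{prop:Deterministic and stationary} it suffices to compete against a deterministic stationary optimum, and by \Cref{prop:Linear form} every robust $Q$-function (optimal, under $\pi^k$, and estimated) is linear in $\bphi$ with weight $\btheta_h+\bnu$, whose $i$-th coordinate is $\max_{\alpha\in[0,H]}\{\EE^{\mu_{h,i}^0}[V_{h+1}]_\alpha-\rho\alpha\}$ for the relevant value function. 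This reduces everything to controlling the $d$ scalars $z_{h,i}(\alpha)=\EE^{\mu_{h,i}^0}[V_{h+1}]_\alpha$, one per factor uncertainty set.

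First I would establish the good event. Writing $\bar{\bz}_h^k(\alpha)$ for the population ridge target (with $i$-th coordinate $\EE^{\mu_{h,i}^0}[V_{h+1}^k]_\alpha$), the residual is $\bz_h^k(\alpha)-\bar{\bz}_h^k(\alpha)=(\Lambda_h^k)^{-1}\sum_{\tau}\bphi_h^\tau\epsilon_\tau-\lambda(\Lambda_h^k)^{-1}\bar{\bz}_h^k(\alpha)$, where $\epsilon_\tau=[V_{h+1}^k(s_{h+1}^\tau)]_\alpha-\la\bphi_h^\tau,\bar{\bz}_h^k(\alpha)\ra$ is a martingale difference with respect to the nominal kernel $P_h^0$. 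Projecting onto $\mathbf{1}_i$ and using Cauchy--Schwarz in the $(\Lambda_h^k)^{-1}$ norm produces the factor $[\mathbf{1}_i^\top(\Lambda_h^k)^{-1}\mathbf{1}_i]^{1/2}$ that matches the bonus $\Gamma_h^k$; a self-normalized martingale bound controls $\sum_\tau\bphi_h^\tau\epsilon_\tau$, and the regularization term is lower order since each coordinate of $\bar{\bz}_h^k(\alpha)$ lies in $[0,H]$, so $\Vert\bar{\bz}_h^k(\alpha)\Vert_2\le H\sqrt d$. The subtlety, and the main obstacle, is that $V_{h+1}^k$ is data-dependent, so the bound must hold \emph{uniformly} over the random value-function class and over $\alpha\in[0,H]$. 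I would handle this by a covering argument: the candidate $V_{h+1}$ are parameterized by $(\btheta_h+\bnu,\beta,\Lambda)$ together with the one-dimensional dual variable $\alpha$; bounding the $\epsilon$-covering number (as in the linear-MDP analysis of \cite{jin2020provably}, but with the extra $\alpha$ and the coordinate projections onto $\mathbf{1}_i$) and taking a union bound yields the choice $\beta=c\,dH\sqrt\iota$ with $\iota=\log(3dKH/p)$.

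Next I would prove optimism by backward induction on $h$ on the good event. Assuming $V_{h+1}^k\ge V_{h+1}^{\star,\rho}$, fix a coordinate $i$ and let $\alpha^\star$ maximize the definition of $\nu_{h,i}^{\star,\rho}$. Since the algorithm maximizes over $\alpha$ in Line~\ref{algline:UCB}'s precursor, $\nu_{h,i}^{\rho,k}\ge z_{h,i}^k(\alpha^\star)-\rho\alpha^\star$, and subtracting $\nu_{h,i}^{\star,\rho}=\EE^{\mu_{h,i}^0}[V_{h+1}^{\star,\rho}]_{\alpha^\star}-\rho\alpha^\star$ splits the gap into the regression error (at least $-\beta[\mathbf{1}_i^\top(\Lambda_h^k)^{-1}\mathbf{1}_i]^{1/2}$ on the good event) plus $\EE^{\mu_{h,i}^0}\{[V_{h+1}^k]_{\alpha^\star}-[V_{h+1}^{\star,\rho}]_{\alpha^\star}\}\ge0$, the latter by the inductive hypothesis and monotonicity of truncation. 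Weighting by $\phi_i\ge0$ and summing over $i$, the bonus $\Gamma_h^k$ exactly absorbs the negative regression error, so $\bphi^\top(\btheta_h+\bnu_h^{\rho,k})+\Gamma_h^k\ge\bphi^\top(\btheta_h+\bnu_h^{\star,\rho})=Q_h^{\star,\rho}$; the truncations in Line~\ref{algline:update Q} and the fail-state convention are order-preserving and respect $Q_h^{\star,\rho}\in[0,H-h+1]$, so optimism survives them.

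Finally I would decompose the suboptimality. Optimism gives $V_1^{\star,\rho}(s_1^k)-V_1^{\pi^k,\rho}(s_1^k)\le V_1^{k}(s_1^k)-V_1^{\pi^k,\rho}(s_1^k)=:\delta_1^k$. For the greedy action $a_h^k$, the same maximizer-and-truncation manipulation, now comparing the estimate against $\pi^k$ rather than the optimum (using $\nu_{h,i}^{\pi^k,\rho}\ge z_{h,i}^{\pi^k}(\hat\alpha_i)-\rho\hat\alpha_i$ at the estimate's maximizer $\hat\alpha_i$), yields $\delta_h^k\le 2\Gamma_h^k(s_h^k,a_h^k)+[\PP_h^0(V_{h+1}^{k}-V_{h+1}^{\pi^k,\rho})](s_h^k,a_h^k)$, where the nominal operator reappears through $\sum_i\phi_i\EE^{\mu_{h,i}^0}[\cdot]=[\PP_h^0\cdot]$. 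Writing the conditional expectation as the realized next-state value plus a martingale difference $\xi_{h+1}^k$ (legitimate because in Phase~2 the agent samples $s_{h+1}^k\sim P_h^0(\cdot\,|\,s_h^k,a_h^k)$), the recursion unrolls to $\delta_1^k\le\sum_{h=1}^H 2\Gamma_h^k(s_h^k,a_h^k)+\sum_{h=1}^H\xi_{h+1}^k$ with $\delta_{H+1}^k=0$. Averaging over $k$, the bonus terms give precisely $\tfrac{2\beta}{K}\sum_{k,h,i}\phi_{h,i}^k\sqrt{\mathbf{1}_i^\top(\Lambda_h^k)^{-1}\mathbf{1}_i}$, while the $KH$ martingale differences, each bounded by $H$, are controlled by Azuma--Hoeffding as $\sqrt{2H^3\log(3/p)/K}$. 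Allocating probability $p/3$ to this step and $2p/3$ to the good event delivers \eqref{eq:AveSubopt-th} with probability at least $1-p$.
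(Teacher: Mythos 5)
Your proposal is correct and follows essentially the same route as the paper's proof: a self-normalized concentration event uniform over the value-function class and the dual variable $\alpha$ (the paper's \Cref{lemma:Concentration}, with covering numbers and the weight-norm bound), optimism by backward induction (\Cref{lemma:UCB}), and a recursion that converts the difference of robust inf-expectations into a nominal $\PP_h^0$-expectation via the dual-maximizer comparison, monotone truncation, and $V_{h+1}^{k,\rho}\ge V_{h+1}^{\pi^k,\rho}$, then telescopes and applies Azuma--Hoeffding (\Cref{lemma:Recursive Formula} and \Cref{sec:proof of th}). The only cosmetic difference is that the paper packages your inline coordinate-wise arguments for both optimism and the recursion into a single two-sided bounded-difference lemma (\Cref{lemma:Bounded difference}) valid for any policy $\pi$, which does not change the substance of the argument.
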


The %
$d\text{-rectangular estimation error}$ in \eqref{eq:AveSubopt-th}
resembles the regression error  
{\small $\sum_{k=1}^K\sum_{h=1}^H\sqrt{(\bphi_h^k)^{\top}(\Lambda_h^k)^{-1}\bphi_h^k}$} in the standard episodic linear MDP literature \citep{jin2020provably, he2021logarithmic, he2023nearly}. However, it cannot be easily bounded by the elliptical potential lemma \cite[Lemma 11]{abbasi2011improved}, as its summands are not quadratic terms $\Vert\phi_h^k\Vert_{(\Lambda_h^k)^{-1}}$ but weighted sum of diagonal elements of $(\Lambda_{h}^k)^{-1}$, i.e.,  {\small $\sum_{i=1}^d\phi_{h,i}^k[(\Lambda_{h}^k)^{-1}]_{ii}^{1/2}$}. 
As shown in \Cref{remark:alg-8&9}, this term primarily originates from the necessity to solve $d$ distinct ridge regressions at each episode $k$ and step $h$, due to the structure of the $d$-rectangular uncertainty set. 
This represents a unique challenge in DRMDPs analysis with linear function approximation. Similar terms also appear in the proof of Theorem 4.1 in \citet{ma2022distributionally} and Theorem 6.3 in \citet{blanchet2023double}, which share our setting. However, their final results do not explicitly showcase this due to strong coverage assumptions on offline dataset, which may not hold in practice and are inapplicable to the off-dynamics learning setting in our paper, which requires active and incremental data collection via interaction with the source environment.

In the following, we will instantiate the average suboptimality bound in \Cref{th:DRLSVIUCB} on different examples.  
We start with the tabular MDP, where the number of states and actions are finite. We set dimension $d=|\cS|\times|\cA|$ and the feature mapping $\bphi(s,a)=\be_{(s,a)}$ as the canonical basis in $\RR^d$. Then the $d$-rectangular assumption degenerates to the $(s,a)$-rectangular assumption \citep{goyal2023robust}.
It turns out that with this specific structure of feature mapping $\bphi(s,a)=\be_{(s,a)}$, we can bound the $d$-rectangular estimation error without further assumption. %

\begin{corollary}
\label{corollary:DRLSVIUCB-tabular}
    Under the setting of tabular MDP with $|\cS| = S$ and $|\cA|=A$, there exists an absolute constant $c > 0$ such that, for any fixed $p \in (0,1)$, if we set $\lambda$ and $\beta$ in \Cref{alg:DR-LSVI-UCB} as in \Cref{th:DRLSVIUCB}, then with probability at least $1-p$, the average suboptimality of DR-LSVI-UCB is at most \intextmath{$\tilde{\cO}(\sqrt{H^4S^3A^3/K)}$}.
\end{corollary}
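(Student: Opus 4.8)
The plan is to start from the average suboptimality bound in \Cref{th:DRLSVIUCB} and show that, in the tabular instantiation with $\bphi(s,a)=\be_{(s,a)}$, the $d$-rectangular estimation error collapses into a familiar visit-count sum that can be controlled without any coverage assumption. The crucial observation is that when the feature map is the canonical basis, each outer product $\bphi(s_h^\tau,a_h^\tau)\bphi(s_h^\tau,a_h^\tau)^\top$ is diagonal with a single nonzero entry, so the Gram matrix $\Lambda_h^k=\sum_{\tau=1}^{k-1}\bphi_h^\tau(\bphi_h^\tau)^\top+\lambda\Ib$ is itself diagonal. Writing $N_h^k(s,a)=\sum_{\tau=1}^{k-1}\ind\{(s_h^\tau,a_h^\tau)=(s,a)\}$ for the number of visits to $(s,a)$ at step $h$ before episode $k$, the $(s,a)$-th diagonal entry of $\Lambda_h^k$ is exactly $N_h^k(s,a)+\lambda$, whence $\mathbf{1}_i^\top(\Lambda_h^k)^{-1}\mathbf{1}_i=1/(N_h^k(i)+\lambda)$ for the coordinate $i$ indexing the pair $(s,a)$.

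Next I would exploit that $\phi_{h,i}^k=\ind\{i=(s_h^k,a_h^k)\}$ is again one-hot, so the inner sum over $i\in[d]$ in the $d$-rectangular estimation error leaves a single surviving term, giving
\begin{align*}
\sum_{i=1}^d\phi_{h,i}^k\sqrt{\mathbf{1}_i^\top(\Lambda_h^k)^{-1}\mathbf{1}_i}=\frac{1}{\sqrt{N_h^k(s_h^k,a_h^k)+\lambda}}.
\end{align*}
Thus the entire estimation error reduces to $\sum_{k=1}^K\sum_{h=1}^H 1/\sqrt{N_h^k(s_h^k,a_h^k)+\lambda}$. For a fixed step $h$ this is a standard pigeonhole sum: grouping episodes by the visited pair and using that each visit to a fixed $(s,a)$ increments the count by one, the $m$-th visit contributes $1/\sqrt{m}$ when $\lambda=1$, so I would invoke $\sum_{m=1}^{n}1/\sqrt{m}\le 2\sqrt{n}$ with $n_h(s,a)$ the total number of visits to $(s,a)$ at step $h$ over all $K$ episodes. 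Summing over pairs and applying Cauchy--Schwarz together with the counting identity $\sum_{(s,a)}n_h(s,a)=K$ then yields $\sum_{k=1}^K 1/\sqrt{N_h^k(s_h^k,a_h^k)+1}\le 2\sqrt{SAK}$ for each $h$.

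Summing over the $H$ steps bounds the $d$-rectangular estimation error by $2H\sqrt{SAK}$. Substituting this, together with $d=SA$ and $\beta=c\,dH\sqrt{\iota}=c\,SAH\sqrt{\iota}$, into \eqref{eq:AveSubopt-th}, the second term becomes of order $(SA)^{3/2}H^2\sqrt{\iota}/\sqrt{K}=\tilde{O}(\sqrt{H^4S^3A^3/K})$, which dominates the first $O(\sqrt{H^3/K})$ term and delivers the claimed rate. The step requiring the most care is the first one: recognizing that the canonical-basis feature map makes $\Lambda_h^k$ diagonal is precisely what allows the weighted diagonal sum $\sum_i\phi_{h,i}^k[(\Lambda_h^k)^{-1}]_{ii}^{1/2}$ — which in general cannot be handled by the elliptical potential lemma, as noted after \Cref{th:DRLSVIUCB} — to collapse to a scalar visit-count term amenable to the classical counting argument. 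Once the diagonal structure is in hand, the remaining calculations are routine.
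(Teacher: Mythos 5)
Your proposal is correct and follows essentially the same route as the paper's proof: the canonical-basis features make $\Lambda_h^k$ diagonal with entries $N_h^k(s,a)+\lambda$, the $d$-rectangular estimation error collapses to $\sum_{k,h}1/\sqrt{N_h^k(s_h^k,a_h^k)+1}$, which is handled by grouping visits per pair via $\sum_{m=1}^{n}1/\sqrt{m}\le 2\sqrt{n}$, then Cauchy--Schwarz over the $SA$ pairs with $\sum_{(s,a)}n_h(s,a)=K$, and finally substituting $\beta=c\,SAH\sqrt{\iota}$ to get $\tilde{O}(\sqrt{H^4S^3A^3/K})$. Your indexing of the $m$-th visit (so the summand with $\lambda=1$ is exactly $1/\sqrt{m}$) is in fact marginally cleaner than the paper's intermediate step $1/\sqrt{N+1}\le 1/\sqrt{N}$, which is ill-defined at first visits, but the argument is the same.
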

Note that $d=SA$ in the tabular setting. Our result in \Cref{corollary:DRLSVIUCB-tabular} aligns with the average regret bound \intextmath{$\tilde{O}(\sqrt{H^4d^3/K})$} of LSVI-UCB in standard MDP, which can be derived by dividing the cumulative regret bound in Theorem 3.1 of \cite{jin2020provably} by $K$. 
In addition, \citet{dong2022online} also studied the online DRMDP problem under the $(s,a)$-rectangular assumption and proposed an algorithm with an average suboptimality bound of \intextmath{$\tilde{O}(\sqrt{H^4S^2A/K})$}, improving our result by a factor of $\sqrt{S}A$. However, their algorithm is model-based and only designed for $(s,a)$-rectangular robust tabular MDPs, which is not extendable to the function approximation setting. In contrast, our DR-LSVI-UCB algorithm is model-free and amenable to function approximation. Moreover, \algname\ is designed for the more general $d$-rectangular linear DRMDPs, covering a broader scope than solely the $(s,a)$-rectangular robust tabular MDPs.

Next, we consider the general $d$-rectangular linear DRMDP setting. Under an assumption on the inherent structure of linear MDP, we have the following average suboptimality bound.
\begin{corollary}
\label{corollary:DRLSVIUCB}
    For all $(\pi, h) \in \Pi \times [H]$, assume that
\begin{align}
    \label{eq:Feature exploration}
    {\textstyle \EE_{\pi}[\bphi(s_h,a_h)\bphi(s_h, a_h)^\top ] \geq \alpha I},
\end{align}
where $\alpha>0$. Then there exists an absolute constant $c > 0$ such that, for any fixed $p \in (0,1)$, if we set $\lambda$ and $\beta$ in \Cref{alg:DR-LSVI-UCB} as in \Cref{th:DRLSVIUCB}, then with probability at least $1-p$ the average suboptimality of DR-LSVI-UCB is at most \intextmath{$\tilde{\cO}(\sqrt{d^2H^4/(\alpha^2K)})$}.
\end{corollary}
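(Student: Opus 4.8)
The plan is to start from the bound in \Cref{th:DRLSVIUCB} and control the $d$-rectangular estimation error $\mathcal{E}:=\sum_{k=1}^K\sum_{h=1}^H\sum_{i=1}^d\phi_{h,i}^k\sqrt{\mathbf{1}_i^\top(\Lambda_h^k)^{-1}\mathbf{1}_i}$ using the feature exploration condition \eqref{eq:Feature exploration}. The leading term $\sqrt{2H^3\log(3/p)/K}$ is already $\tilde{\cO}(\sqrt{H^3/K})$ and thus dominated by the target rate, so the entire task reduces to showing $\mathcal{E}=\tilde{\cO}(H\sqrt{K}/\alpha)$; plugging in $\beta=c\,dH\sqrt{\iota}$ then gives $(2\beta/K)\,\mathcal{E}=\tilde{\cO}(dH^2/(\alpha\sqrt{K}))=\tilde{\cO}(\sqrt{d^2H^4/(\alpha^2K)})$, as claimed.

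The core step is a lower bound on the smallest eigenvalue of the Gram matrix $\Lambda_h^k=\sum_{\tau=1}^{k-1}\bphi_h^\tau(\bphi_h^\tau)^\top+\lambda I$. The assumption \eqref{eq:Feature exploration}, stated uniformly over all policies, guarantees that conditioned on the history $\mathcal{F}_{\tau-1}$ before episode $\tau$ (which fixes the greedy policy $\pi^\tau$), $\EE[\bphi_h^\tau(\bphi_h^\tau)^\top\mid\mathcal{F}_{\tau-1}]=\EE_{\pi^\tau}[\bphi(s_h,a_h)\bphi(s_h,a_h)^\top]\succeq\alpha I$. Since each summand is PSD with operator norm at most $1$ (features are nonnegative and sum to one), I would apply a matrix Freedman/Azuma inequality to the martingale difference sequence $\bphi_h^\tau(\bphi_h^\tau)^\top-\EE[\bphi_h^\tau(\bphi_h^\tau)^\top\mid\mathcal{F}_{\tau-1}]$. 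After a union bound over $(h,k)$, this yields, with probability at least $1-p$, $\lambda_{\min}(\Lambda_h^k)\ge(k-1)\alpha-C\sqrt{(k-1)\iota}+\lambda$, so there is a burn-in threshold $K_0=\tilde{\Theta}(1/\alpha^2)$ with $\lambda_{\min}(\Lambda_h^k)\ge(k-1)\alpha/2$ for all $k\ge K_0$.

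Given this eigenvalue bound, I bound the diagonal entries of the inverse by $\mathbf{1}_i^\top(\Lambda_h^k)^{-1}\mathbf{1}_i\le\lambda_{\min}(\Lambda_h^k)^{-1}$, and then use the simplex structure $\sum_{i=1}^d\phi_{h,i}^k=1$ from \Cref{assumption:linear MDP} to collapse the inner sum: $\sum_{i=1}^d\phi_{h,i}^k\sqrt{\mathbf{1}_i^\top(\Lambda_h^k)^{-1}\mathbf{1}_i}\le\sqrt{\lambda_{\min}(\Lambda_h^k)^{-1}}$. For $k<K_0$ I use the trivial bound $\Lambda_h^k\succeq\lambda I=I$, so each such term is at most $1$; for $k\ge K_0$ the bound is $\sqrt{2/((k-1)\alpha)}$. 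Summing over $k$ and $h$,
\begin{align*}
    \mathcal{E}\le HK_0+H\sqrt{2/\alpha}\sum_{k=K_0}^{K}\frac{1}{\sqrt{k-1}}=\tilde{\cO}(H/\alpha^2)+\tilde{\cO}(H\sqrt{K/\alpha}),
\end{align*}
where the harmonic-type sum satisfies $\sum_{k}(k-1)^{-1/2}\le2\sqrt{K}$. For $K\gtrsim1/\alpha^2$ and $\alpha\le1$, both terms are dominated by $\tilde{\cO}(H\sqrt{K}/\alpha)$, which after multiplying by $2\beta/K$ gives the claimed rate.

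The main obstacle is the second paragraph: the features $\bphi_h^\tau$ are generated by the data-dependent greedy policies $\pi^\tau$, so $\Lambda_h^k$ is a sum of dependent random matrices and one cannot invoke an i.i.d.\ matrix Chernoff bound directly. The resolution is the combination of martingale matrix concentration with the \emph{uniform-over-all-policies} form of \eqref{eq:Feature exploration}, which is exactly what makes the conditional second-moment lower bound $\succeq\alpha I$ hold step by step no matter how the algorithm explores. A secondary technical point is absorbing the $k<K_0$ burn-in and the additive concentration slack into $\tilde{\cO}(\cdot)$ while verifying that the union bound over all $(h,k,i)$ only inflates $\iota$ by logarithmic factors.
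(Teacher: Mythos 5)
Your proposal is correct and takes essentially the same route as the paper's proof: both collapse the inner sum via $\sum_{i=1}^d\phi_{h,i}^k=1$ together with $\mathbf{1}_i^\top(\Lambda_h^k)^{-1}\mathbf{1}_i\le 1/\lambda_{\min}(\Lambda_h^k)$, and both obtain the eigenvalue lower bound $\lambda_{\min}(\Lambda_h^k)\ge \alpha(k-1)+\lambda-O(\sqrt{k\log(dHK/p)})$ by applying a matrix Azuma inequality (the paper's \Cref{lemma:Matrix Azuma}) to the martingale difference sequence $\bphi_h^\tau(\bphi_h^\tau)^\top-\EE_{\pi^\tau}[\bphi_h^\tau(\bphi_h^\tau)^\top]$, using the uniform-over-policies condition \eqref{eq:Feature exploration} and a union bound over $(k,h)$. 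The only divergence is the final summation: you sum $\sqrt{1/\lambda_{\min}(\Lambda_h^k)}$ directly, paying an explicit burn-in of $K_0=\tilde{\Theta}(1/\alpha^2)$ episodes, whereas the paper first applies Cauchy--Schwarz over $k$, i.e.\ $\sum_k\sqrt{1/\lambda_{\min}(\Lambda_h^k)}\le\sqrt{K\sum_k 1/\lambda_{\min}(\Lambda_h^k)}$, and bounds $\sum_k 1/\lambda_{\min}(\Lambda_h^k)\le \tfrac{128}{\alpha^2}\log\tfrac{3dHK}{p}+\tfrac{2}{\alpha}\log K$, which spreads the burn-in over all $K$ and yields the stated rate for every $K$; your version as written needs $K\gtrsim 1/\alpha^2$, but this is immaterial since in the complementary regime the claimed bound exceeds $dH^2\ge H$ and is vacuous (or one can fall back on the trivial bound $\Lambda_h^k\succeq I$ for all episodes), so the argument is sound.
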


\begin{remark}\label{rmk:compare_with_data_coverage}
    Note that $\alpha$ represents the lower bound of the smallest eigenvalue of $\EE_{\pi}[\bphi(s_h,a_h)\bphi(s_h, a_h)^\top ]$, which can be upper bounded by $1/d$ \citep{wang2020statistical}. When $\alpha=O(1/d)$, \Cref{corollary:DRLSVIUCB} suggests an average suboptimality bound of \intextmath{$\tilde{\cO}(\sqrt{d^4H^4/K})$}. 
    Moreover, \cite{blanchet2023double} studied the offline setting of $d$-rectangular linear DRMDP with TV uncertainty set. Under the robust partial coverage assumption on the offline dataset, their model-based algorithm P$^2$MPO achieves \intextmath{$\tilde{O}(\sqrt{d^4H^4/c^{\dagger}K})$} suboptimality bound, where $c^\dagger$ is a problem dependent %
    constant related to the robust partial coverage assumption. If we further assume $c^{\dagger}=O(1)$, then the suboptimality bound of P$^2$MPO is the same as DR-LSVI-UCB. 
    
    In contrast with P$^2$MPO, DR-LSVI-UCB does not require a precollected offline dataset satisfying the strong coverage assumption, which is unrealistic in practice. In particular, the robust partial coverage assumption requires that the offline dataset has sufficient coverage of distributions induced by the optimal robust policy and any transition kernel in the uncertainty set. Since the optimal robust policy is unknown, and there are infinite transition kernels in the uncertainty set, it's practically impossible to verify this robust partial coverage assumption. Instead, our algorithm employs an online incremental approach to explore data through active interactions with the source domain.
    Additionally, we can numerically compute the $d\text{-rectangular estimation error}$ in  \eqref{eq:AveSubopt-th}, and then acquire a specific value of the high probability upper bound of the average suboptimality according to \eqref{eq:AveSubopt-th}.
    
    In addition, P$^2$MPO is computationally intractable. For example, even when the model space in their algorithm is specified for $d$-rectangular linear DRMDPs, their algorithm requires exact solution of a supremum problem, $\sup_{\nu\in\cV}$, over the value function class to obtain a confidence region $\widehat{\cP}_h$, and the solution of an infimum problem, $\inf_{P_h\in \widehat{\cP}_h}$, over the confidence region $\widehat{\cP}_h$ (see (3.1) and (6.1) in \citet{blanchet2023double} for details). These requirements make P$^2$MPO computationally intractable. In contrast, our proposed DR-LSVI-UCB algorithm is not only statistically efficient, but also computationally efficient. %
    
\end{remark}

\begin{remark}
    When $\alpha=O(1/d)$, the average suboptimality bound of DR-LSVI-UCB, \intextmath{$\tilde{O}(\sqrt{d^4H^4/K})$}, matches the average regret bound \intextmath{$\tilde{O}(\sqrt{d^3H^4/K})$} for LSVI-UCB in standard linear MDPs \citep[Theorem 3.1]{jin2020provably} with respect to horizon length $H$ and number of episodes $K$. However, our result in the robust setting incurs an extra $\sqrt{d}$ term concerning the feature dimension. This factor emerges from the necessity for \Cref{alg:DR-LSVI-UCB} to solve $d$ distinct ridge regressions to estimate the parameter of the $d$-rectangular uncertainty set (refer to Lines \ref{algline:update_z}, \ref{algline:update_nu}, \ref{algline:update Q} of \Cref{alg:DR-LSVI-UCB}). 
    An intriguing open question remains whether this additional $\sqrt{d}$ factor can be mitigated through algorithm design or a more refined analysis.

\end{remark}

\begin{figure*}[t]
    \centering
    \subfigure[$\Vert\xi\Vert_1 = 0.1$, $\rho_{1,4}=0.5$]{\includegraphics[scale=0.35]{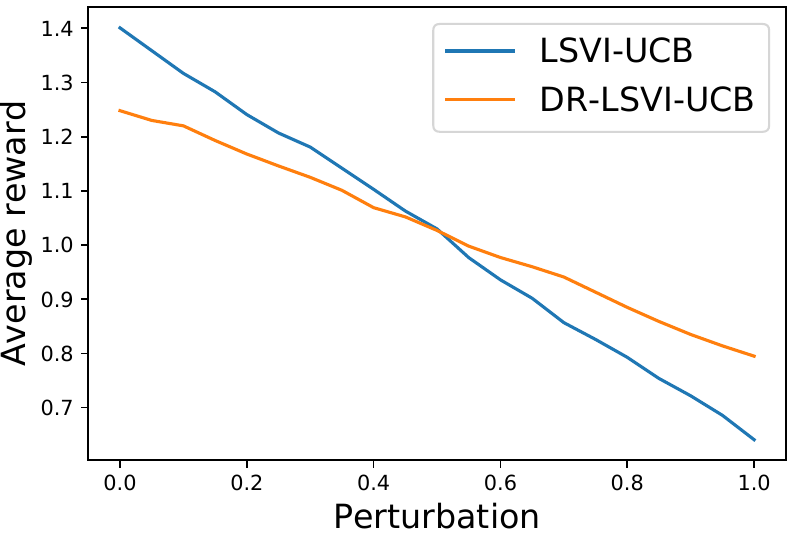}} 
     \subfigure[$\Vert\xi\Vert_1 = 0.2$, $\rho_{1,4}=0.5$]{\includegraphics[scale=0.35]{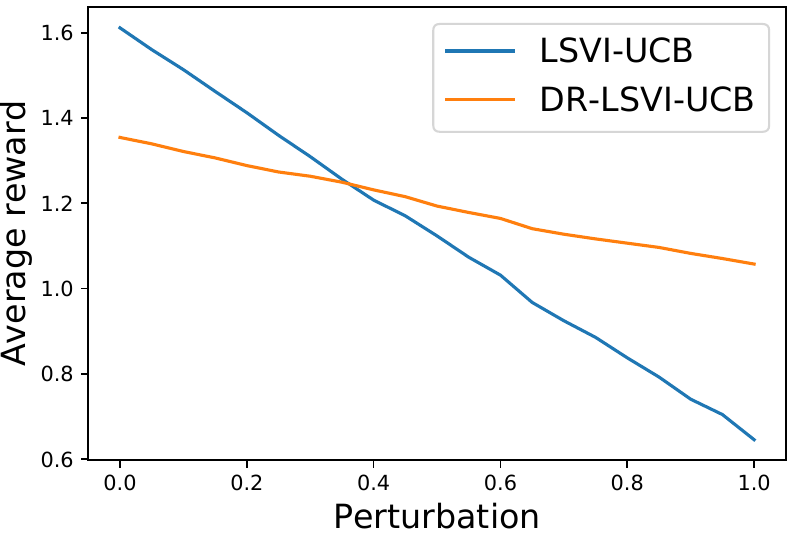}}
      \subfigure[$\Vert\xi\Vert_1 = 0.3$, $\rho_{1,4}=0.5$]{\includegraphics[scale=0.35]{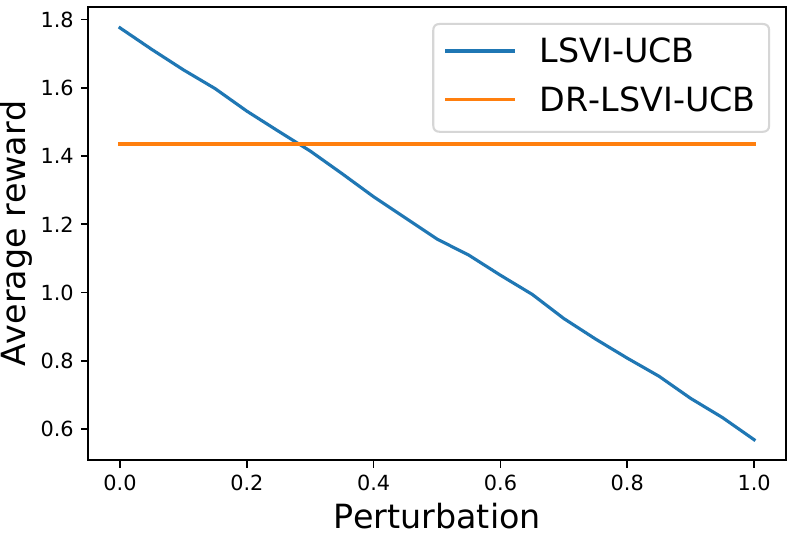}}
    \caption{Simulation results under different source domains. The $x$-axis represents the perturbation level corresponding to different target environments. $\rho_{1,4}$ is the uncertainty level in our DR-LSVI-UCB algorithm.}
    \label{fig:simulation results}
\end{figure*}

\begin{figure*}[t]
    \centering
    \subfigure[$\rho=0.3$]{\includegraphics[scale=0.35]{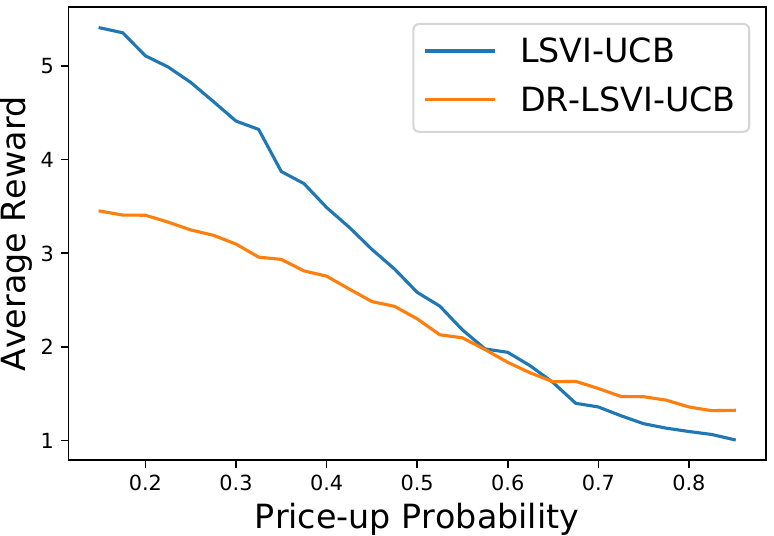}} 
     \subfigure[$\rho=0.4$]{\includegraphics[scale=0.35]{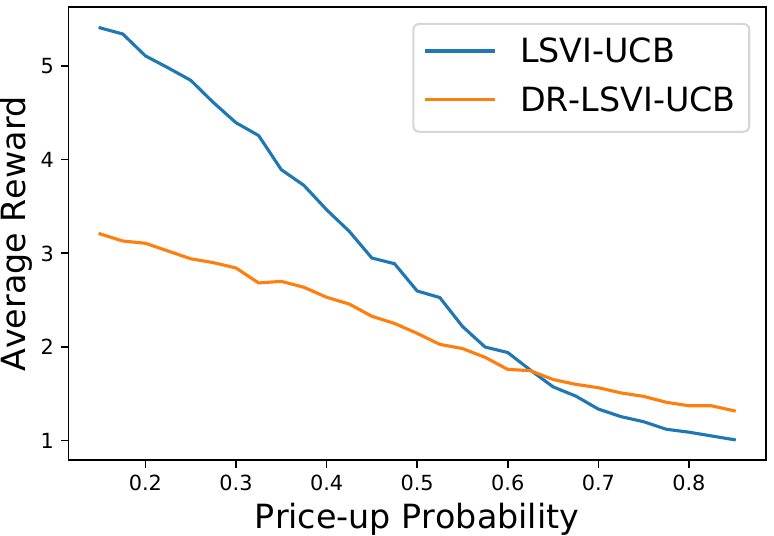}}
      \subfigure[$\rho=0.5$]{\includegraphics[scale=0.35]{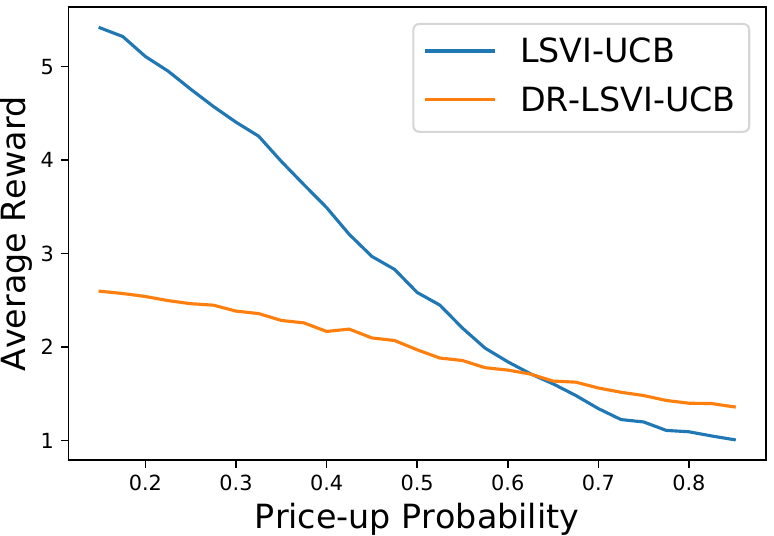}}
    \caption{Results for the simulated American put option problem. $\rho$ is the uncertainty level in DR-LSVI-UCB.}
    \label{fig:simulation results-APO}
\end{figure*}

\section{EXPERIMENTS}
\label{sec:Experiments}
In this section, we compare DR-LSVI-UCB with its non-robust counterpart, LSVI-UCB \citep{jin2020provably}, on two off-dynamics RL problems. 
All numerical experiments were conducted on a MacBook Pro with a 2.6 GHz 6-Core Intel CPU. The implementation of our \algname\ algorithm is available at \url{https://github.com/panxulab/Distributionally-Robust-LSVI-UCB}.

\subsection{Simulated Off-Dynamics Linear MDPs}\label{sec:experiments_simulatedMDP}
 We first construct a linear MDP as the source domain, where the learning horizon $H=3$, and the state space is $\cS=\{x_1, \cdots, x_5\}$. At each step, the action $a$ is chosen from $\cA=\{-1,1\}^4\subset\RR^4$. The initial state is always $x_1$, which can transit to $x_2$, $x_4$ or $x_5$ with nonzero probabilities, where $x_4$ and $x_5$ are absorbing states. From $x_2$, the next state can be $x_3$, $x_4$ or $x_5$, and from $x_3$, it can only transit to $x_4$ or $x_5$. 
 We design the transition probabilities and rewards such that they both depend on $\la \xi, a\ra$, which is bounded in $[-\Vert \xi \Vert_1, \Vert \xi \Vert_1]$ by the definition of $\cA$, where $\xi\in\RR^4$ is a hyperparameter of the MDP instance. We verify that this MDP satisfies \Cref{assumption:linear MDP} with $d=4$. 
 We then construct target domains by perturbing the transition probability at $x_1$ of the source domain such that the divergence is up to $q\in(0,1)$ in TV distance.
 Due to the space limit, we defer more details on the construction and verification of the source domain as well as the perturbation of the target domain to \Cref{sec:supp-SLMDP}.
 
 In our experiments, we consider different source MDP instances by setting $\|\xi\|_1\in\{0.1, 0.2, 0.3\}$. To implement the uncertainty set in \algname, we use heterogeneous uncertain levels $\rho_{h,i}$ for $h\in[H]$ and $i\in[d]$ as we discussed in \Cref{remark:heterogeneous-rho}. 
 In particular, we set $\rho_{1,4} = 0.5$ and $\rho_{h,i}=0$ for all other cases. We evaluate different policies based on their average rewards achieved in the target domain, which are illustrated in \Cref{fig:simulation results}. It can be seen that LSVI-UCB outperforms \algname\ when the dynamics shift is small, but significantly underperforms when the dynamics shift is moderate or substantial, which verifies the robustness of our \algname. We also conduct an ablation study on the effect of different values of $\rho_{1,4}$ on the performance of \algname, which is deferred to \Cref{sec:supp-SLMDP} due to the space limit.

\subsection{Simulated American Put Option}
We then evaluate our algorithm in a simulated American put option problem \citep{tamar2014scaling, zhou2021finite, ma2022distributionally}. 
There is a price model in this problem, which is assumed to follow the Bernoulli distribution
\begin{align}
    s_{h+1}=
    \begin{cases}
     1.02 s_h, &\text{w.p.} ~ p_u\\
     0.98 s_h, &\text{w.p.} ~ 1-p_u, \label{eq:bernoulli}
    \end{cases}
\end{align}
where $p_u\in(0,1)$ is the probability that the price goes up in the next step. The initial price $s_0$ is generated uniformly from $[95, 105]$. At each step $h$, an agent can take one of the two actions: exercising the option ($a_e$) or not exercising the option ($a_{ne}$). If exercising the option, the agent receives a reward of $\max\{0, 100-s_h\}$, and the next state would be the exit state. If not exercising the option, the agent receives $0$ reward, and the next state $s_{h+1}$ is generated based on the Bernoulli distribution in \eqref{eq:bernoulli}. We limit the number of trading steps to $H$. 

In order to employ linear function approximation, we construct a feature mapping $\phi:
\cS \times \cA \rightarrow \RR^{d+1}$ motivated by \cite{ma2022distributionally}. Specifically, we first construct the set of anchor states, $\{s_i\}_{i=1}^d$, where $s_1=80$, $s_{i+1}-s_i=\Delta$ and $\Delta = 60/d$. Then we define,
\begin{align*}
    \phi(s_h, a) = 
    \begin{cases}
        [\varphi_1(s_h),\cdots, \varphi_d(s_h), 0], &\text{if}~a=a_e \\
        [0, \cdots, 0, \max\{0, 100-s_h\}], &\text{if}~a=a_{ne},
    \end{cases}
\end{align*}
where $\varphi_i(s) = \max\{0, 1-|s_h-s_i|/\Delta\},~i\in[d]$. 
In our simulation, we set the price-up probability of the source domain to $p_u=0.5$, maximum trading steps $H$ to 10, and the feature dimension $d$ to 20. Moreover, we consider various target domains, each with a price-up probability falling within the range of $[0.15,0.85]$.
We conduct experiments on different uncertainty levels $\rho$ for \algname, and plot the average rewards for LSVI-UCB and \algname\ on target domains in \Cref{fig:simulation results-APO}.
It can be seen that the average rewards of robust policies are more stable over different target domains. In particular, \algname\ outperforms LSVI-UCB under worst-cases when the price-up probability of the target domain is much higher than that of the source domain.
 
\section{CONCLUSION}
We studied off-dynamics RL under the framework of online DRMDPs with linear function approximation. We proposed a model-free algorithm \algname, which learns the optimal robust policy through active interaction with the source domain. This is the first provably efficient DRMDP algorithm for off-dynamics RL with function approximation. We established the first non-asymptotic suboptimality bound for this setting, which is independent of state and action space sizes. We  validated the performance and robustness of DR-LSVI-UCB on carefully designed instances.
It remains an intriguing open question whether the theoretical bounds for online DRMDPs can match that of standard linear MDPs. It is also of great interest to derive lower bounds on $d$-rectangular linear DRMDPs to see its fundamental limits.

\section*{Acknowledgements}
We would like to thank the anonymous reviewers for their helpful comments. PX was supported in part by the National Science Foundation (DMS-2323112) and the Whitehead Scholars Program at the Duke University School of Medicine. The views and conclusions in this paper are those of the authors and should not be interpreted as representing any funding agencies.

\bibliography{reference.bib}

\begin{thebibliography}{}

\bibitem[Abbasi-Yadkori et~al., 2011]{abbasi2011improved}
Abbasi-Yadkori, Y., P{\'a}l, D., and Szepesv{\'a}ri, C. (2011).
\newblock Improved algorithms for linear stochastic bandits.
\newblock {\em Advances in neural information processing systems}, 24.

\bibitem[Agarwal et~al., 2019]{agarwal2019reinforcement}
Agarwal, A., Jiang, N., Kakade, S.~M., and Sun, W. (2019).
\newblock Reinforcement learning: Theory and algorithms.
\newblock {\em CS Dept., UW Seattle, Seattle, WA, USA, Tech. Rep}, 32.

\bibitem[Azar et~al., 2017]{azar2017minimax}
Azar, M.~G., Osband, I., and Munos, R. (2017).
\newblock Minimax regret bounds for reinforcement learning.
\newblock In {\em International Conference on Machine Learning}, pages
  263--272. PMLR.

\bibitem[Badrinath and Kalathil, 2021]{badrinath2021robust}
Badrinath, K.~P. and Kalathil, D. (2021).
\newblock Robust reinforcement learning using least squares policy iteration
  with provable performance guarantees.
\newblock In {\em International Conference on Machine Learning}, pages
  511--520. PMLR.

\bibitem[Bhandari et~al., 2018]{bhandari2018finite}
Bhandari, J., Russo, D., and Singal, R. (2018).
\newblock A finite time analysis of temporal difference learning with linear
  function approximation.
\newblock In {\em Conference on learning theory}, pages 1691--1692. PMLR.

\bibitem[Blanchet et~al., 2023]{blanchet2023double}
Blanchet, J., Lu, M., Zhang, T., and Zhong, H. (2023).
\newblock Double pessimism is provably efficient for distributionally robust
  offline reinforcement learning: Generic algorithm and robust partial
  coverage.
\newblock {\em arXiv preprint arXiv:2305.09659}.

\bibitem[Derman et~al., 2020]{derman2020bayesian}
Derman, E., Mankowitz, D., Mann, T., and Mannor, S. (2020).
\newblock A bayesian approach to robust reinforcement learning.
\newblock In {\em Uncertainty in Artificial Intelligence}, pages 648--658.
  PMLR.

\bibitem[Derman et~al., 2018]{derman2018soft}
Derman, E., Mankowitz, D.~J., Mann, T.~A., and Mannor, S. (2018).
\newblock Soft-robust actor-critic policy-gradient.
\newblock {\em arXiv preprint arXiv:1803.04848}.

\bibitem[Dong et~al., 2022]{dong2022online}
Dong, J., Li, J., Wang, B., and Zhang, J. (2022).
\newblock Online policy optimization for robust mdp.
\newblock {\em arXiv preprint arXiv:2209.13841}.

\bibitem[Eysenbach et~al., 2020]{eysenbach2020off}
Eysenbach, B., Asawa, S., Chaudhari, S., Levine, S., and Salakhutdinov, R.
  (2020).
\newblock Off-dynamics reinforcement learning: Training for transfer with
  domain classifiers.
\newblock {\em arXiv preprint arXiv:2006.13916}.

\bibitem[Farebrother et~al., 2018]{farebrother2018generalization}
Farebrother, J., Machado, M.~C., and Bowling, M. (2018).
\newblock Generalization and regularization in dqn.
\newblock {\em arXiv preprint arXiv:1810.00123}.

\bibitem[Goldberg and Kosorok, 2012]{goldberg2012q}
Goldberg, Y. and Kosorok, M.~R. (2012).
\newblock Q-learning with censored data.
\newblock {\em Annals of statistics}, 40(1):529.

\bibitem[Goyal and Grand-Clement, 2023]{goyal2023robust}
Goyal, V. and Grand-Clement, J. (2023).
\newblock Robust markov decision processes: Beyond rectangularity.
\newblock {\em Mathematics of Operations Research}, 48(1):203--226.

\bibitem[He et~al., 2023]{he2023nearly}
He, J., Zhao, H., Zhou, D., and Gu, Q. (2023).
\newblock Nearly minimax optimal reinforcement learning for linear markov
  decision processes.
\newblock In {\em International Conference on Machine Learning}, pages
  12790--12822. PMLR.

\bibitem[He et~al., 2021]{he2021logarithmic}
He, J., Zhou, D., and Gu, Q. (2021).
\newblock Logarithmic regret for reinforcement learning with linear function
  approximation.
\newblock In {\em International Conference on Machine Learning}, pages
  4171--4180. PMLR.

\bibitem[Ishfaq et~al., 2023]{ishfaq2023provable}
Ishfaq, H., Lan, Q., Xu, P., Mahmood, A.~R., Precup, D., Anandkumar, A., and
  Azizzadenesheli, K. (2023).
\newblock Provable and practical: Efficient exploration in reinforcement
  learning via langevin monte carlo.
\newblock {\em arXiv preprint arXiv:2305.18246}.

\bibitem[Iyengar, 2005]{iyengar2005robust}
Iyengar, G.~N. (2005).
\newblock Robust dynamic programming.
\newblock {\em Mathematics of Operations Research}, 30(2):257--280.

\bibitem[Jiang et~al., 2021]{jiang2021simgan}
Jiang, Y., Zhang, T., Ho, D., Bai, Y., Liu, C.~K., Levine, S., and Tan, J.
  (2021).
\newblock Simgan: Hybrid simulator identification for domain adaptation via
  adversarial reinforcement learning.
\newblock In {\em 2021 IEEE International Conference on Robotics and Automation
  (ICRA)}, pages 2884--2890. IEEE.

\bibitem[Jin et~al., 2020]{jin2020provably}
Jin, C., Yang, Z., Wang, Z., and Jordan, M.~I. (2020).
\newblock Provably efficient reinforcement learning with linear function
  approximation.
\newblock In {\em Conference on Learning Theory}, pages 2137--2143. PMLR.

\bibitem[Koos et~al., 2012]{koos2012transferability}
Koos, S., Mouret, J.-B., and Doncieux, S. (2012).
\newblock The transferability approach: Crossing the reality gap in
  evolutionary robotics.
\newblock {\em IEEE Transactions on Evolutionary Computation}, 17(1):122--145.

\bibitem[Laber et~al., 2018]{laber2018optimal}
Laber, E.~B., Meyer, N.~J., Reich, B.~J., Pacifici, K., Collazo, J.~A., and
  Drake, J.~M. (2018).
\newblock Optimal treatment allocations in space and time for on-line control
  of an emerging infectious disease.
\newblock {\em Journal of the Royal Statistical Society Series C: Applied
  Statistics}, 67(4):743--789.

\bibitem[Liu et~al., 2023a]{liu2023deep}
Liu, Z., Clifton, J., Laber, E.~B., Drake, J., and Fang, E.~X. (2023a).
\newblock Deep spatial q-learning for infectious disease control.
\newblock {\em Journal of Agricultural, Biological and Environmental
  Statistics}, pages 1--25.

\bibitem[Liu et~al., 2023b]{Liu2023dtr}
Liu, Z., Zhan, Z., Liu, J., Yi, D., Lin, C., and Yang, Y. (2023b).
\newblock On estimation of optimal dynamic treatment regimes with multiple
  treatments for survival data-with application to colorectal cancer study.
\newblock {\em arXiv preprint arXiv:2310.05049}.

\bibitem[Ma et~al., 2022]{ma2022distributionally}
Ma, X., Liang, Z., Xia, L., Zhang, J., Blanchet, J., Liu, M., Zhao, Q., and
  Zhou, Z. (2022).
\newblock Distributionally robust offline reinforcement learning with linear
  function approximation.
\newblock {\em arXiv preprint arXiv:2209.06620}.

\bibitem[Maitin-Shepard et~al., 2010]{maitin2010cloth}
Maitin-Shepard, J., Cusumano-Towner, M., Lei, J., and Abbeel, P. (2010).
\newblock Cloth grasp point detection based on multiple-view geometric cues
  with application to robotic towel folding.
\newblock In {\em 2010 IEEE International Conference on Robotics and
  Automation}, pages 2308--2315. IEEE.

\bibitem[Mankowitz et~al., 2019]{mankowitz2019robust}
Mankowitz, D.~J., Levine, N., Jeong, R., Shi, Y., Kay, J., Abdolmaleki, A.,
  Springenberg, J.~T., Mann, T., Hester, T., and Riedmiller, M. (2019).
\newblock Robust reinforcement learning for continuous control with model
  misspecification.
\newblock {\em arXiv preprint arXiv:1906.07516}.

\bibitem[Mannor et~al., 2016]{mannor2016robust}
Mannor, S., Mebel, O., and Xu, H. (2016).
\newblock Robust mdps with k-rectangular uncertainty.
\newblock {\em Mathematics of Operations Research}, 41(4):1484--1509.

\bibitem[Modi et~al., 2020]{modi2020sample}
Modi, A., Jiang, N., Tewari, A., and Singh, S. (2020).
\newblock Sample complexity of reinforcement learning using linearly combined
  model ensembles.
\newblock In {\em International Conference on Artificial Intelligence and
  Statistics}, pages 2010--2020. PMLR.

\bibitem[Nilim and El~Ghaoui, 2005]{nilim2005robust}
Nilim, A. and El~Ghaoui, L. (2005).
\newblock Robust control of markov decision processes with uncertain transition
  matrices.
\newblock {\em Operations Research}, 53(5):780--798.

\bibitem[Packer et~al., 2018]{packer2018assessing}
Packer, C., Gao, K., Kos, J., Kr{\"a}henb{\"u}hl, P., Koltun, V., and Song, D.
  (2018).
\newblock Assessing generalization in deep reinforcement learning.
\newblock {\em arXiv preprint arXiv:1810.12282}.

\bibitem[Panaganti and Kalathil, 2022]{panaganti2022sample}
Panaganti, K. and Kalathil, D. (2022).
\newblock Sample complexity of robust reinforcement learning with a generative
  model.
\newblock In {\em International Conference on Artificial Intelligence and
  Statistics}, pages 9582--9602. PMLR.

\bibitem[Panaganti et~al., 2022]{panaganti2022robust}
Panaganti, K., Xu, Z., Kalathil, D., and Ghavamzadeh, M. (2022).
\newblock Robust reinforcement learning using offline data.
\newblock {\em Advances in neural information processing systems},
  35:32211--32224.

\bibitem[Peng et~al., 2018]{peng2018sim}
Peng, X.~B., Andrychowicz, M., Zaremba, W., and Abbeel, P. (2018).
\newblock Sim-to-real transfer of robotic control with dynamics randomization.
\newblock In {\em 2018 IEEE international conference on robotics and automation
  (ICRA)}, pages 3803--3810. IEEE.

\bibitem[Pinto et~al., 2017]{pinto2017robust}
Pinto, L., Davidson, J., Sukthankar, R., and Gupta, A. (2017).
\newblock Robust adversarial reinforcement learning.
\newblock In {\em International Conference on Machine Learning}, pages
  2817--2826. PMLR.

\bibitem[Puterman, 2014]{puterman2014markov}
Puterman, M.~L. (2014).
\newblock {\em Markov decision processes: discrete stochastic dynamic
  programming}.
\newblock John Wiley \& Sons.

\bibitem[Satia and Lave~Jr, 1973]{satia1973markovian}
Satia, J.~K. and Lave~Jr, R.~E. (1973).
\newblock Markovian decision processes with uncertain transition probabilities.
\newblock {\em Operations Research}, 21(3):728--740.

\bibitem[Shen et~al., 2024]{shen2024wasserstein}
Shen, Y., Xu, P., and Zavlanos, M. (2024).
\newblock Wasserstein distributionally robust policy evaluation and learning
  for contextual bandits.
\newblock {\em Transactions on Machine Learning Research}.
\newblock Featured Certification.

\bibitem[Shi and Chi, 2022]{shi2022distributionally}
Shi, L. and Chi, Y. (2022).
\newblock Distributionally robust model-based offline reinforcement learning
  with near-optimal sample complexity.
\newblock {\em arXiv preprint arXiv:2208.05767}.

\bibitem[Shi et~al., 2023]{shi2023curious}
Shi, L., Li, G., Wei, Y., Chen, Y., Geist, M., and Chi, Y. (2023).
\newblock The curious price of distributional robustness in reinforcement
  learning with a generative model.
\newblock {\em arXiv preprint arXiv:2305.16589}.

\bibitem[Sutton and Barto, 2018]{sutton2018reinforcement}
Sutton, R.~S. and Barto, A.~G. (2018).
\newblock {\em Reinforcement learning: An introduction}.
\newblock MIT press.

\bibitem[Tamar et~al., 2014]{tamar2014scaling}
Tamar, A., Mannor, S., and Xu, H. (2014).
\newblock Scaling up robust mdps using function approximation.
\newblock In {\em International conference on machine learning}, pages
  181--189. PMLR.

\bibitem[Tobin et~al., 2017]{tobin2017domain}
Tobin, J., Fong, R., Ray, A., Schneider, J., Zaremba, W., and Abbeel, P.
  (2017).
\newblock Domain randomization for transferring deep neural networks from
  simulation to the real world.
\newblock In {\em 2017 IEEE/RSJ international conference on intelligent robots
  and systems (IROS)}, pages 23--30. IEEE.

\bibitem[Tropp, 2012]{tropp2012user}
Tropp, J.~A. (2012).
\newblock User-friendly tail bounds for sums of random matrices.
\newblock {\em Foundations of computational mathematics}, 12:389--434.

\bibitem[Vershynin, 2018]{vershynin2018high}
Vershynin, R. (2018).
\newblock {\em High-dimensional probability: An introduction with applications
  in data science}, volume~47.
\newblock Cambridge university press.

\bibitem[Wagenmaker et~al., 2022]{wagenmaker2022reward}
Wagenmaker, A.~J., Chen, Y., Simchowitz, M., Du, S., and Jamieson, K. (2022).
\newblock Reward-free rl is no harder than reward-aware rl in linear markov
  decision processes.
\newblock In {\em International Conference on Machine Learning}, pages
  22430--22456. PMLR.

\bibitem[Wang et~al., 2020a]{wang2020reward}
Wang, R., Du, S.~S., Yang, L., and Salakhutdinov, R.~R. (2020a).
\newblock On reward-free reinforcement learning with linear function
  approximation.
\newblock {\em Advances in neural information processing systems},
  33:17816--17826.

\bibitem[Wang et~al., 2020b]{wang2020statistical}
Wang, R., Foster, D.~P., and Kakade, S.~M. (2020b).
\newblock What are the statistical limits of offline rl with linear function
  approximation?
\newblock {\em arXiv preprint arXiv:2010.11895}.

\bibitem[Wei et~al., 2021]{wei2021learning}
Wei, C.-Y., Jahromi, M.~J., Luo, H., and Jain, R. (2021).
\newblock Learning infinite-horizon average-reward mdps with linear function
  approximation.
\newblock In {\em International Conference on Artificial Intelligence and
  Statistics}, pages 3007--3015. PMLR.

\bibitem[Wiesemann et~al., 2013]{wiesemann2013robust}
Wiesemann, W., Kuhn, D., and Rustem, B. (2013).
\newblock Robust markov decision processes.
\newblock {\em Mathematics of Operations Research}, 38(1):153--183.

\bibitem[Wulfmeier et~al., 2017]{wulfmeier2017mutual}
Wulfmeier, M., Posner, I., and Abbeel, P. (2017).
\newblock Mutual alignment transfer learning.
\newblock In {\em Conference on Robot Learning}, pages 281--290. PMLR.

\bibitem[Xu and Mannor, 2006]{xu2006robustness}
Xu, H. and Mannor, S. (2006).
\newblock The robustness-performance tradeoff in markov decision processes.
\newblock {\em Advances in Neural Information Processing Systems}, 19.

\bibitem[Xu et~al., 2023]{xu2023improved}
Xu, Z., Panaganti, K., and Kalathil, D. (2023).
\newblock Improved sample complexity bounds for distributionally robust
  reinforcement learning.
\newblock In {\em International Conference on Artificial Intelligence and
  Statistics}, pages 9728--9754. PMLR.

\bibitem[Yang and Wang, 2020]{yang2020reinforcement}
Yang, L. and Wang, M. (2020).
\newblock Reinforcement learning in feature space: Matrix bandit, kernels, and
  regret bound.
\newblock In {\em International Conference on Machine Learning}, pages
  10746--10756. PMLR.

\bibitem[Yang et~al., 2023a]{yang2023avoiding}
Yang, W., Wang, H., Kozuno, T., Jordan, S.~M., and Zhang, Z. (2023a).
\newblock Avoiding model estimation in robust markov decision processes with a
  generative model.
\newblock {\em arXiv preprint arXiv:2302.01248}.

\bibitem[Yang et~al., 2022]{yang2022toward}
Yang, W., Zhang, L., and Zhang, Z. (2022).
\newblock Toward theoretical understandings of robust markov decision
  processes: Sample complexity and asymptotics.
\newblock {\em The Annals of Statistics}, 50(6):3223--3248.

\bibitem[Yang et~al., 2023b]{yang2023distributionally}
Yang, Z., Guo, Y., Xu, P., Liu, A., and Anandkumar, A. (2023b).
\newblock Distributionally robust policy gradient for offline contextual
  bandits.
\newblock In {\em International Conference on Artificial Intelligence and
  Statistics}, pages 6443--6462. PMLR.

\bibitem[Yu and Xu, 2015]{yu2015distributionally}
Yu, P. and Xu, H. (2015).
\newblock Distributionally robust counterpart in markov decision processes.
\newblock {\em IEEE Transactions on Automatic Control}, 61(9):2538--2543.

\bibitem[Zanette et~al., 2020]{zanette2020frequentist}
Zanette, A., Brandfonbrener, D., Brunskill, E., Pirotta, M., and Lazaric, A.
  (2020).
\newblock Frequentist regret bounds for randomized least-squares value
  iteration.
\newblock In {\em International Conference on Artificial Intelligence and
  Statistics}, pages 1954--1964. PMLR.

\bibitem[Zhang et~al., 2021]{zhang2021robust}
Zhang, H., Chen, H., Boning, D., and Hsieh, C.-J. (2021).
\newblock Robust reinforcement learning on state observations with learned
  optimal adversary.
\newblock {\em arXiv preprint arXiv:2101.08452}.

\bibitem[Zhao et~al., 2020]{zhao2020sim}
Zhao, W., Queralta, J.~P., and Westerlund, T. (2020).
\newblock Sim-to-real transfer in deep reinforcement learning for robotics: a
  survey.
\newblock In {\em 2020 IEEE symposium series on computational intelligence
  (SSCI)}, pages 737--744. IEEE.

\bibitem[Zhao et~al., 2018]{zhao2018constructing}
Zhao, Y.-Q., Zhu, R., Chen, G., and Zheng, Y. (2018).
\newblock Constructing stabilized dynamic treatment regimes for censored data.
\newblock {\em arXiv preprint arXiv:1808.01332}.

\bibitem[Zhou et~al., 2021]{zhou2021finite}
Zhou, Z., Zhou, Z., Bai, Q., Qiu, L., Blanchet, J., and Glynn, P. (2021).
\newblock Finite-sample regret bound for distributionally robust offline
  tabular reinforcement learning.
\newblock In {\em International Conference on Artificial Intelligence and
  Statistics}, pages 3331--3339. PMLR.

\end{thebibliography}

 \begin{enumerate}

 \item For all models and algorithms presented, check if you include:
 \begin{enumerate}
   \item A clear description of the mathematical setting, assumptions, algorithm, and/or model. [Yes]
   \item An analysis of the properties and complexity (time, space, sample size) of any algorithm. [Yes]
   \item (Optional) Anonymized source code, with specification of all dependencies, including external libraries. [Yes]
 \end{enumerate}

 \item For any theoretical claim, check if you include:
 \begin{enumerate}
   \item Statements of the full set of assumptions of all theoretical results. [Yes]
   \item Complete proofs of all theoretical results. [Yes]
   \item Clear explanations of any assumptions. [Yes]     
 \end{enumerate}

 \item For all figures and tables that present empirical results, check if you include:
 \begin{enumerate}
   \item The code, data, and instructions needed to reproduce the main experimental results (either in the supplemental material or as a URL). [Yes]
   
   The code of our implementation is available at \url{https://github.com/panxulab/Distributionally-Robust-LSVI-UCB}.
   \item All the training details (e.g., data splits, hyperparameters, how they were chosen). [Yes]
         \item A clear definition of the specific measure or statistics and error bars (e.g., with respect to the random seed after running experiments multiple times). [Yes]
         \item A description of the computing infrastructure used. (e.g., type of GPUs, internal cluster, or cloud provider). [Yes]
 \end{enumerate}

 \item If you are using existing assets (e.g., code, data, models) or curating/releasing new assets, check if you include:
 \begin{enumerate}
   \item Citations of the creator If your work uses existing assets. [Not Applicable]
   \item The license information of the assets, if applicable. [Not Applicable]
   \item New assets either in the supplemental material or as a URL, if applicable. [Not Applicable]
   \item Information about consent from data providers/curators. [Not Applicable]
   \item Discussion of sensible content if applicable, e.g., personally identifiable information or offensive content. [Not Applicable]
 \end{enumerate}

 \item If you used crowdsourcing or conducted research with human subjects, check if you include:
 \begin{enumerate}
   \item The full text of instructions given to participants and screenshots. [Not Applicable]
   \item Descriptions of potential participant risks, with links to Institutional Review Board (IRB) approvals if applicable. [Not Applicable]
   \item The estimated hourly wage paid to participants and the total amount spent on participant compensation. [Not Applicable]
 \end{enumerate}

 \end{enumerate}

\newpage
\appendix
\onecolumn

\section{EXPERIMENT SETUP AND ADDITIONAL RESULTS}
\label{sec:Experiment setup and additional results}
In this section, we provide additional details and more experimental results for our numerical study in \Cref{sec:Experiments}. 

\subsection{Simulated Linear MDP}\label{sec:supp-SLMDP}
We first describe the details about the construction of the source and target linear MDPs in \Cref{sec:experiments_simulatedMDP} and then provide the implementation of our method. We also present more ablation study on the robustness of our method with respect to the input parameter $\rho_{1,4}$ which stands for the uncertainty level.

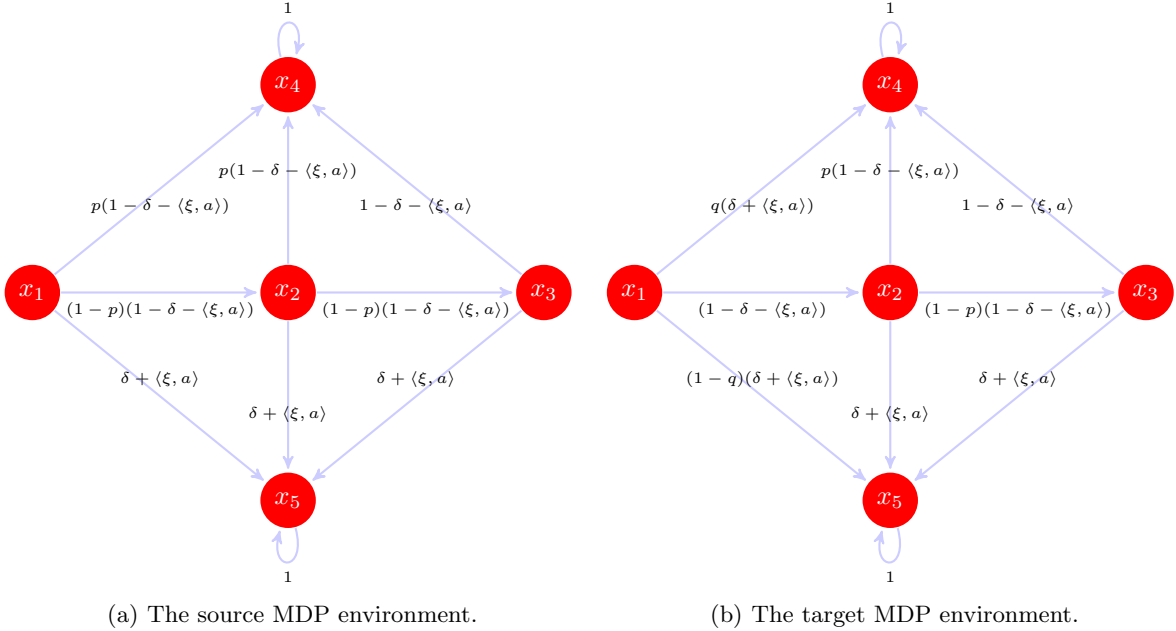
\begin{figure*}[ht]
    \centering
    \subfigure[The source MDP environment.]{
        \begin{tikzpicture}[->,>=stealth',shorten >=1pt,auto,node distance=3.4cm,thick]
            \tikzstyle{every state}=[fill=red,draw=none,text=white,minimum size=0.5cm]
            \node[state] (S1) {$x_1$};
            \node[state] (S2) [right of=S1] {$x_2$};
            \node[state] (S3) [right of=S2] {$x_3$};
            \node[state] (S4) [above=2cm of S2] {$x_4$};
            \node[state] (S5) [below=2cm of S2] {$x_5$};
            
            \path   (S1) edge[draw=blue!20] node[below] {\tiny $(1-p)(1-\delta-\la\xi,a\ra)$} (S2)
                         edge[draw=blue!20] node[below] {\tiny$p(1-\delta-\la\xi,a\ra)$} (S4)
                         edge[draw=blue!20] node[above] {\tiny$\delta+\la\xi,a\ra$} (S5)
                    (S2) edge[draw=blue!20] node[below] {\tiny$(1-p)(1-\delta-\la\xi,a\ra)$} (S3)
                         edge[draw=blue!20] node[above] {\tiny$p(1-\delta-\la\xi,a\ra)$} (S4)
                         edge[draw=blue!20] node[below] {\tiny$\delta+\la\xi,a\ra$} (S5)
                    (S3) edge[draw=blue!20] node[below] {\tiny$1-\delta-\la\xi,a\ra$} (S4)
                         edge[draw=blue!20] node[above] {\tiny$\delta+\la\xi,a\ra$} (S5)
                    (S4) edge[draw=blue!20] [loop above] node {\tiny 1} (S4)
                    (S5) edge[draw=blue!20] [loop below] node {\tiny 1} (S5);
        \end{tikzpicture}
        \label{fig:mdp_5states}
    }
    \subfigure[The target MDP environment.]{
        \begin{tikzpicture}[->,>=stealth',shorten >=1pt,auto,node distance=3.4cm,thick]
            \tikzstyle{every state}=[fill=red,draw=none,text=white,minimum size=0.5cm]
            \node[state] (S1) {$x_1$};
            \node[state] (S2) [right of=S1] {$x_2$};
            \node[state] (S3) [right of=S2] {$x_3$};
            \node[state] (S4) [above=2cm of S2] {$x_4$};
            \node[state] (S5) [below=2cm of S2] {$x_5$};
            
            \path   (S1) edge[draw=blue!20] node[below] {\tiny$(1-\delta-\la\xi,a\ra)$} (S2)
                         edge[draw=blue!20] node[below] {\tiny$q(\delta+\la\xi,a\ra)$} (S4)
                         edge[draw=blue!20] node[above] {\tiny$(1-q)(\delta+\la\xi,a\ra)$} (S5)
                    (S2) edge[draw=blue!20] node[below] {\tiny$(1-p)(1-\delta-\la\xi,a\ra)$} (S3)
                         edge[draw=blue!20] node[above] {\tiny$p(1-\delta-\la\xi,a\ra)$} (S4)
                         edge[draw=blue!20] node[below] {\tiny$\delta+\la\xi,a\ra$} (S5)
                    (S3) edge[draw=blue!20] node[below] {\tiny$1-\delta-\la\xi,a\ra$} (S4)
                         edge[draw=blue!20] node[above] {\tiny$\delta+\la\xi,a\ra$} (S5)
                    (S4) edge[draw=blue!20] [loop above] node {\tiny 1} (S4)
                    (S5) edge[draw=blue!20] [loop below] node {\tiny 1} (S5);
        \end{tikzpicture}
        \label{fig:perturbed_mdp}
    }
    \caption{The source and the target linear MDP environments. The value on each arrow represents the transition probability. For the source MDP, there are five states and three steps, with the initial state being  $x_1$, the fail state being $x_4$, and $x_5$ being an absorbing state with reward 1. The target MDP on the right is obtained by perturbing the transition probability at the first step of the source MDP, with others remaining the same. }
\end{figure*}

\paragraph{Construction of the linear MDP} 
The source environment MDP is showed in \Cref{fig:mdp_5states}.  We recall that the learning horizon is $H=3$, the state space is $\cS=\{x_i\}_{i=1}^5$, and the action space is $\cA=\{-1,1\}^4 \subset \RR^4$. The initial state in each episode is always $x_1$. We construct the feature mapping $\phi:\cS\times\cA \rightarrow \RR^d$ with $d=4$ as follows:
\begin{align*}
    \phi(x_1, a) &= (1-\delta-\la\xi, a \ra, 0, 0, \delta+\la\xi, a \ra)^{\top},\\
    \phi(x_2, a) &= (0, 1-\delta-\la\xi,a\ra, 0, \delta+\la\xi, a \ra)^{\top}, \\
    \phi(x_3, a) &= (0, 0, 1-\delta-\la\xi,a\ra, \delta+\la\xi, a \ra)^{\top}, \\
    \phi(x_4, a) &= (0, 0, 1, 0)^{\top},\\
    \phi(x_5, a) &= (0, 0, 0, 1)^{\top},
\end{align*}
where the $\delta$ and $\xi$ are hyperparameters. We then define the reward parameters $\btheta = \{\btheta_h\}_{h=1}^3$ as 
$$\btheta_1 = (0,0,0,0)^{\top},~ \btheta_2 = (0, 0, 0, 1)^{\top}~ \text{and}~ \btheta_3 = (0, 0, 0, 1)^{\top},$$ 
and the factor distributions $\{\bmu_h\}_{h=1}^2$ as 
\begin{align}\label{exp:def_factor_distribution_source}
 \bmu_1=\bmu_2=((1-p)\delta_{x_2}+p\delta_{x_4}, (1-p)\delta_{x_3}+p\delta_{x_4}, \delta_{x_4}, \delta_{x_5})^{\top},   
\end{align}
where the $\delta_x$ is a Dirac measure which puts an atom on element $x$, and $p$ is a hyperparameter.
With these notations, we define the linear reward functions as 
$$r_h(s,a)=\bphi(s,a)^{\top}\btheta_h, ~\forall (h,s,a)\in[H]\times\cS\times\cA,$$
and the linear transition kernels as $$P_h(\cdot|s,a)=\bphi(s,a)^{\top}\bmu_h(\cdot), ~\forall (h,s,a)\in[H]\times\cS\times\cA.$$
Note that by construction, $x_4$ is a fail state in this MDP as (i) $P_h(x_4|x_4,a)=1, \forall (h,a)\in [H]\times \cA$, and (ii) $r_h(x_4,a)=0,~ \forall (h, a)\in [H]\times\cA$.
Thus, it is easy to verify that the constructed source MDP satisfies \Cref{assumption:linear MDP,assumption:fail-state}. 
In our simulation, we set $p=0.001$, $\delta=0.3$, $\xi=(1/\Vert\xi\Vert_1, 1/\Vert\xi\Vert_1,1/\Vert\xi\Vert_1,1/\Vert\xi\Vert_1)^{\top}$ and $\Vert\xi\Vert_1 = \{0.1,0.2,0.3\}$.
Next, we construct several target domains, as showed in \Cref{fig:perturbed_mdp}, by perturbing the source domain. Specifically, we only perturb the factor distributions $\bmu_1$ in \eqref{exp:def_factor_distribution_source} for the fist step of the MDP, which is changed to
\begin{align}
    \bmu_1^{\text{perturbed}} =\big (\delta_{x_2}, \delta_{x_3}, \delta_{x_4}, (1-q)\delta_{x_5}+q\delta_{x_4}\big)^{\top},
\end{align}
where $q$ is a factor that controls the perturbation level. In our simulation, we consider difference values of $q$ in the range $[0, 1]$. Moreover, We train policies in the source domain through 100 epochs, and test those policies by computing the average reward in target domains through 100 epochs. 

\paragraph{Ablation study} We also conduct additional experiments to study the impact of $\rho_{1,4}$ on the robustness of our algorithm. In particular, we vary the value of $\rho_{1,4}$ in the range $\{0.3, 0.4, 0.5\}$ and set all other $\rho_{h,i}=0$.
Results of ablation study are showed in \Cref{fig:simulation-results-app}. 

\begin{figure*}[t]%
    \centering
      \subfigure[$\Vert\xi\Vert_1 = 0.1$, $\rho_{1,4}=0.5$]{\includegraphics[scale=0.38]{figure/robustness_0.3_0.1_0.5.pdf}
      \label{fig:simulated_MDP_xi01_rho05}}
      \subfigure[$\Vert\xi\Vert_1 = 0.2$, $\rho_{1,4}=0.5$]{\includegraphics[scale=0.38]{figure/robustness_0.3_0.2_0.5.pdf}}
      \subfigure[$\Vert\xi\Vert_1 = 0.3$, $\rho_{1,4}=0.5$]{\includegraphics[scale=0.38]{figure/robustness_0.3_0.3_0.5.pdf}}\\
      \subfigure[$\Vert\xi\Vert_1 = 0.1$, $\rho_{1,4}=0.4$]{\includegraphics[scale=0.38]{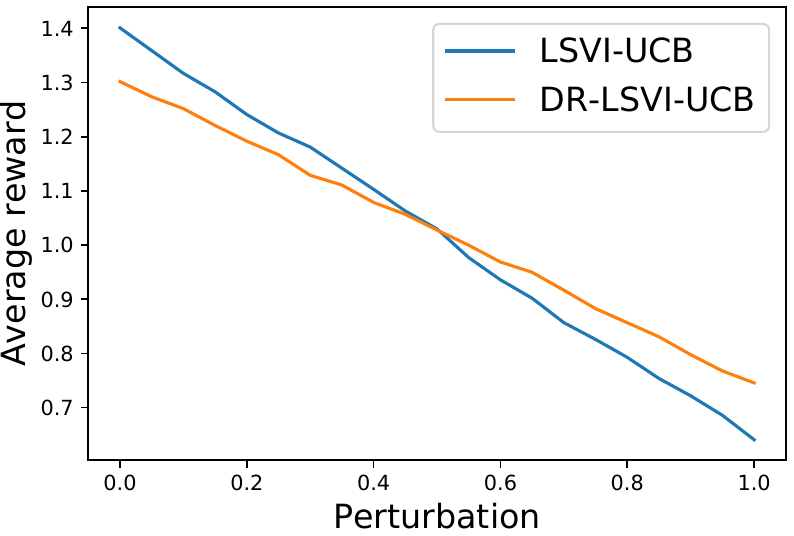}
      \label{fig:simulated_MDP_xi01_rho04}}
      \subfigure[$\Vert\xi\Vert_1 = 0.2$, $\rho_{1,4}=0.4$]{\includegraphics[scale=0.38]{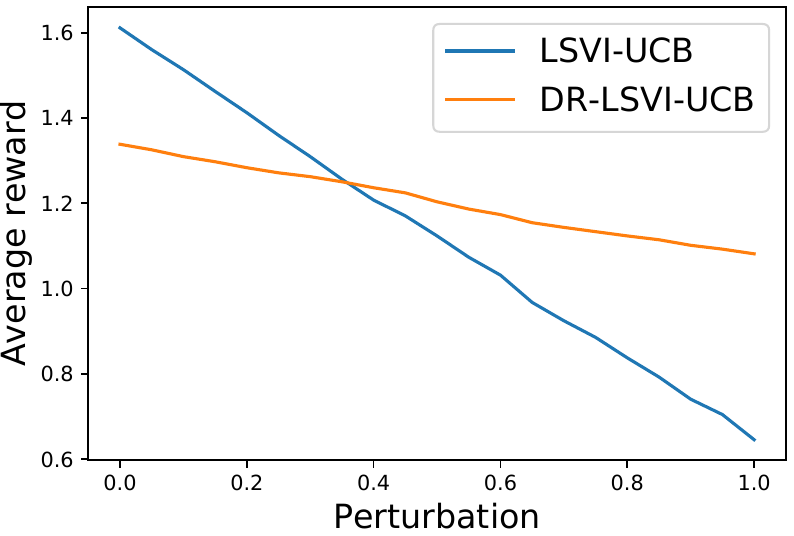}}
      \subfigure[$\Vert\xi\Vert_1 = 0.3$, $\rho_{1,4}=0.4$]{\includegraphics[scale=0.38]{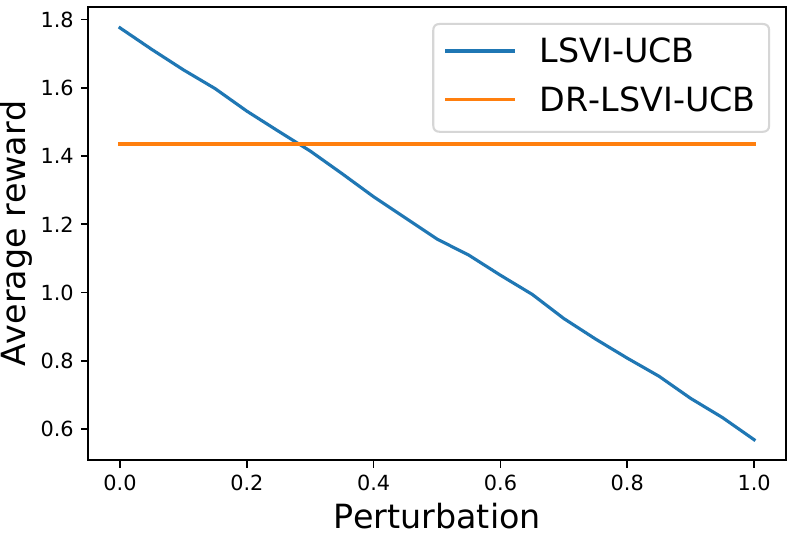}
      \label{fig:simulated_MDP_xi03_rho04}}\\
      \subfigure[$\Vert\xi\Vert_1 = 0.1$, $\rho_{1,4}=0.3$]{\includegraphics[scale=0.38]{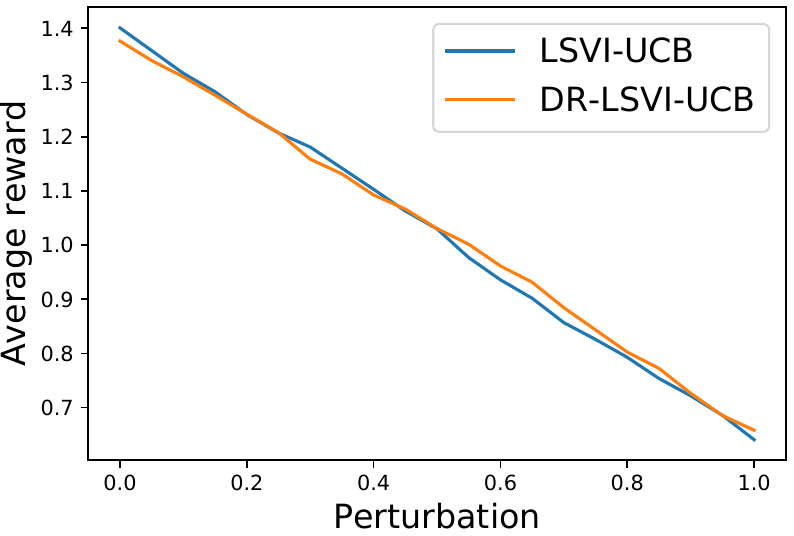}
      \label{fig:simulated_MDP_xi01_rho03}}
      \subfigure[$\Vert\xi\Vert_1 = 0.2$, $\rho_{1,4}=0.3$]{\includegraphics[scale=0.38]{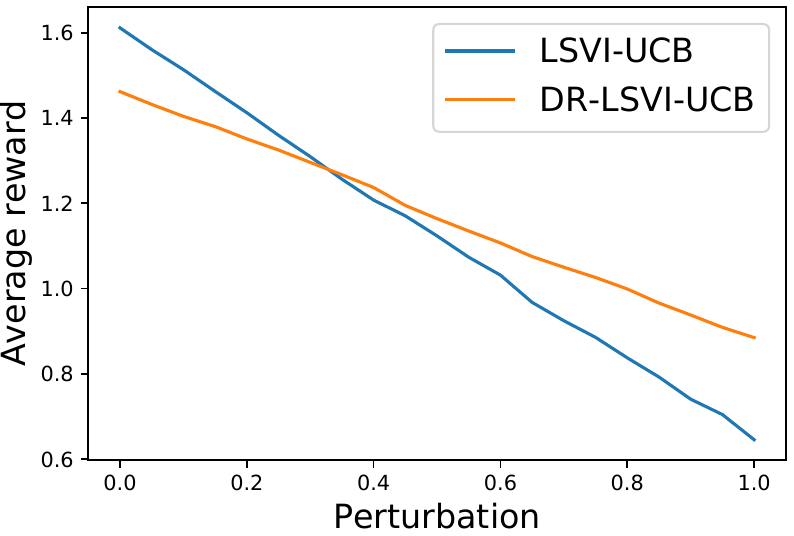}
      \label{fig:simulated_MDP_xi02_rho03}}
      \subfigure[$\Vert\xi\Vert_1 = 0.3$, $\rho_{1,4}=0.3$]{\includegraphics[scale=0.38]{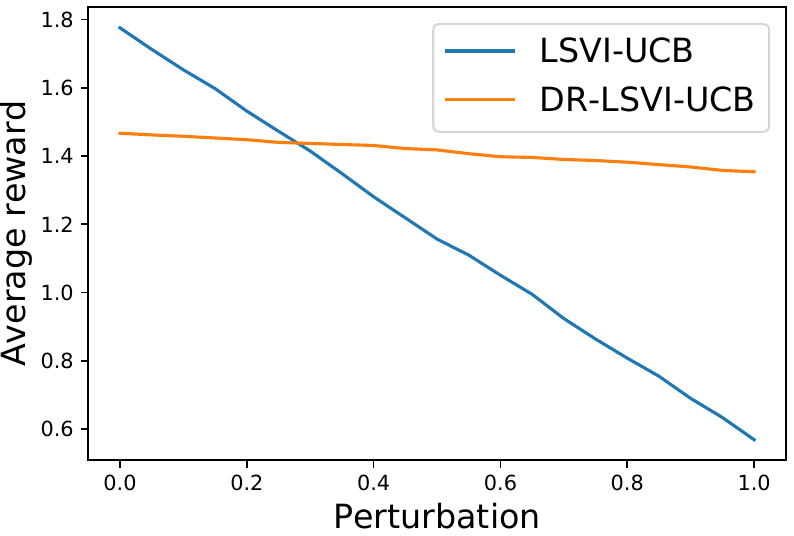}
      \label{fig:simulated_MDP_xi03_rho03}}
    \caption{Simulation results under different source domains. The $x$-axis represents the perturbation level corresponding to different target environments. $\rho_{1,4}$ is the input uncertainty level for our \algname\ algorithm.}
    \label{fig:simulation-results-app}
\end{figure*}

To interpret the results, we first delve deeper into the source linear MDP in \Cref{fig:mdp_5states}. Note that $x_5$ is an absorbing state, and $r_h(x_5, a)=1, ~\forall (h,a)\in[H]\times \cA$. For any $(s,a,h)\in \{x_1, x_2, x_3, x_4\}\times\cA\times[H]$, we have $r_h(s,a) \leq \delta+\|\xi\|_1 <1$.
Thus, the maximum reward is obtained from transitions starting from $x_5$, which can then be regarded as the goal state.
Thus, in the source domain, the optimal strategy at the first step is to take action $(1,1,1,1)$, which leads to the largest transition probability, $\delta+\|\xi\|_1$, to $x_5$. 
However, in target domains, if action $(1,1,1,1)$ is taken at the first step, it results in a probability of $(1-q)(\delta+\|\xi\|_1)$ for transitioning to state $x_5$, and also a non-negligible probability of $q(\delta+\|\xi\|_1)$ for transitioning to the fail state $x_4$.
Intuitively, when $q$ is large enough, action $(1,1,1,1)$ loses its advantage as it with high probability could cause a failure. Concretely, some calculation shows that when 
\begin{align}
\label{eq:critical point}
    q > \frac{4-2(\delta+\|\xi\|_1)(3-\delta-\|\xi\|_1)}{(4-2(\delta+\|\xi\|_1))},
\end{align}
the optimal action at the first step would be $(-1,-1,-1,-1)$, otherwise action $(1,1,1,1)$ would be the optimal action. 
Thus, the optimal policies learned in the source domain by the LSVI-UCB algorithm, which is non-robust, would fail in target domains where the perturbation level $q$ satisfies \eqref{eq:critical point}. This is consistent with our observation for all the settings in \Cref{fig:simulation-results-app}, where we see a significant performance drop of LSVI-UCB when the perturbation level increases.

In contrast, the performance of \algname\ is more robust to the dynamics shift between the source and target domains, as exemplified in \Cref{fig:simulated_MDP_xi01_rho05}. In scenarios where the MDP instance parameter $\xi$ remains the same, such as in \Cref{fig:simulated_MDP_xi01_rho03,fig:simulated_MDP_xi01_rho04,fig:simulated_MDP_xi01_rho05}, the performance of \algname\ gradually becomes more robust in the target domain as the uncertainty level, characterized by the parameter $\rho_{1,4}$, increases. This is because when $\rho_{1,4}$ is large enough, it become more likely that the uncertainty set considered by \algname\ will include the transition kernel of the target domain. This finding aligns with our theoretical analysis of the proposed \algname\ algorithm.

\section{PROOF OF MAIN RESULTS 
}
\label{sec:proof of prop}
In this section, we provide the proofs of the robust Bellman equation, the existence of the optimal robust policy, and the linear representation of the robust Q-function.
\subsection{Proof of \Cref{prop:Robust Bellman equation}} 
We first prove the robust Bellman equation for $d$-rectangular linear DRMDPs. Specifically, we will prove the following stronger statement: there exists a set of transition kernels $\tilde{P}^{\pi} = \{\tilde{P}^{\pi}_h\}_{h=1}^H$ satisfying $\tilde{P}_h^{\pi} \in \cU^{\rho}_h(P^0_h)$, such that
\begin{enumerate}
    \item Robust Bellman equation holds, 
    \begin{subequations}\label{eq:bellman_eq_stronger_result_recursion}
    \begin{align}
        V_h^{\pi, \rho}(s) &= \EE_{a\sim \pi_h(\cdot|s)}\big[Q_h^{\pi, \rho}(s,a)\big],\label{eq:bellman_eq_stronger_result_recursion_v}\\
        Q_h^{\pi, \rho}(s,a) &= r_h(s,a) + \inf_{P_h(\cdot|s,a)\in \cU_h^{\rho}(s,a;\bmu_h^0)}\EE_{s'\sim P_h(\cdot|s,a)}\big[V_{h+1}^{\pi, \rho}(s')\big].\label{eq:bellman_eq_stronger_result_recursion_q}
    \end{align}
    \end{subequations}
    \item The following expressions for robust value function and robust Q-function hold,
    \begin{subequations}\label{eq:bellman_eq_stronger_result_existence}
    \begin{align}
        V_h^{\pi, \rho}(s) &= V_h^{\pi,\{\tilde{P}_i^{\pi}\}_{i=h}^H}(s),\label{eq:bellman_eq_stronger_result_existence_v}\\
        Q_h^{\pi, \rho}(s,a) &= Q_h^{\pi,\{\tilde{P}_i^{\pi}\}_{i=h}^H }(s,a).  \label{eq:bellman_eq_stronger_result_existence_q}             
    \end{align}
    \end{subequations}    
\end{enumerate}

\begin{proof}
    We prove this proposition by induction. First, we start at the last stage $H$. The conclusion holds trivially because no transitions are involved. Suppose the conclusion holds for stage $h+1$, say there exist transition kernels $\{\tilde{P}_i^{\pi}\}_{i=h+1}^H$ such that 
    \begin{align}
    \label{eq:Prop BE - A.1}
        V_{h+1}^{\pi, \rho}(s) = V_{h+1}^{\pi, \{\tilde{P}_i^{\pi}\}_{i=h+1}^H}(s).
    \end{align}
    By the definition of $Q_h^{\pi,\rho}$, we have for any $(s,a)\in \cS \times \cA$, %
    \begin{align}
        Q_h^{\pi, \rho}(s,a) &= \inf_{P\in \cU^{\rho}(P^0)}\EE^{\{P_i\}_{i=h}^H}\Bigg[\sum_{i=h}^H r_i(s_i, a_i)\Big|s_h=s, a_h=a,\pi \Bigg]\\
        &= \inf_{P_i\in \cU_i^{\rho}(P_i^0), h\leq i\leq H}\EE^{\{P_i\}_{i=h}^H}\Bigg[\sum_{i=h}^H r_i(s_i, a_i)\Big|s_h=s, a_h=a,\pi \Bigg] \notag \\
        &= r_h(s,a)+\inf_{P_i\in \cU_i^{\rho}(P_i^0), h\leq i\leq H}\int_{\cS}P_h(ds'|s,a)\EE^{\{P_i\}_{i=h+1}^H}\Bigg[\sum_{i=h+1}^H r_i(s_i, a_i)\Big|s_{h+1}=s',\pi \Bigg] \notag \\
        &\leq r_h(s,a)+\inf_{P_h(\cdot|s,a)\in \cU_h^{\rho}(s,a;\bmu_h^0)}\int_{\cS}P_h(ds'|s,a)\EE^{\{\tilde{P}_i\}_{i=h+1}^H}\Bigg[\sum_{i=h+1}^H r_i(s_i, a_i)\Big|s_{h+1}=s',\pi \Bigg].
        \label{eq:Prop BE - A.2}
    \end{align}
    For $d$-rectangular linear DRMDP, the uncertainty sets $\{\cU_h^{\rho}(s,a;\bmu_h^0)\}_{(s,a)\in\cS\times\cA}$ are closed, and the factor uncertainty sets $\{\cU_{h,i}^{\rho}\}_{h,i=1}^{H,d}$ are decoupled from the state-action pair $(s,a)$. Thus, there exists a valid distribution $\tilde{P}_h^{\pi}$ such that for any $(s,a)\in\cS\times\cA$, %
    \begin{align}
    \label{eq:Prop BE - A.3}
        \tilde{P}_h^{\pi}(\cdot|s,a) =\arginf_{P_h(\cdot|s,a)\in \cU_h^{\rho}(s,a;\bmu^0_h)}\int_{\cS}P_h(ds'|s,a)\EE^{\{\tilde{P}_i\}_{i=h+1}^H}\Bigg[\sum_{i=h+1}^H r_i(s_i, a_i)\Big|s_{h+1}=s',\pi \Bigg].
    \end{align}
    Then by \eqref{eq:Prop BE - A.1} and the definition of $V_h^{\pi, \rho}$ and $V_h^{\pi, P}$, we have
    \begin{align}
        Q_h^{\pi, \rho}(s,a) &\leq r_h(s,a) + \inf_{P_h(\cdot|s,a) \in \cU_h^{\rho}(s,a;\bmu_h^0)}\int_{\cS}P_h(ds'|s,a)V_{h+1}^{\pi, \{\tilde{P}_i^{\pi}\}_{i=h+1}^H}(s')\label{eq:inf_p_h}\\
        &= r_h(s,a) + \inf_{P_h(\cdot|s,a) \in \cU_h^{\rho}(s,a;\bmu_h^0)}\int_{\cS}P_h(ds'|s,a)V_{h+1}^{\pi, \rho}(s') \label{eq:Prop BE - A.4}\\
        &=r_h(s,a) + \inf_{P_h(\cdot|s,a) \in \cU_h^{\rho}(s,a;\bmu_h^0)} \int_{\cS}P_h(ds'|s,a) \inf_{P_i\in \cU_i^{\rho}(P_i^0), h+1\leq i\leq H}V_{h+1}^{\pi, \{P_i\}_{i=h+1}^H}(s')\label{eq:inf_V_h+1}\\
        &=r_h(s,a)+\inf_{P_i\in \cU_i^{\rho}(P_i^0), h\leq i\leq H}\int_{\cS}P_h(ds'|s,a)V_{h+1}^{\pi, \{P_i\}_{i=h+1}^H}(s')\label{eq:Prop BE - A.5}\\
        &=r_h(s,a)+\inf_{P\in\cU^{\rho}(P^0)}\int_{\cS}P_h(ds'|s,a)V_{h+1}^{\pi, \{P_i\}_{i=h+1}^H}(s')\notag,
    \end{align}
    where \eqref{eq:inf_p_h} follows from \eqref{eq:Prop BE - A.2} and the definition of $V_{h+1}^{\pi, P}$, \eqref{eq:Prop BE - A.4} follows from \eqref{eq:Prop BE - A.1}, and \eqref{eq:inf_V_h+1} follows from the definition of $V_{h+1}^{\pi, \rho}$. Note that the RHS of \eqref{eq:Prop BE - A.5} equals to $Q_h^{\pi, \rho}(s,a)$. Therefore, all the inequalities are actually equations. On the other hand, from \eqref{eq:Prop BE - A.4} we have %
    \begin{align*}
        Q_h^{\pi, \rho}(s,a) = r_h(s,a)+\inf_{P_h(\cdot|s,a)\in \cU_h^{\rho}(s,a;\bmu_h^0)}\int_{\cS}P_h(ds'|s,a)V_{h+1}^{\pi, \rho}(s').
    \end{align*}
    This finishes the proof of Statement \eqref{eq:bellman_eq_stronger_result_recursion_q} for step $h$.

    On the other hand, by combining \eqref{eq:Prop BE - A.3} and \eqref{eq:Prop BE - A.2}, we have
    \begin{align}
    \label{eq:Prop BE - A.6}
        Q_h^{\pi, \rho}(s,a) = \EE^{\{\tilde{P}_i^{\pi}\}_{i=h}^H}\Bigg[\sum_{i=h}^Hr_i(s_i,a_i)\Big|s_h=s,a_h=a, \pi \Bigg] = Q_h^{\pi, \{\tilde{P}_i^{\pi}\}_{i=h}^H}(s,a),
    \end{align}
    which proves the existence of $\{\tilde{P}_i^{\pi}\}_{i=h}^H$ in Statement \eqref{eq:bellman_eq_stronger_result_existence_q}.

    Based on the existence of $\{\tilde{P}_i^{\pi}\}_{i=h}^H$, next we prove Statement \eqref{eq:bellman_eq_stronger_result_recursion_v} and Statement \eqref{eq:bellman_eq_stronger_result_existence_v}. By the definition of $V_h^{\pi, \rho}$, we have
    \begin{align}
        V_h^{\pi,\rho}(s) &= \inf_{P_i\in\cU_i^{\rho}(P_i^0), h\leq i\leq H}\EE^{\{P_i\}_{i=h}^H}\Bigg[\sum_{i=h}^H r_i(s_i,a_i)\Big|s_h=s, \pi\Bigg]\notag\\
        &=\inf_{P_i\in\cU_{i}^{\rho}(P_i^0), h\leq i\leq H}\sum_{a\in\cA}\pi_h(a|s)\EE^{\{P_i\}_{i=h}^H}\Bigg[\sum_{i=h}^H r_i(s_i,a_i)\Big|s_h=s,a_h=a, \pi\Bigg]\notag\\
        &\leq \sum_{a\in\cA}\pi_h(a|s)\EE^{\{\tilde{P}^{\pi}_i\}_{i=h}^H}\Bigg[\sum_{i=h}^H r_i(s_i,a_i)\Big|s_h=s,a_h=a, \pi\Bigg] \label{eq:Prop BE - A.7}.
    \end{align}
    By applying \eqref{eq:Prop BE - A.6} to \eqref{eq:Prop BE - A.7}, we further have 
    \begin{align}
        V_h^{\pi,\rho}(s) &\leq \sum_{a\in\cA}\pi_h(a|s)Q_h^{\pi, \rho}(s,a)\label{eq:Prop BE - A.8}\\
        &=\sum_{a\in\cA}\pi_h(a|s)\inf_{P_i\in\cU_i^{\rho}(P_i^0), h\leq i\leq H}\EE^{\{P_i\}_{i=h}^H}\Bigg[\sum_{i=h}^H r_i(s_i,a_i)\Big|s_h=s, a_h=a, \pi\Bigg]\label{eq:expand_Q_h}\\
        &=\inf_{P_i\in\cU_i^{\rho}(P_i^0), h\leq i\leq H}\sum_{a\in\cA}\pi_h(a|s)\EE^{\{P_i\}_{i=h}^H}\Bigg[\sum_{i=h}^H r_i(s_i,a_i)\Big|s_h=s, a_h=a, \pi\Bigg],
        \label{eq:Prop BE - A.9}
    \end{align}
    where \eqref{eq:expand_Q_h} follows form the definition of $Q_h^{\pi,\rho}$. Now note that the RHS of \eqref{eq:Prop BE - A.9} equals to $V_h^{\pi, \rho}(s)$. Therefore all the inequalities are actually equations. On the other hand, by \eqref{eq:Prop BE - A.8} we have
    \begin{align}
        \label{eq:Prop BE - A.10}
        V_h^{\pi,\rho}(s) =\sum_{a\in\cA}\pi_h(a|s)Q_h^{\pi, \rho}(s,a).
    \end{align}
    This proves \eqref{eq:bellman_eq_stronger_result_recursion_v} for stage $h$. By combining \eqref{eq:Prop BE - A.10} with \eqref{eq:Prop BE - A.6}, we further have
    \begin{align*}
        V_h^{\pi,\rho}(s) = \EE^{\{\tilde{P}^{\pi}_i\}_{i=h}^H}\Bigg[\sum_{i=h}^H r_i(s_i,a_i)\Big|s_h=s, \pi\Bigg].
    \end{align*}
    This proves Statement \eqref{eq:bellman_eq_stronger_result_existence_v} the $V_h^{\pi, \rho}$ for stage $h$. Finally, by using an induction argument, we can finish the proof of the Statement  \eqref{eq:bellman_eq_stronger_result_recursion} and \eqref{eq:bellman_eq_stronger_result_existence}. Thus, we finish the proof of \Cref{prop:Robust Bellman equation}.
\end{proof}

\subsection{Proof of \Cref{prop:Deterministic and stationary}}
We then prove the existence of the optimal robust policy for the $d$-rectangular linear DRMDP. 
\begin{proof}
    We first define a policy $\tilde{\pi}=\{\tilde{\pi}_h\}_{h=1}^H$ such that for all $h\in[H]$,
    \begin{align}
    \label{eq:Prop D&S - pi}
        \tilde{\pi}_h(s)=\argmax_{a\in\cA}\Bigg\{r_h(s,a)+\inf_{P_h(\cdot|s,a)\in \cU_h^{\rho}(s,a;\bmu_h^0)}\EE_{s'\sim P_h(\cdot|s,a)}V_{h+1}^{\star, \rho}(s)\Bigg\}.
    \end{align}
    Next we show that $\tilde{\pi}$ is optimal, i.e., for all $(h,s)\in[H]\times\cS$,
    \begin{align*}
        V_h^{\tilde{\pi},\rho}(s) = V_h^{\star, \rho}(s).
    \end{align*}
    We prove this by induction. For the last stage $H$, the conclusion holds trivially:
    \begin{align*}
        V_H^{\star, \rho}(s)=\sup_{\pi\in\Pi}V_{H}^{\pi, \rho}(s)=\sup_{\pi\in\Pi}\EE\big[r_H(s_H,a_H)|s_H=s,\pi\big] = \max_{a\in\cA}r_{H}(s,a)=V_H^{\tilde{\pi},\rho}(s).
    \end{align*}
    Now suppose that the conclusion hold for stage $h+1$, i.e., for all $s\in\cS$
    \begin{align*}
        V_{h+1}^{\tilde{\pi}, \rho}(s) = V_{h+1}^{\star, \rho}(s).
    \end{align*}
    By \Cref{prop:Robust Bellman equation}, we have
    \begin{align}
        V_h^{\tilde{\pi}, \rho}(s) &= \EE_{a\sim\tilde{\pi}_h(\cdot|s)}\Big[Q_h^{\tilde{\pi}, \rho}(s,a) \Big]\notag\\
        &=\EE_{a\sim\tilde{\pi}_h(\cdot|s)}\bigg[r_h(s,a)+\inf_{P_h(\cdot|s,a)\in\cU_h^{\rho}(s,a;\bmu^0_h)}\EE_{s'\sim P_h(\cdot|s,a)}\Big[V_{h+1}^{\tilde{\pi}, \rho}(s')\Big]\bigg]\notag\\
        &= \EE_{a\sim\tilde{\pi}_h(\cdot|s)}\bigg[r_h(s,a)+\inf_{P_h(\cdot|s,a)\in\cU_h^{\rho}(s,a;\bmu^0_h)}\EE_{s'\sim P_h(\cdot|s,a)}\big[V_{h+1}^{\star, \rho}(s)\big] \bigg]\label{eq:substitute_V_as_V_star}\\
        &= \max_{a\in\cA}\bigg[r_h(s,a)+\inf_{P_h(\cdot|s,a)\in\cU_h^{\rho}(s,a;\bmu^0_h)}\EE_{s'\sim P_h(\cdot|s,a)}\big[V_{h+1}^{\star, \rho}(s) \big]\bigg],
        \label{eq:Prop D&S - one side}
    \end{align}
    where %
    \eqref{eq:substitute_V_as_V_star} follows from the induction assumption and \eqref{eq:Prop D&S - one side} follows from the definition of $\tilde{\pi}_h$ in \eqref{eq:Prop D&S - pi}.

    On the other hand, by the definition of $V^{\star, \rho}_h(s)$, for any $s\in\cS$, we have
    \begin{align}
        V_h^{\star, \rho}(s) &= \sup_{\pi \in \Pi}V_h^{\pi, \rho}(s)\notag \\
        &=\sup_{\pi \in \Pi} \EE_{a\sim\pi_h(\cdot|s)}\big[Q_h^{\pi, \rho}(s,a)\big]\label{eq:V_to_Q}  \\
        &=\sup_{\pi \in \Pi}\EE_{a\sim\pi_h(\cdot|s)}\bigg[r_h(s,a)+\inf_{P_h(\cdot|s,a)\in \cU_h^{\rho}(s,a;\bmu^0_h)}\EE_{s'\sim P_h(\cdot|s,a)}\big[V_{h+1}^{\pi, \rho}(s')\big] \bigg]\label{eq:expand_Q_h_rho}\\
        &\leq \sup_{\pi \in \Pi}\EE_{a\sim\pi_h(\cdot|s)}\bigg[r_h(s,a)+\inf_{P_h(\cdot|s,a)\in \cU_h^{\rho}(s,a;\bmu^0_h)}\EE_{s'\sim P_h(\cdot|s,a)}\big[V_{h+1}^{\star, \rho}(s')\big] \bigg]\label{eq:substitute_V_as_V_star_rho} \\
        &=\max_{a\in\cA}\EE\bigg[r_h(s,a)+\inf_{P_h(\cdot|s,a)\in\cU_h^{\rho}(s,a;\bmu^0_h)}\EE_{s'\sim P_h(\cdot|s,a)}\big[V_{h+1}^{\star, \rho}(s) \big]\bigg]\notag,
    \end{align}
    where \eqref{eq:V_to_Q} and \eqref{eq:expand_Q_h_rho} follow from \Cref{prop:Robust Bellman equation}, \eqref{eq:substitute_V_as_V_star_rho} is due to the fact that $V_{h+1}^{\star, \rho}(s') \geq V_{h+1}^{\pi, \rho}(s'), ~\forall s'\in\cS$. Then by \eqref{eq:Prop D&S - one side}, we have $V_h^{\star, \rho}(s)\leq V_h^{\pi, \rho}(s),\forall s\in\cS$. Trivially, we also have $V_h^{\star, \rho}(s)\geq V_h^{\pi, \rho}(s) $ holds for all $s\in\cS$. Consequently, we obtain $V_h^{\star, \rho}(s)= V_h^{\pi, \rho}(s), ~\forall s\in \cS$. By using an induction argument, we finish the proof.
\end{proof}

\subsection{Proof of \Cref{prop:Linear form}}
Next, we prove that 
for any policy $\pi$, the robust Q-function $Q_h^{\pi, \rho}(\cdot,\cdot)$ is always linear with respect to the feature mapping $\bphi(\cdot, \cdot)$. 
Before presenting the proof, we first recall and define some notions. First recall the fail state that is denoted as $s_f$. The feature mapping $\tilde{\bphi}:\cS\times\cA \rightarrow \RR^{d+1}$ is defined as 
\begin{align*}
    &\tilde{\bphi}(s_f,a)=[1,0,\cdots, 0]^{\top},\quad  a\in\cA,\\
    &\tilde{\bphi}(s,a)=[0,\bphi(s,a)^{\top}]^{\top}, \quad \forall(s,a)\in \cS/\{s_f\}\times\cA.
\end{align*}
Accordingly, we define
\begin{align*}
    \tilde{\btheta}_h = \big[0, \btheta_h^{\top}\big]^{\top}, ~ \tilde{\bmu}_h^0(\cdot)=\big[\delta_{s_f}(\cdot), \bmu_h^0(\cdot)^{\top}\big]^{\top}, 
\end{align*}
where $\delta_{s_f}$ is the delta distribution with mass at $s_f$.
Then the reward function $\{\tilde{r}_h\}_{h=1}^{H}$ and nominal transition kernel $\tilde{P}^0=\{\tilde{P}^0_h\}_{h=1}^H$ have the following structures:
\begin{align}
\label{eq:linear structure with fail state}
    \tilde{r}_h(s,a) = \la \tilde{\bphi}(s,a), \tilde{\btheta}_h\ra, ~ \tilde{P}^0_h(\cdot|s,a) = \la \tilde{\bphi}(s,a), \tilde{\bmu}_h^0(\cdot)\ra, \quad \forall (h,s,a)\in [H] \times \cS \times \cA.
\end{align}

Given uncertainty level $\rho$, the uncertainty set centered around the nominal transition kernel $\{\tilde{P}^0_h\}_{h=1}^H$ is defined as 
\begin{align*}
    &\tilde{\cU}^{\rho}(\tilde{P}^0)=\bigotimes_{h\in [H]}\tilde{\cU}_h^{\rho}(\tilde{P}^0_h), ~\tilde{\cU}_h^{\rho}(\tilde{P}^0_h)=\bigotimes_{(s,a)\in\cS\times\cA}\tilde{\cU}_h^{\rho}(s,a;\tilde{\bmu}_h^0),\\
    &\tilde{\cU}_{h,i}^{\rho}(s,a;\tilde{\mu}_{h,i}^0) = \bigg\{\sum_{i=1}^{d+1}\tilde{\phi}_i(s,a)\tilde{\mu}_{h,i}(\cdot):\tilde{\mu}_{h,i}\in \tilde{\cU}_{h,i}^{\rho}(\tilde{\mu}_{h,i}^0), \forall i\in [d+1] \bigg\},\\
    &\tilde{\cU}_{h,1}^{\rho}(\tilde{\mu}_{h,1}^0)=\delta(s_f), ~
    \tilde{\cU}_{h,i}^{\rho}(\tilde{\mu}_{h,i}^0)=\big\{\tilde{\mu}_{h,i}:\tilde{\mu}_{h,i}\in \Delta(
    \cS), D_{TV}(\tilde{\mu}_{h,i}||\tilde{\mu}_{h,i}^0)\leq \rho\big\}, \quad i \in [d+1]/\{1\}.
\end{align*}
Further, we denote $[x_i]_{i\in[d]}$ as a vector with the $i$-th entry being $x_i$.
Using these notions, we are ready to prove \Cref{prop:Linear form}.
\begin{proof}
    Based on the \Cref{prop:strong duality for TV} and the linear MDP structure in \eqref{eq:linear structure with fail state}, the robust Bellman equation can be written as 
\begin{align}
    Q_h^{\pi,\rho}(s,a) &= \tilde{r}_h(s,a) + \inf_{\tilde{P}_h(\cdot|s,a)\in \tilde{\cU}_{h}^{\rho}(s,a;\tilde{\bmu}^0_h)} \EE_{s'\sim \tilde{P}_h(\cdot|s,a)}V_{h+1}^{\pi, \rho}(s')\notag\\
    & = \big\la \tilde{\bphi}(s,a),  \tilde{\btheta}_h\big\ra + \inf_{\tilde{\mu}_{h,i}\in\tilde{\cU}_{h,i}^{\rho}(\tilde{\mu}_{h,i}^0),~ i\in[d+1]}\Big\la \tilde{\bphi}(s,a), \big[\EE_{s'\sim\tilde{\mu}_{h,i}}V_{h+1}^{\pi,\rho}(s')\big]_{i\in[d+1]} \Big\ra\notag\\
    &=\bigg\la \tilde{\bphi}(s,a), \tilde{\btheta}_h+\bigg[\inf_{\tilde{\mu}_{h,i}\in\tilde{\cU}_{h,i}^{\rho}(\tilde{\mu}_{h,i}^0)}\EE_{s'\sim\tilde{\mu}_{h,i}}V_{h+1}^{\pi,\rho}(s')\bigg]_{i\in[d+1]}\bigg\ra \label{eq:move_inf_inside} \\
    &=\bigg\la \tilde{\bphi}(s,a), \tilde{\btheta}_h+\bigg[\max_{\alpha\in[0,H]}\Big\{\EE^{\tilde{\mu}_{h,i}^0}\Big[V_{h+1}^{\pi,\rho}\Big]_{\alpha}-\rho\alpha\Big\} \bigg]_{i\in[d+1]}
    \bigg\ra\notag\\
    &=\big\la \tilde{\bphi}(s,a), \tilde{\btheta}_h+\tilde{\bnu}_h^{\pi,\rho} \big\ra\notag,
\end{align}
where $\tilde{\bnu}_h^{\pi, \rho}=[\tilde{\nu}_{h,i}^{\pi, \rho}]_{i\in[d+1]}$, $\tilde{\nu}_{h,i}^{\pi, \rho}=\max_{\alpha\in[0,H]}\{\tilde{z}_{h,i}^{\pi}(\alpha)-\rho\alpha\}$,  $\tilde{z}_{h,i}^{\pi}(\alpha)=\EE^{\tilde{\mu}_{h,i}^0}[V_{h+1}^{\pi, \rho}(s') ]_{\alpha}$, and \eqref{eq:move_inf_inside} holds due to the fact that $\tilde{\bphi}(s,a)\geq 0$ and $\{\tilde{\mu}_{h,i}\}_{i\in[d+1]}$ are independent across dimensions, and thus the infimum can be moved elementwisely into the inner product. Note that $\tilde{\theta}_{h,1}=0$ and $\tilde{\nu}_{h,1}^{\pi, \rho}=0$, we have
\begin{align*}
    Q_h^{\pi,\rho}(s,a) = \big\la \phi(s,a), \btheta_h +\bnu_h^{\pi, \rho} \big\ra \ind\{s \neq s_f\},
\end{align*}
where $\bnu_h^{\pi, \rho}=[\nu_{h,i}^{\pi, \rho}]_{i\in[d]}$, $\nu_{h,i}^{\pi, \rho}=\max_{\alpha\in[0,H]}\{z_{h,i}^{\pi}(\alpha)-\rho\alpha\}$, and $z_{h,i}^{\pi}(\alpha)=\EE^{\mu_{h,i}^0}[V_{h+1}^{\pi, \rho}(s') ]_{\alpha}$.
\end{proof}

\section{PROOF OF THE MAIN RESULTS}

In this section, we provide the proofs of our main theoretical results presented in \Cref{sec:Algorithm and theoretical analysis}.

\noindent \textbf{Notation:} Throughout this section, we denote value function as $V_h^{k, \rho}(s)=\max_{a}Q_h^{k, \rho}(s,a)$, feature vector $\bphi_h^k=\bphi(s_h^k, a_h^k)$. For a vector $\bx$, we denote $(\bx)_j$ as its $j$-th entry. And we denote $[x_i]_{i\in [d]}$ as a vector with the $i$-th entry being $x_i$. 
For two $d$ dimensional vectors $\ba$ and $\bb$, we denote $\ba \leq \bb$ as the fact that $a_i-b_i \leq 0, \forall i \in [d]$.
For a matrix $A$, denote $\lambda_i(A)$ as the $i$-th eigenvalue of $A$. For two matrices $A$ and $B$, we denote $A\leq B$ as the fact that $B-A$ is a positive semidefinite matrix. 

\subsection{Proof of \Cref{th:DRLSVIUCB}}\label{sec:proof of th}
To begin with, we provide the technical lemmas that will be useful in our proof. The following concentration lemma bounds the error of the least-squares value iteration.
\begin{restatable}{lemma}{concentration}
\label{lemma:Concentration}
    Under the setting of \Cref{th:DRLSVIUCB}, let $c_{\beta}$ be the constant in our definition of $\beta$. There exists an absolute constant $C$ that is independent of $c_{\beta}$ such that for any $p\in[0,1]$, if we let $\cE$ be the event that for any $(k,h)\in [K]\times[H]$,
    \begin{align*}
        \Bigg\Vert \sum_{\tau=1}^{k-1}\bphi_h^{\tau}\bigg[\Big[V_{h+1}^{k,\rho}(s_{h+1}^{\tau})\Big]_{\alpha}-\Big[\PP_h^0\Big[V_{h+1}^{k, \rho}\Big]_{\alpha}\Big] (s_h^{\tau}, a_h^{\tau})\bigg]
        \Bigg\Vert_{(\Lambda_h^k)^{-1}}^2 \leq C\cdot d^2H^2\log[3(c_{\beta}+1)dT/p], 
    \end{align*}
    then $\PP(\cE)\geq 1-p/3$. 
\end{restatable}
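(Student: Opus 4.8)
The plan is to establish this uniform self-normalized concentration bound by the standard ``fixed-function plus covering'' argument of \citet{jin2020provably}, adapted to the $d$-rectangular value-function class and the extra dual truncation variable $\alpha$. Throughout write $T=KH$ and fix $\lambda=1$ as in the theorem.

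\textbf{Step 1: concentration for a fixed target.} First I would freeze a function $V:\cS\to[0,H]$, a level $\alpha\in[0,H]$, and indices $(k,h)$, and set $\epsilon_\tau:=[V(s_{h+1}^\tau)]_\alpha-[\PP_h^0[V]_\alpha](s_h^\tau,a_h^\tau)$. Since $[V]_\alpha\in[0,H]$ and the nominal transition generates $s_{h+1}^\tau$ from $(s_h^\tau,a_h^\tau)$, the sequence $\{\epsilon_\tau\}$ is a martingale difference sequence adapted to the trajectory filtration, with $|\epsilon_\tau|\le H$. Applying the self-normalized tail bound for vector-valued martingales \citep[cf.][]{abbasi2011improved}, together with $\det(\Lambda_h^k)\le(\lambda+k)^d$ and $\det(\lambda\mathbf I)=\lambda^d$, gives that with probability at least $1-\delta$,
\[
\Big\|\sum_{\tau=1}^{k-1}\bphi_h^\tau\,\epsilon_\tau\Big\|_{(\Lambda_h^k)^{-1}}^2\le C_0 H^2\Big(\tfrac{d}{2}\log\tfrac{k+\lambda}{\lambda}+\log\tfrac1\delta\Big).
\]

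\textbf{Step 2: the function class and its covering.} The difficulty is that the target $V_{h+1}^{k,\rho}$ is itself estimated from the data, so Step 1 cannot be invoked directly at it. I would identify the parametric class containing it: by the $Q$-update in \Cref{alg:DR-LSVI-UCB},
\[
V_{h+1}^{k,\rho}(s)=\max_{a\in\cA}\min\Big\{\langle\bphi(s,a),w\rangle+\beta\textstyle\sum_{i=1}^d\phi_i(s,a)\sqrt{\mathbf 1_i^\top A\,\mathbf 1_i},\,H\Big\}_+\ind\{s\neq s_f\},
\]
with $w=\btheta_{h+1}+\bnu_{h+1}^{\rho,k}$ and $A=(\Lambda_{h+1}^k)^{-1}\preceq\mathbf I$. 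I would bound $\|w\|_2\le L$ with $L$ polynomial in $d,H,\beta$ (hence in $c_\beta$ since $\beta=c_\beta dH\sqrt\iota$), and build an $\varepsilon$-net over the triple $(w,A,\alpha)$; using that $[\,\cdot\,]_\alpha$ is $1$-Lipschitz in $\alpha$ and the value is Lipschitz in $(w,A)$, the log-covering number obeys
\[
\log N_\varepsilon\le d\log\!\big(1+L/\varepsilon\big)+d^2\log\!\big(1+C_1\sqrt d/\varepsilon\big)+\log(H/\varepsilon),
\]
where the $d^2$ term arises from covering the $d\times d$ matrix $A$.

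\textbf{Step 3: union bound, discretization, and assembly.} I would instantiate Step 1 with $\delta=(p/3)/N_\varepsilon$ and take a union bound over the net and over all $KH$ pairs $(k,h)$. For the realized pair $(V_{h+1}^{k,\rho},\alpha)$, picking the nearest net element $(\tilde V,\tilde\alpha)$ gives $\sup_s\big|[V_{h+1}^{k,\rho}]_\alpha(s)-[\tilde V]_{\tilde\alpha}(s)\big|\le\varepsilon$, so by $\|\bphi_h^\tau\|\le1$ the discretization mismatch in the normalized norm is at most $4k^2\varepsilon^2/\lambda$. Choosing $\varepsilon=dH/k$ renders this $O(d^2H^2)$, and substituting the covering bound into the fixed-target estimate collapses everything to $C\,d^2H^2\log[3(c_\beta+1)dT/p]$, with the factor $(c_\beta+1)$ recording the dependence of $L$ and the net radius on $\beta$. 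The main obstacle is Step 2: pinning down the norm bounds on $w$ and $A$ and computing the covering number of the $d$-rectangular value class so that the $d^2$, $H^2$, and $c_\beta$ dependencies emerge exactly, while verifying that the martingale property persists for the per-episode re-estimated value functions.
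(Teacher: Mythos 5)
Your proposal is correct and follows essentially the same route as the paper's proof: a self-normalized martingale bound \citep[Theorem 1]{abbasi2011improved} for fixed targets, an $\varepsilon$-net over the triple (weight vector, bonus matrix, truncation level $\alpha$) whose matrix component yields the $d^2$ term in the log-covering number, a union bound plus discretization error of order $k^2\varepsilon^2/\lambda$, and the choice $\varepsilon=dH/k$ — exactly the structure the paper packages into its Lemmas on bounded weights, uniform concentration, and covering numbers. The only cosmetic differences are that your displayed covering bound omits the $\beta^2$ (hence $c_\beta$) dependence in the matrix term, which your prose correctly reinstates, and that the weight bound $L$ need not depend on $\beta$ (the paper shows $\Vert w\Vert_2\le 2H\sqrt{dk/\lambda}$); neither affects the argument.
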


The following lemma states that $Q_h^{k, \rho}$ in \Cref{alg:DR-LSVI-UCB} can always be an upper bound of $Q^{\star, \rho}_h$ with high confidence.
\begin{restatable}{lemma}{UCB}
\label{lemma:UCB}
    (UCB) Under the setting of \Cref{th:DRLSVIUCB}, on the event $\cE$ defined in \Cref{lemma:Concentration}, we have 
    \begin{align*}
        \forall (s,a,h,k) \in \cS \times \cA \times [H] \times [K],~ Q_h^{k, \rho}(s,a) \geq Q^{\star, \rho}_h(s,a).
    \end{align*}
\end{restatable}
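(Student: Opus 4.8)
The plan is to establish optimism by \textbf{backward induction on $h$}, from $h=H+1$ down to $h=1$, always working on the event $\cE$ of \Cref{lemma:Concentration}. The base case $h=H+1$ is trivial since both $Q_{H+1}^{k,\rho}$ and $Q_{H+1}^{\star,\rho}$ are identically zero. For the inductive step I assume $Q_{h+1}^{k,\rho}(s,a)\ge Q_{h+1}^{\star,\rho}(s,a)$ for all $(s,a)$; taking $\max_a$ on both sides immediately yields the monotonicity $V_{h+1}^{k,\rho}(s)\ge V_{h+1}^{\star,\rho}(s)$ for every $s$, which is the engine that drives the rest of the argument.

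The core step is to show $\bphi(s,a)^\top\bnu_h^{\rho,k}+\Gamma_h^k(s,a)\ge \bphi(s,a)^\top\bnu_h^{\star,\rho}$, where by \Cref{prop:Linear form} one has $Q_h^{\star,\rho}(s,a)=\bphi(s,a)^\top(\btheta_h+\bnu_h^{\star,\rho})$ on $\{s\ne s_f\}$ with $\nu_{h,i}^{\star,\rho}=\max_{\alpha\in[0,H]}\{z_{h,i}^{\star}(\alpha)-\rho\alpha\}$ and $z_{h,i}^{\star}(\alpha)=\EE^{\mu_{h,i}^0}[V_{h+1}^{\star,\rho}]_\alpha$. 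I would argue coordinate by coordinate. Introducing the population regression target $\bar z_{h,i}^k(\alpha)=\EE^{\mu_{h,i}^0}[V_{h+1}^{k,\rho}]_\alpha$, I split $z_{h,i}^k(\alpha)-z_{h,i}^{\star}(\alpha)=\big(z_{h,i}^k(\alpha)-\bar z_{h,i}^k(\alpha)\big)+\big(\bar z_{h,i}^k(\alpha)-z_{h,i}^{\star}(\alpha)\big)$. The second term is nonnegative by the induction hypothesis, because the truncation $[\,\cdot\,]_\alpha$ is monotone and expectation preserves order. For the first term I expand $z_{h,i}^k(\alpha)=\mathbf{1}_i^\top(\Lambda_h^k)^{-1}\sum_\tau\bphi_h^\tau[V_{h+1}^{k,\rho}(s_{h+1}^\tau)]_\alpha$ and rewrite $\bar\bz_h^k(\alpha)=(\Lambda_h^k)^{-1}\big(\sum_\tau\bphi_h^\tau(\bphi_h^\tau)^\top+\lambda I\big)\bar\bz_h^k(\alpha)$, which cleanly isolates a noise part and a regularization-bias part. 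Cauchy--Schwarz in the $(\Lambda_h^k)^{-1}$-norm bounds the noise part by $\|\mathbf{1}_i\|_{(\Lambda_h^k)^{-1}}=\sqrt{\mathbf{1}_i^\top(\Lambda_h^k)^{-1}\mathbf{1}_i}$ times the quantity controlled on $\cE$ by \Cref{lemma:Concentration}, and the bias part by $\lambda\|\mathbf{1}_i\|_{(\Lambda_h^k)^{-1}}\|\bar\bz_h^k(\alpha)\|_{(\Lambda_h^k)^{-1}}$ with $\|\bar\bz_h^k(\alpha)\|_2\le \sqrt d\,H$. Crucially these bounds are \emph{uniform in $\alpha$}, so with $\beta=c\,dH\sqrt\iota$ chosen large enough I obtain $z_{h,i}^k(\alpha)\ge z_{h,i}^{\star}(\alpha)-\beta\sqrt{\mathbf{1}_i^\top(\Lambda_h^k)^{-1}\mathbf{1}_i}$ for every $\alpha\in[0,H]$.

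Applying $\max_{\alpha\in[0,H]}\{\,\cdot-\rho\alpha\}$ to both sides of this pointwise inequality preserves it, giving $\nu_{h,i}^{\rho,k}\ge \nu_{h,i}^{\star,\rho}-\beta\sqrt{\mathbf{1}_i^\top(\Lambda_h^k)^{-1}\mathbf{1}_i}$ for each $i$. Multiplying by $\phi_i(s,a)\ge 0$ and summing over $i$ reconstructs exactly the robust bonus $\Gamma_h^k(s,a)=\beta\sum_i\phi_i(s,a)\sqrt{\mathbf{1}_i^\top(\Lambda_h^k)^{-1}\mathbf{1}_i}$, yielding $\bphi(s,a)^\top(\btheta_h+\bnu_h^{\rho,k})+\Gamma_h^k(s,a)\ge Q_h^{\star,\rho}(s,a)$. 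I then reconcile this with the algorithm's truncated update $Q_h^{k,\rho}=\min\{\,\cdot\,,\,H-h+1\}_+\ind\{s\ne s_f\}$: since $0\le Q_h^{\star,\rho}(s,a)\le H-h+1$, neither the clipping at $H-h+1$ nor the nonnegative part can push $Q_h^{k,\rho}$ below $Q_h^{\star,\rho}$, and at the fail state both sides vanish by \Cref{remark:dual with fail state}. This closes the induction.

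I would flag two places as the main obstacle. The most delicate is maintaining \emph{uniformity over the dual variable $\alpha$}: the bonus must dominate the estimation error simultaneously for all $\alpha\in[0,H]$ so that the $\max_\alpha$ operation can transport the pointwise inequality to the $\nu$'s; relatedly, \Cref{lemma:Concentration} must already absorb (via a covering argument) the data-dependence of both $\alpha$ and the value function $V_{h+1}^{k,\rho}$, so one must take care to invoke it on the correct, self-consistent function class. The second subtlety, specific to the $d$-rectangular structure, is that the error decomposes into $d$ separate per-coordinate ridge regressions and must be matched against the nonstandard bonus $\sum_i\phi_i\sqrt{\mathbf{1}_i^\top(\Lambda_h^k)^{-1}\mathbf{1}_i}$ rather than a single $\|\bphi\|_{(\Lambda_h^k)^{-1}}$ term; keeping the coordinate-wise accounting aligned with $\phi_i\ge 0$ and $\sum_i\phi_i=1$ is precisely what lets the bonus exactly absorb the per-coordinate estimation errors.
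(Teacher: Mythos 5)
Your proposal is correct and follows essentially the same route as the paper: backward induction in $h$, with the per-coordinate split $z_{h,i}^k - z_{h,i}^{\star} = (z_{h,i}^k - \bar z_{h,i}^k) + (\bar z_{h,i}^k - z_{h,i}^{\star})$, the noise-plus-regularization-bias bound via Cauchy--Schwarz in the $(\Lambda_h^k)^{-1}$-norm on the event $\cE$, monotonicity from the induction hypothesis, and the truncation argument at the end. The only difference is organizational: the paper factors your ``core step'' into a standalone result (\Cref{lemma:Bounded difference}, stated for an arbitrary policy $\pi$ and handling the maximizing $\alpha$ via ``max of differences'' rather than your uniform-in-$\alpha$ pointwise bound), and then invokes it with $\pi=\pi^{\star}$ inside the induction, which is mathematically equivalent to what you do inline.
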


Next, we present a recursive formula, which is useful in proving \Cref{th:DRLSVIUCB}.
\begin{restatable}{lemma}{RecursiveFormula}
\label{lemma:Recursive Formula}
(Recursive Formula) Let $\delta_{h}^{k, \rho} = V_h^{k, \rho}(s_h^k) - V_h^{\pi^k, \rho}(s_h^k)$, and
\begin{align*}
   \zeta_{h+1}^{k, \rho} = \EE_{s \sim P_h(\cdot|s_h^k, a_h^k)}\big[V_{h+1}^{k, \rho}(s) - V_{h+1}^{\pi^k, \rho}(s) \big]  - \delta_{h+1}^{k, \rho}.
\end{align*}
Then on the event defined in \Cref{lemma:Concentration}, we have the following: for any $(k,h)\in [K]\times[H]$:
\begin{align*}
    \delta_h^{k, \rho} \leq \delta_{h+1}^{k, \rho} + \zeta_{h+1}^{k, \rho} + 2\beta\sum_{i=1}^d\sqrt{\phi_{h,i}^k\mathbf{1}_i^\top (\Lambda_h^k)^{-1}\phi_{h,i}^k\mathbf{1}_i}.
\end{align*}
\end{restatable}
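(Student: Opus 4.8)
The plan is to unroll $\delta_h^{k,\rho}$ by one step and reduce the whole estimate to a comparison between the algorithm's weight vector $\bnu_h^{\rho,k}$ and the population weight vector $\bnu_h^{\pi^k,\rho}$. First I would use that $\pi^k$ is the greedy, hence deterministic, policy for $Q_h^{k,\rho}$, so with $a_h^k=\pi_h^k(s_h^k)$ we have $V_h^{k,\rho}(s_h^k)=Q_h^{k,\rho}(s_h^k,a_h^k)$, while the robust Bellman equation (\Cref{prop:Robust Bellman equation}) gives $V_h^{\pi^k,\rho}(s_h^k)=Q_h^{\pi^k,\rho}(s_h^k,a_h^k)$; thus $\delta_h^{k,\rho}=Q_h^{k,\rho}(s_h^k,a_h^k)-Q_h^{\pi^k,\rho}(s_h^k,a_h^k)$. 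If $s_h^k=s_f$ both terms vanish by \Cref{remark:dual with fail state} and the claim holds since the right-hand side is nonnegative, using $V_{h+1}^{k,\rho}\ge V_{h+1}^{\pi^k,\rho}$ from \Cref{lemma:UCB}; so I focus on $s_h^k\neq s_f$. Dropping the outer $\min\{\cdot,H-h+1\}_+$ truncation in \Cref{alg:DR-LSVI-UCB} (it can only decrease $Q_h^{k,\rho}$, and the degenerate negative case gives the trivial bound $\delta_h^{k,\rho}\le 0$) and inserting the linear forms of \Cref{prop:Linear form}, the reward terms $\la\bphi,\btheta_h\ra$ cancel and I am left with $\delta_h^{k,\rho}\le \la\bphi(s_h^k,a_h^k),\bnu_h^{\rho,k}-\bnu_h^{\pi^k,\rho}\ra+\Gamma_h^k(s_h^k,a_h^k)$.

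The core is bounding $\la\bphi(s_h^k,a_h^k),\bnu_h^{\rho,k}-\bnu_h^{\pi^k,\rho}\ra=\sum_{i=1}^d\phi_{h,i}^k(\nu_{h,i}^{\rho,k}-\nu_{h,i}^{\pi^k,\rho})$. Since each coordinate is a maximum over $\alpha$, I use $\max_\alpha f-\max_\alpha g\le\max_\alpha(f-g)$ to get $\nu_{h,i}^{\rho,k}-\nu_{h,i}^{\pi^k,\rho}\le\max_{\alpha\in[0,H]}\big(z_{h,i}^k(\alpha)-z_{h,i}^{\pi^k}(\alpha)\big)$, where $z_{h,i}^k(\alpha)=\mathbf{1}_i^\top\bz_h^k(\alpha)$ is the ridge estimate from \eqref{eq:closed form} and $z_{h,i}^{\pi^k}(\alpha)=\EE^{\mu_{h,i}^0}[V_{h+1}^{\pi^k,\rho}]_\alpha$ the truth. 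Writing $\bz_h^{\pi^k}(\alpha)$ for the true weight of $[V_{h+1}^{\pi^k,\rho}]_\alpha$ and using $\Lambda_h^k\bz_h^{\pi^k}(\alpha)=\sum_\tau\bphi_h^\tau[\PP_h^0[V_{h+1}^{\pi^k,\rho}]_\alpha](s_h^\tau,a_h^\tau)+\lambda\bz_h^{\pi^k}(\alpha)$, I decompose $\bz_h^k(\alpha)-\bz_h^{\pi^k}(\alpha)$ into a noise piece with summand $[V_{h+1}^{k,\rho}(s_{h+1}^\tau)]_\alpha-[\PP_h^0[V_{h+1}^{k,\rho}]_\alpha](s_h^\tau,a_h^\tau)$, a bias piece with summand $[\PP_h^0([V_{h+1}^{k,\rho}]_\alpha-[V_{h+1}^{\pi^k,\rho}]_\alpha)](s_h^\tau,a_h^\tau)$, and a regularization piece $-\lambda(\Lambda_h^k)^{-1}\bz_h^{\pi^k}(\alpha)$.

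For the bias piece I exploit the linear structure: replacing $[\PP_h^0[\cdot]_\alpha]$ by $\la\bphi_h^\tau,\bar\bz_h^k(\alpha)-\bz_h^{\pi^k}(\alpha)\ra$, where $\bar\bz_h^k(\alpha)$ is the true weight of $[V_{h+1}^{k,\rho}]_\alpha$, and using $\sum_\tau\bphi_h^\tau(\bphi_h^\tau)^\top=\Lambda_h^k-\lambda I$, the $\mathbf{1}_i$-projected bias piece collapses to $\mathbf{1}_i^\top(\bar\bz_h^k(\alpha)-\bz_h^{\pi^k}(\alpha))-\lambda\mathbf{1}_i^\top(\Lambda_h^k)^{-1}(\bar\bz_h^k(\alpha)-\bz_h^{\pi^k}(\alpha))$. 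The first term equals $\EE^{\mu_{h,i}^0}([V_{h+1}^{k,\rho}]_\alpha-[V_{h+1}^{\pi^k,\rho}]_\alpha)$, which is nonnegative by \Cref{lemma:UCB} and, since $x\mapsto\min\{x,\alpha\}$ is a nondecreasing contraction, is at most $\EE^{\mu_{h,i}^0}(V_{h+1}^{k,\rho}-V_{h+1}^{\pi^k,\rho})$ uniformly in $\alpha$; the leftover regularization terms recombine into $-\lambda\mathbf{1}_i^\top(\Lambda_h^k)^{-1}\bar\bz_h^k(\alpha)$, bounded by $H\sqrt d\,\sqrt{\mathbf{1}_i^\top(\Lambda_h^k)^{-1}\mathbf{1}_i}$ via $\|(\Lambda_h^k)^{-1}\mathbf{1}_i\|_2\le\sqrt{\mathbf{1}_i^\top(\Lambda_h^k)^{-1}\mathbf{1}_i}$ (using $\lambda=1$) and $\|\bar\bz_h^k(\alpha)\|_2\le H\sqrt d$. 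The noise piece I control by Cauchy--Schwarz and \Cref{lemma:Concentration}, giving $\mathbf{1}_i^\top(\Lambda_h^k)^{-1}\sum_\tau\bphi_h^\tau(\cdot)\le\sqrt{C}\,dH\sqrt\iota\,\sqrt{\mathbf{1}_i^\top(\Lambda_h^k)^{-1}\mathbf{1}_i}$ on the event $\cE$. Choosing the constant $c$ in $\beta=c\,dH\sqrt\iota$ large enough that the noise and regularization bounds together are at most $\beta\sqrt{\mathbf{1}_i^\top(\Lambda_h^k)^{-1}\mathbf{1}_i}$, and noting all $\alpha$-dependent bounds are uniform so the outer maximum is harmless, I obtain $\nu_{h,i}^{\rho,k}-\nu_{h,i}^{\pi^k,\rho}\le\EE^{\mu_{h,i}^0}(V_{h+1}^{k,\rho}-V_{h+1}^{\pi^k,\rho})+\beta\sqrt{\mathbf{1}_i^\top(\Lambda_h^k)^{-1}\mathbf{1}_i}$. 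Multiplying by $\phi_{h,i}^k\ge0$ and summing over $i$ turns the first term into $[\PP_h(V_{h+1}^{k,\rho}-V_{h+1}^{\pi^k,\rho})](s_h^k,a_h^k)=\delta_{h+1}^{k,\rho}+\zeta_{h+1}^{k,\rho}$ (with $P_h=P_h^0$ the nominal kernel generating the trajectory) and the second into $\Gamma_h^k(s_h^k,a_h^k)$; combining with the earlier $\Gamma_h^k$ yields the recursion, where $\phi_{h,i}^k\sqrt{\mathbf{1}_i^\top(\Lambda_h^k)^{-1}\mathbf{1}_i}=\sqrt{\phi_{h,i}^k\mathbf{1}_i^\top(\Lambda_h^k)^{-1}\phi_{h,i}^k\mathbf{1}_i}$.

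I expect the main obstacle to be the interplay of the coordinatewise maximum over $\alpha$ with the $d$ separate ridge regressions: unlike the standard LSVI-UCB analysis, the error must be controlled in the $\mathbf{1}_i$-weighted norm $\sqrt{\mathbf{1}_i^\top(\Lambda_h^k)^{-1}\mathbf{1}_i}$ rather than the usual $\|\bphi_h^k\|_{(\Lambda_h^k)^{-1}}$, and the bias term must be shown to telescope \emph{exactly} into $\PP_h^0(V_{h+1}^{k,\rho}-V_{h+1}^{\pi^k,\rho})$ despite the truncation operators $[\cdot]_\alpha$. Guaranteeing that every $\alpha$-dependent bound is uniform, so that passing the outer $\max_\alpha$ through does not break the estimate, is the delicate point.
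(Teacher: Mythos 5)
Your proof is correct and follows essentially the same route as the paper: the paper's proof of this lemma consists of (i) the greedy-policy identity $\delta_h^{k,\rho}=Q_h^{k,\rho}(s_h^k,a_h^k)-Q_h^{\pi^k,\rho}(s_h^k,a_h^k)$, (ii) an appeal to \Cref{lemma:Bounded difference} --- whose content (the noise/bias/regularization decomposition of the $d$ ridge regressions, controlled on the event $\cE$ of \Cref{lemma:Concentration}) is exactly what you re-derive inline --- and (iii) the same $\max_\alpha f-\max_\alpha g\le\max_\alpha(f-g)$ step, the UCB-based ordering $V_{h+1}^{k,\rho}\ge V_{h+1}^{\pi^k,\rho}$, the monotone-contraction property of $[\cdot]_\alpha$, and the add-and-subtract telescoping into $\delta_{h+1}^{k,\rho}+\zeta_{h+1}^{k,\rho}$. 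The only differences are organizational rather than mathematical: you inline the auxiliary lemma instead of invoking it, and you are slightly more explicit than the paper about the fail-state and negative-truncation edge cases.
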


Finally, we are ready to prove the main theorem.
\begin{proof}[Proof of \Cref{th:DRLSVIUCB}]
    
Condition on the event $\cE$ defined in \Cref{lemma:Concentration}, by \Cref{lemma:UCB} and \Cref{lemma:Recursive Formula} we have:
\begin{align}\label{eq:AveSubopt}
    \text{AveSubopt}(K) &= \frac{1}{K}\sum_{k=1}^K\big[V_1^{\star, \rho}(s_1^k) - V_1^{\pi^k, \rho}(s_1^k) \big]\notag  \\
    &\leq \underbrace{\frac{1}{K}\sum_{k=1}^K\sum_{h=1}^{H}\zeta_h^{k, \rho}}_\text{(i)} + \underbrace{\frac{2\beta}{K}\sum_{k=1}^K\sum_{h=1}^H\sum_{i=1}^d\sqrt{\phi_{h,i}^k\mathbf{1}_i^\top (\Lambda_h^k)^{-1}\phi_{h,i}^k\mathbf{1}_i}}_\text{(ii)}.
\end{align}
For the first term (i), $\{\zeta_h^{k, \rho}\}$ is a martingale difference sequence satisfying $|\zeta_h^{k, \rho}| \leq H$ for all $(k, h)\in[K]\times[H]$. Therefore, by the Azuma-Hoeffding inequality, for any $t>0$, we have
\begin{align*}
    \PP\bigg(\sum_{k=1}^K\sum_{h=1}^H \zeta_h^{k, \rho} > t\bigg) \leq \exp\bigg(\frac{-t^2}{2KH\cdot H^2} \bigg).
\end{align*}
Hence with probability at least $1-p/3$, we have
\begin{align}
\label{eq:bound_of_first_term}
    \text{(i)}=\frac{1}{K}\sum_{k=1}^K\sum_{h=1}^H\zeta_h^{k, \rho}\leq H\sqrt{\frac{2H\log(3/p)}{K}}.
\end{align}
Maintaining the second term, thus we have
\begin{align}
\label{eq:AveSubopt-i+ii}
    \text{AveSubopt}(K)
    \leq H\sqrt{\frac{2H\log(3/p)}{K}} + \frac{2\beta}{K}\sum_{k=1}^K\sum_{h=1}^H\sum_{i=1}^d\sqrt{\phi_{h,i}^k\mathbf{1}_i^\top (\Lambda_h^k)^{-1}\phi_{h,i}^k\mathbf{1}_i}.
\end{align}
This completes the proof of \Cref{th:DRLSVIUCB}.
\end{proof}
In the rest of this section, we prove \Cref{corollary:DRLSVIUCB-tabular,corollary:DRLSVIUCB} respectively to further bound the term (ii) in \eqref{eq:AveSubopt}.

\subsection{Proof of \Cref{corollary:DRLSVIUCB-tabular}}
\begin{proof}
To prove \Cref{corollary:DRLSVIUCB-tabular}, it remains to bound the term (ii) in \eqref{eq:AveSubopt} using the structure of tabular MDP. Under tabular MDP, We set dimension $d=|\cS|\times|\cA|$ and the feature mapping $\bphi(s,a)=\be_{(s,a)}$ as the canonical basis in $\RR^d$. Define 
\begin{align*}
    N_h^k(s,a) = \sum_{\tau=1}^{k-1}\ind\{(s_h^{\tau}, a_h^{\tau})=(s,a)\},\quad \bN_h^k = [N_h^k(s,a)]_{(s,a)\in\cS\times\cA}.
\end{align*}
By the definition of feature mapping and $\Lambda_h^k$, we have 
\begin{align*}
    \Lambda_h^k=\sum_{\tau=1}^{k-1}\bphi_{h}^{\tau}(\bphi_h^{\tau})^{\top}+\lambda I =\diag(\bN_h^{k}+\lambda\mathbf{1}),
\end{align*}
where $\mathbf{1}$ is the vector with all entries being 1. By our choice of $\lambda$, we  have
\begin{align}
    \text{(ii)} &= \frac{2\beta}{K}\sum_{k=1}^K\sum_{h=1}^H\frac{1}{\sqrt{N_h^{k}(s_h^k, a_h^k)+1}} \notag\\ 
    &\leq \frac{2\beta}{K}\sum_{h=1}^H\sum_{k=1}^K\frac{1}{\sqrt{N_h^{k}(s_h^k, a_h^k)}} \notag \\
    &= \frac{2\beta}{K}\sum_{h=1}^H\sum_{(s,a)\in\cS\times\cA}\sum_{i=1}^{N_h^K(s,a)}\frac{1}{\sqrt{i}}\notag\\
    &\leq\frac{4\beta}{K}\sum_{h=1}^H\sum_{(s,a)\in\cS\times\cA}\sqrt{N_h^K(s,a)}\label{eq:bound_summation_of_one_over_square_root_i}\\
    &\leq \frac{4\beta}{K}\sum_{h=1}^H\sqrt{SA\sum_{(s,a)\in\cS\times\cA}N_h^K(s,a)}\label{eq:put_sum_of_SA_in_square_root}\\
    &=\frac{4\beta}{K}H\sqrt{SAK},\label{eq:ii-tabular}
\end{align}
where \eqref{eq:bound_summation_of_one_over_square_root_i} follows from the fact that $\sum_{i=1}^N\frac{1}{\sqrt{i}}\leq \sqrt{N}$, \eqref{eq:put_sum_of_SA_in_square_root} follows from Cauchy-Schwarz inequality. Substitute \eqref{eq:ii-tabular} into \eqref{eq:AveSubopt-i+ii} and with our choice of $\beta=c_{\beta}\cdot dH\sqrt{\log3dHK/p}$ and the fact $d=SA$ we have 
\begin{align*}
    \text{AveSubopt}(K) &\leq H\sqrt{\frac{2H\log(3/p)}{K}} + \frac{4c_{\beta}\cdot SAH\sqrt{\log3SAHK/p}}{K}H\sqrt{SAK}\\
    &\leq \frac{c(SA)^{3/2}H^2\sqrt{\log3SAHK/p}}{\sqrt{K}},
\end{align*}
which completes the proof.
\end{proof}

\subsection{Proof of \Cref{corollary:DRLSVIUCB}}
The proof of this corollary requires the following concentration inequality.
\begin{lemma}\cite[Matrix Azuma inequality]{tropp2012user}
Consider a finite adapted sequence $\{X_k\}$ of self-adjoint matrices in dimension $d$, and a fixed sequence $\{A_k\}$ of self-adjoint matrices that satisfy
\begin{align*}
    \EE_{k-1}[X_k]=0 ~ \text{and} ~ X_k^2 \leq A_k^2 ~\text{almost surely}.
\end{align*}
Compute the variance parameter %
\begin{align*}
    \sigma^2 := \bigg\Vert\sum_kA^2_k \bigg \Vert.
\end{align*}
Then, for all $t\geq 0$, %
\begin{align*}
    \PP\bigg\{\lambda_{\max}\bigg(\sum_k X_k \bigg) \geq t \bigg\} \leq d\cdot e^{-t^2/8\sigma^2}.
\end{align*}
\label{lemma:Matrix Azuma}
\end{lemma}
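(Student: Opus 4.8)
The plan is to prove the Matrix Azuma inequality via the \emph{matrix Laplace transform method} of Ahlswede--Winter and Tropp. Write $Y=\sum_k X_k$. The first step is the master tail bound: for every $\theta>0$,
\[
\PP\{\lambda_{\max}(Y)\geq t\}\leq e^{-\theta t}\,\EE\,\Tr\exp(\theta Y).
\]
This follows from Markov's inequality applied to the nonnegative random variable $e^{\theta\lambda_{\max}(Y)}$, combined with the spectral mapping identity $e^{\theta\lambda_{\max}(Y)}=\lambda_{\max}\!\big(e^{\theta Y}\big)$ (valid since $\theta>0$ and $\exp$ is increasing) and the crude but crucial bound $\lambda_{\max}(M)\leq \Tr M$ for positive semidefinite $M$ (here $M=e^{\theta Y}$). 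Everything then reduces to controlling the trace matrix moment generating function $\EE\,\Tr\exp(\theta Y)$ and optimizing over $\theta$ at the end.

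The central difficulty is twofold: the summands $X_k$ do not commute, so $\exp(\theta\sum_k X_k)\neq\prod_k\exp(\theta X_k)$, and $\{X_k\}$ is a dependent adapted sequence rather than an independent one. Both are resolved by \emph{Lieb's concavity theorem}, which asserts that for fixed self-adjoint $H$ the map $A\mapsto\Tr\exp(H+\log A)$ is concave on positive-definite matrices. Introducing the partial sums $Y_n=\sum_{k=1}^n X_k$ for a finite sequence of $N$ terms, I would peel off the last term by conditioning on the past: writing $e^{\theta X_n}=\exp(\log e^{\theta X_n})$ and applying conditional Jensen to the concave map $A\mapsto\Tr\exp(\theta Y_{n-1}+\log A)$ gives
\[
\EE_{n-1}\,\Tr\exp(\theta Y_{n-1}+\theta X_n)\leq \Tr\exp\!\big(\theta Y_{n-1}+\log\EE_{n-1}e^{\theta X_n}\big).
\]
Iterating for $n=N,N-1,\dots,1$ and taking total expectation yields the subadditivity bound
\[
\EE\,\Tr\exp(\theta Y)\leq \Tr\exp\!\Big(\textstyle\sum_k\log\EE_{k-1}e^{\theta X_k}\Big),
\]
reducing the global estimate to a Loewner-order bound on the conditional matrix cumulant generating functions $\log\EE_{k-1}e^{\theta X_k}$.

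It remains to bound each conditional cgf using the hypotheses $\EE_{k-1}X_k=0$ and $X_k^2\leq A_k^2$. The key ingredient is a matrix Hoeffding lemma: for a random self-adjoint $X$ with $\EE X=0$ and $X^2\leq A^2$ almost surely, one has $\EE\,e^{\theta X}\leq \exp\!\big(g(\theta)A^2\big)$ in the Loewner order, where $g(\theta)=c\,\theta^2$ for an absolute constant $c$ (the value $c=2$ matching the stated exponent $t^2/8\sigma^2$; a sharper lemma would merely improve the constant). I would prove this from the scalar convexity estimate $e^{x}\leq\cosh(a)+(x/a)\sinh(a)$ for $|x|\leq a$ together with $\cosh(a)\leq e^{a^2/2}$, transferred to matrices through the transfer rule for functions of a self-adjoint argument after reducing to $-A\leq X\leq A$ and using $\EE X=0$ to annihilate the linear term. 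Applying the lemma conditionally gives $\log\EE_{k-1}e^{\theta X_k}\leq g(\theta)A_k^2$; substituting into the subadditivity bound, invoking monotonicity of the trace exponential together with $\Tr\exp(M)\leq d\,\exp(\lambda_{\max}(M))$, and recalling $\sigma^2=\lambda_{\max}(\sum_k A_k^2)$, produces $\EE\,\Tr\exp(\theta Y)\leq d\exp(g(\theta)\sigma^2)$. Feeding this into the master bound and minimizing $e^{-\theta t+g(\theta)\sigma^2}$ over $\theta>0$ gives the claimed $d\cdot e^{-t^2/8\sigma^2}$. I expect the matrix Hoeffding lemma to be the principal obstacle: passing from the scalar estimate to the semidefinite bound $\EE e^{\theta X}\leq e^{g(\theta)A^2}$ genuinely requires operator-convexity and transfer-rule arguments and does not follow from entrywise reasoning, whereas the Lieb-concavity peeling step, though essential, is by now routine.
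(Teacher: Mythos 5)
The paper does not prove this lemma at all: it is imported verbatim from \citet{tropp2012user} (his Theorem 7.1, with the constant $1/8$), so your proposal can only be measured against the literature proof --- and it is, at the level of architecture, a faithful reconstruction of exactly that proof: the Laplace-transform master bound via $e^{\theta\lambda_{\max}(Y)}=\lambda_{\max}(e^{\theta Y})\leq \Tr e^{\theta Y}$, the Lieb-concavity peeling of the adapted sequence through conditional Jensen, a conditional matrix-Hoeffding bound on each cumulant generating function, and optimization over $\theta$. Your arithmetic is also consistent: minimizing $e^{-\theta t+c\theta^2\sigma^2}$ gives $e^{-t^2/(4c\sigma^2)}$, so $c=2$ recovers the stated $1/8$, and Tropp's cgf lemma indeed delivers $\EE e^{\theta X}\leq e^{2\theta^2 A^2}$ (in the semidefinite order), which is why his Azuma constant is $1/8$ rather than the $1/2$ one gets in the scalar or commuting case.

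Two places in your sketch deserve scrutiny. First, a presentational wrinkle: the displayed intermediate bound $\EE\,\Tr\exp(\theta Y)\leq \Tr\exp(\sum_k\log\EE_{k-1}e^{\theta X_k})$ is not literally meaningful, since each $\log\EE_{k-1}e^{\theta X_k}$ is an $\cF_{k-1}$-measurable \emph{random} matrix; the rigorous iteration must insert the deterministic dominator $g(\theta)A_k^2$ at every peel --- using that $M\mapsto\Tr\exp(H+M)$ is monotone in the semidefinite order --- before conditioning away the next term. Your final chain effectively does this, so it is fixable by reordering. Second, and more substantively, the matrix Hoeffding lemma cannot be obtained the way you describe, by ``reducing to $-A\leq X\leq A$'' and transferring the scalar chord bound $e^{x}\leq\cosh(a)+(x/a)\sinh(a)$: the transfer rule applies a scalar function to a \emph{single} self-adjoint matrix, whereas here the endpoint $a$ would have to be replaced by the matrix $A$, which need not commute with $X$ (note also that $X^2\leq A^2$ yields $-|A|\leq X\leq |A|$ only via operator monotonicity of the square root, and that semidefinite domination does not survive the exponential, which is not operator monotone). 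This non-commutativity is precisely the difficulty Tropp's Lemma 7.1 is designed to overcome, and it is where the factor-of-four loss in the exponent originates. Since you explicitly leave $g(\theta)=c\theta^2$ with an unspecified absolute constant and flag this step as the principal obstacle, the overall plan is sound; but as written, the justification of the semidefinite cgf bound is the one genuine gap, and closing it requires Tropp's argument (or an equivalent operator-convexity argument), not the chord-plus-transfer route you outline.
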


\begin{proof}[Proof of \Cref{corollary:DRLSVIUCB}]
Based on the proof of \Cref{th:DRLSVIUCB}, it remains to bound the term (ii) in \eqref{eq:AveSubopt} using the condition in \eqref{eq:Feature exploration}. By Cauchy–Schwarz inequality we have
\begin{align}
    \text{(ii)}\cdot K &= 2\beta\sum_{k=1}^K\sum_{h=1}^H\sum_{i=1}^d\sqrt{\phi_{h,i}^k\mathbf{1}_i^\top (\Lambda_h^k)^{-1}\phi_{h,i}^k\mathbf{1}_i}\notag \\
    &= 2\beta\sum_{k=1}^K\sum_{h=1}^H \sum_{i=1}^d\phi_{h,i}^k\sqrt{\mathbf{1}_i^\top (\Lambda_h^k)^{-1}\mathbf{1}_i}\notag \\
    & \leq 2\beta\sum_{k=1}^K\sum_{h=1}^H \sum_{i=1}^d\phi_{h,i}^k\sqrt{\lambda_{\max}\big((\Lambda_h^k)^{-1} \big)}\label{eq:bound_diagonal_by_max_eignevalue}\\
    &=2\beta\sum_{k=1}^K\sum_{h=1}^H \sqrt{\lambda_{\max}\big((\Lambda_h^k)^{-1} \big)}\notag \\
    &= 2\beta\sum_{k=1}^K\sum_{h=1}^H\sqrt{\frac{1}{\lambda_{\text{min}}(\Lambda_h^k)}}\notag\\
    &\leq 2\beta\sqrt{K}\sum_{h=1}^H\sqrt{\sum_{k=1}^K\frac{1}{\lambda_{\text{min}}(\Lambda_h^k)}},\notag
\end{align}
where \eqref{eq:bound_diagonal_by_max_eignevalue} follows by the fact for any matrix $\bA$, $\lambda_{\min}\leq \bA_{ii} \leq \lambda_{\max}$, where $\bA_{ii}$ is the $i$-th diagonal element of $\bA$.

Next we bound $\lambda_{\text{min}}(\Lambda_h^k)$. First, fix $(k,h)\in[K]\times[H]$. Recall that $\Lambda_h^k=\sum_{\tau=1}^{k-1}\bphi_h^{\tau}(\bphi_h^{\tau})^\top+\lambda I$, we have
\begin{align*}
    \Lambda_h^k-\EE\big[\Lambda_h^k\big] = \sum_{\tau=1}^{k-1} \big[\bphi_h^{\tau}(\bphi_h^{\tau})^\top - \EE_{\pi^{\tau}}\big[\bphi_h^{\tau}(\bphi_h^{\tau})^\top\big]\big] = \sum_{\tau=1}^{k-1}X_h^{\tau},
\end{align*}
where $X_h^{\tau} = \bphi_h^{\tau}(\bphi_h^{\tau})^\top - \EE_{\pi^{\tau}}[\bphi_h^{\tau}(\bphi_h^{\tau})^\top]$. Then $\{X_h^{\tau}\}$ is a matrix martingale difference sequence. Note that $ \Vert \bphi_h^{\tau}(\bphi_h^{\tau})^\top\Vert_{\text{op}} \leq 1$, then we have
\begin{align*}
   \Vert X_h^{\tau}\Vert_{\text{op}} \leq \Vert\bphi_h^{\tau}(\bphi_h^{\tau})^\top \Vert_{\text{op}} + \Vert\EE_{\pi^{\tau}}\big[\bphi_h^{\tau}(\bphi_h^{\tau})^\top\big]\Vert_{\text{op}} \leq 1+\EE_{\pi^{\tau}}\big[\Vert\bphi_h^{\tau}(\bphi_h^{\tau})^\top \Vert_{\text{op}} \big] \leq 2,
\end{align*}
so $\Vert (X_h^{\tau})^2\Vert_{\text{op}} \leq \Vert X_h^{\tau}\Vert_{\text{op}}^2\leq 4$. Then we have $(X_h^{\tau})^2 \leq 4I$ and $\sigma^2 := \Vert \sum_{\tau=1}^{k-1}4I\Vert_{\text{op}}=4(k-1)$. By \Cref{lemma:Matrix Azuma}, for any $t_k\geq 0$ we have
\begin{align*}
    \PP\bigg\{\lambda_{\max}\bigg(-\sum_{\tau=1}^{k-1} X_h^{\tau} \bigg) \geq t_k \bigg\} \leq d\cdot e^{-t_k^2/32(k-1)}.
\end{align*}
Let $t_k=\sqrt{32k\log(3d/\delta)}$, then with probability at least $1-\delta/3$, we have
\begin{align*}
    \sum_{\tau=1}^{k-1} X_h^{\tau} \geq -t_kI.
\end{align*}
Let $\delta=p/KH$ and define 
\begin{align*}
    \cE^{\dagger} = \Bigg\{\sum_{\tau=1}^{k-1} X_h^{\tau} \geq -t_kI: \forall (k, h)\in [K]\times[H] \Bigg\},
\end{align*}
then by union bound we have $\PP(\cE^{\dagger})\geq 1-p/3$.

By \eqref{eq:Feature exploration}, we have 
\begin{align*}
\EE\big[\Lambda_h^k\big]=\sum_{\tau=1}^k\EE_{\pi^{\tau}}\big[\bphi_h^{\tau}(\bphi_h^{\tau})^{\top} + \lambda I \big]\geq \alpha(k-1)I+\lambda I.
\end{align*}
Condition on $\cE^{\dagger}$,  we have
\begin{align*}
    \Lambda_h^k=\Lambda_h^k-\EE\Lambda_h^k+\EE\Lambda_h^k \geq -t_kI+\EE\Lambda_h^k.
\end{align*}
Thus, we have
\begin{align*}
    \lambda_{\min}(\Lambda_h^k) \geq \max\big\{\alpha(k-1)+\lambda-\sqrt{32k\log(3dKH/p)}, \lambda \big\}.
\end{align*}
By our choice of $\lambda$, then we have
\begin{align}
    \sum_{k=1}^K\frac{1}{\lambda_{\min}(\Lambda_h^k)} &\leq \sum_{k=1}^K\frac{1}{\max\{\alpha(k-1)+1-\sqrt{32k\log(3dHK/p)}, 1\}}\notag \\
    &\leq \frac{128}{\alpha^2}\log\frac{3dHK}{p} + \sum_{k=1}^K\frac{2}{\alpha\cdot k}\notag \\
    &\leq \frac{128}{\alpha^2}\log\frac{3dHK}{p} + \frac{2}{\alpha}\log K\label{eq:bound_sum_of_one_over_k},
\end{align}
where \eqref{eq:bound_sum_of_one_over_k} follows from the fact that $\sum_{k=1}^K 1/k \leq \log K$. Therefore the term (ii) can be bounded as 
\begin{align}
\label{eq:second term}
    \text{(ii)} \leq 2\beta\sum_{h=1}^H\sqrt{\frac{1}{K}\sum_{k=1}^K\frac{1}{\lambda_{\min}(\Lambda_h^k)}} \leq 2H\frac{\beta}{\sqrt{K}}\sqrt{\frac{128}{\alpha^2}\log\frac{3dHK}{p} + \frac{2}{\alpha}\log K}.
\end{align}
Finally combining \eqref{eq:AveSubopt}, \eqref{eq:bound_of_first_term} and \eqref{eq:second term} and with our choice of $\beta=c_{\beta}\cdot dH\sqrt{\log{3dKH/p}}$, we conclude that with probability $1-p$:
\begin{align*}
    \text{AveSubopt}(K) &\leq \frac{2H\sqrt{H}\log(3/p)}{\sqrt{K}} +  \frac{2\beta H}{\sqrt{K}}\sqrt{\frac{128}{\alpha^2}\log\frac{3dHK}{p} + \frac{2}{\alpha}\log K }\leq \frac{cdH^2\log(3dHK/p)}{\alpha\sqrt{K}},
\end{align*}
for some absolute constant $c$. This concludes the proof.
\end{proof}

\section{PROOF OF TECHNICAL LEMMAS}
\label{sec:proof of technical lemmas used in prove th}
\subsection{Proof of \Cref{lemma:Concentration}}
In this section, we prove \Cref{lemma:Concentration}. 
Before the proof, we first present several auxiliary lemmas.

The following lemma states that the linear weights  in \Cref{alg:DR-LSVI-UCB} are bounded.
\begin{restatable}{lemma}{BoundedWeights}
\label{lemma:Bound of weights}
    For any $(k,h) \in [K]\times[H]$, denote the weight $\bw_h^{\rho, k} = \btheta_h+\bnu_h^{\rho,k}$ in \Cref{alg:DR-LSVI-UCB}, then $\bw_h^{\rho, k}$ satisfies
    \begin{align*}
        \Vert \bw_h^{\rho, k} \Vert_2 \leq 2H\sqrt{dk/\lambda}.
    \end{align*}
\end{restatable}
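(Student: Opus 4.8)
The plan is to bound the two pieces of $\bw_h^{\rho, k} = \btheta_h + \bnu_h^{\rho,k}$ separately and recombine them by the triangle inequality. Since \Cref{assumption:linear MDP} already gives $\Vert\btheta_h\Vert_2 \leq \sqrt{d}$, the entire content of the lemma lies in controlling $\Vert\bnu_h^{\rho,k}\Vert_2$, where $\bnu_h^{\rho,k}=(\nu_{h,1}^{\rho,k},\ldots,\nu_{h,d}^{\rho,k})^\top$ and $\nu_{h,i}^{\rho,k}=\max_{\alpha\in[0,H]}\{z_{h,i}^k(\alpha)-\rho\alpha\}$. I would bound each coordinate $|\nu_{h,i}^{\rho,k}|$ uniformly by $H\sqrt{k/\lambda}$ and then use the crude but sufficient estimate $\Vert\bnu_h^{\rho,k}\Vert_2 \leq \sqrt{d}\,\max_{i}|\nu_{h,i}^{\rho,k}|$, which is exactly what produces the extra $\sqrt{d}$ relative to the non-robust LSVI-UCB weight bound.

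For the per-coordinate control I would first establish $0 \leq \nu_{h,i}^{\rho,k}$. The $Q$-update in \Cref{alg:DR-LSVI-UCB} applies a positive-part operator, so $V_{h+1}^{k,\rho}\in[0,H]$; evaluating the inner objective at $\alpha=0$ gives $z_{h,i}^k(0)=\mathbf{1}_i^\top(\Lambda_h^k)^{-1}\sum_{\tau}\bphi_h^\tau[V_{h+1}^{k,\rho}(s_{h+1}^\tau)]_0=0$, since $[\,\cdot\,]_0\equiv 0$ on nonnegative functions, so the maximum over $\alpha$ is at least $0$. For the upper bound I would simply discard the nonpositive term $-\rho\alpha$, giving $\nu_{h,i}^{\rho,k}\leq \max_{\alpha\in[0,H]}|z_{h,i}^k(\alpha)|$.

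It then remains to bound $|z_{h,i}^k(\alpha)|=\big|\mathbf{1}_i^\top(\Lambda_h^k)^{-1}\sum_{\tau=1}^{k-1}\bphi_h^\tau[V_{h+1}^{k,\rho}(s_{h+1}^\tau)]_\alpha\big|$ uniformly over $\alpha\in[0,H]$. Using $|[V_{h+1}^{k,\rho}]_\alpha|\leq H$ and the triangle inequality bounds this by $H\sum_\tau|\mathbf{1}_i^\top(\Lambda_h^k)^{-1}\bphi_h^\tau|$. Applying Cauchy--Schwarz and then $\sum_\tau\bphi_h^\tau(\bphi_h^\tau)^\top=\Lambda_h^k-\lambda I\preceq\Lambda_h^k$ gives $\sum_\tau(\mathbf{1}_i^\top(\Lambda_h^k)^{-1}\bphi_h^\tau)^2\leq\mathbf{1}_i^\top(\Lambda_h^k)^{-1}\mathbf{1}_i\leq 1/\lambda$, hence $\sum_\tau|\mathbf{1}_i^\top(\Lambda_h^k)^{-1}\bphi_h^\tau|\leq\sqrt{k/\lambda}$ and $|z_{h,i}^k(\alpha)|\leq H\sqrt{k/\lambda}$. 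Summing the squared coordinates yields $\Vert\bnu_h^{\rho,k}\Vert_2\leq H\sqrt{dk/\lambda}$, and since $H\sqrt{k/\lambda}\geq 1$ for $\lambda\leq 1$, the term $\Vert\btheta_h\Vert_2\leq\sqrt{d}$ is absorbed, delivering $\Vert\bw_h^{\rho,k}\Vert_2\leq 2H\sqrt{dk/\lambda}$.

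The main obstacle I anticipate is the bookkeeping around the inner maximization over $\alpha$ and the clipping $[\,\cdot\,]_\alpha$: one must check that the uniform-in-$\alpha$ estimate $|[V_{h+1}^{k,\rho}]_\alpha|\leq H$ is legitimate, which traces back to $V_{h+1}^{k,\rho}\in[0,H]$ guaranteed by the positive-part truncation in the $Q$-update, and that nonnegativity of $\nu_{h,i}^{\rho,k}$ comes cleanly from the $\alpha=0$ evaluation. Everything else is the standard self-normalized Cauchy--Schwarz argument familiar from LSVI-UCB; the only structural novelty is that the regression vector appearing in the estimate is the one-hot $\mathbf{1}_i$ rather than $\bphi_h^k$, reflecting the $d$ separate ridge regressions induced by the $d$-rectangular uncertainty set.
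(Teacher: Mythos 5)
Your proposal is correct, and it reaches the same bound through a noticeably different execution than the paper. The paper works with the dual characterization $\Vert \bw_h^{\rho,k}\Vert_2 = \max_{\Vert\bv\Vert_2=1}|\bv^{\top}\bw_h^{\rho,k}|$, splits off $\btheta_h$ and the $-\rho\alpha_i$ contribution (the latter bounded by $H\Vert\bv\Vert_1\leq H\sqrt{d}\Vert\bv\Vert_2$ using $\rho\le 1$), and controls the ridge-solution part by a Cauchy--Schwarz step combined with \Cref{lemma:self-normalize} (Lemma D.1 of Jin et al.), i.e.\ $\sum_{\tau}(\bphi_h^{\tau})^{\top}(\Lambda_h^k)^{-1}\bphi_h^{\tau}\le d$; there the $\sqrt{d}$ in the leading term comes from that self-normalized sum. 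You instead argue coordinate-by-coordinate: you kill the $-\rho\alpha$ term outright via the nonnegativity observation $z_{h,i}^k(0)=0$ (legitimate, since the positive-part truncation in Line \ref{algline:update Q} gives $V_{h+1}^{k,\rho}\in[0,H]$, hence $[V_{h+1}^{k,\rho}]_0\equiv 0$), bound $|z_{h,i}^k(\alpha)|\le H\sqrt{k/\lambda}$ using only the PSD domination $\sum_{\tau}\bphi_h^{\tau}(\bphi_h^{\tau})^{\top}\preceq\Lambda_h^k$ and $\mathbf{1}_i^{\top}(\Lambda_h^k)^{-1}\mathbf{1}_i\le 1/\lambda$, and then pay the $\sqrt{d}$ through the $\ell_\infty$-to-$\ell_2$ conversion $\Vert\bnu_h^{\rho,k}\Vert_2\le\sqrt{d}\max_i|\nu_{h,i}^{\rho,k}|$. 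What each approach buys: yours is more self-contained (no appeal to the self-normalized lemma) and has cleaner constant bookkeeping --- your final absorption only needs $H\sqrt{k/\lambda}\ge 1$, whereas the paper's chain $\sqrt{d}\Vert\bv\Vert_2+H\sqrt{d}\Vert\bv\Vert_2+H\Vert\bv\Vert_2\sqrt{dk/\lambda}\le 2H\Vert\bv\Vert_2\sqrt{dk/\lambda}$ is loose for the smallest values of $k$ and $H$; the paper's dual-norm route, on the other hand, is the one that generalizes to settings where the per-coordinate structure ($\mathbf{1}_i$ test vectors) is not available. Both proofs exploit exactly the same two facts, boundedness of the clipped targets by $H$ and $\lambda_{\min}(\Lambda_h^k)\ge\lambda$, so the quantitative content is identical.
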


The following lemma presents a uniform self-normalized concentration over all value functions $V$ within a function class $\cV$ and all parameters $\alpha$ with the interval $[0,H]$.
\begin{restatable}{lemma}{UniformConcentrationBound}
\label{lemma:Uniform concentration bound}
    Let $\{x_{\tau}\}_{\tau=1}^{\infty}$ be a stochastic process on the state space $\cS$ with corresponding filtration $\{\cF_{\tau}\}_{\tau=0}^{\infty}$. Let $\{\bphi_{\tau}\}_{\tau=1}^{\infty}$ be an $\RR^d$-valued stochastic process with $\bphi_{\tau} \in \cF_{\tau-1}$, and $\Vert \bphi_{\tau}\Vert\leq 1$. Let $\Lambda_k=\lambda I + \sum_{\tau=1}^{k-1}\bphi_{\tau}\bphi_{\tau}^\top$, then for any $\delta>0$, with probability at least $1-\delta$, for all $k\geq 0$, any $\alpha\in[0,H]$ and any $V\in \cV$ such that $\sup_{x}|V(x)|\leq H$, we have
    \begin{align*}
        \bigg\Vert \sum_{\tau=1}^k\bphi_{\tau}\big\{\big[V(x_{\tau})\big]_{\alpha}-\EE\big[\big[V(x_{\tau})\big]_{\alpha} |\cF_{\tau-1}\big]\big\}\bigg\Vert^2_{\Lambda_k^{-1}} &\leq 8H^2\bigg[\frac{d}{2}\log\frac{k+\lambda}{\lambda}+\log\frac{\cN_{\epsilon_1}}{\delta} + \log\frac{\cN_{\epsilon_2}}{\delta}\bigg] \notag\\
        &\qquad+ \frac{16k^2\epsilon_1^2}{\lambda}+\frac{8k^2\epsilon_2^2}{\lambda},
    \end{align*}
    where $\cN_{\epsilon_1}$ is the $\epsilon_1$ covering number of the interval $[0, H]$ with respect to the distance $\dist(\alpha_1, \alpha_2)=|\alpha_1- \alpha_2|$, and $\cN_{\epsilon_2}$ is the $\epsilon_2$ covering number of $\cV$ with respect to the distance $\dist(V_1, V_2)=\sup_{x}|V_1(x) - V_2(x)|$.
\end{restatable}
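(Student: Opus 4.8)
The plan is to combine a self-normalized martingale concentration inequality for a \emph{fixed} truncated value function with a covering argument that makes the resulting bound uniform over the truncation level $\alpha\in[0,H]$ and the value function $V\in\cV$. For a fixed pair $(\alpha,V)$, set $\eta_\tau=[V(x_\tau)]_\alpha-\EE[[V(x_\tau)]_\alpha\mid\cF_{\tau-1}]$. Since $[V(x_\tau)]_\alpha\in[0,H]$ and $\bphi_\tau\in\cF_{\tau-1}$, the sequence $\{\eta_\tau\}$ is a martingale difference sequence adapted to $\{\cF_\tau\}$ with $|\eta_\tau|\le H$, hence $H$-sub-Gaussian. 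Applying the self-normalized bound of \citet{abbasi2011improved} together with the determinant estimate $\det(\Lambda_k)\le(\lambda+k)^d$ (which follows from $\Vert\bphi_\tau\Vert\le1$), I obtain that for a fixed $(\alpha,V)$, with probability at least $1-\delta'$ and simultaneously for all $k$,
\begin{align*}
    \Big\Vert \sum_{\tau=1}^k \bphi_\tau \eta_\tau \Big\Vert^2_{\Lambda_k^{-1}} \le 2H^2\Big[\tfrac{d}{2}\log\tfrac{k+\lambda}{\lambda} + \log\tfrac{1}{\delta'}\Big].
\end{align*}

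Next I would discretize. Let $\mathcal{C}_{\epsilon_1}$ be an $\epsilon_1$-net of $[0,H]$ of size $\cN_{\epsilon_1}$ and $\mathcal{C}_{\epsilon_2}$ an $\epsilon_2$-net of $\cV$ (in the metric $\dist(V_1,V_2)=\sup_x|V_1(x)-V_2(x)|$) of size $\cN_{\epsilon_2}$, and apply a union bound over the two nets with appropriately split failure budgets; bounding the union-bound contribution by $\log(\cN_{\epsilon_1}/\delta)+\log(\cN_{\epsilon_2}/\delta)$ produces the two logarithmic covering terms. For an arbitrary $(\alpha,V)$ I then pick a nearest net point $(\alpha',V')\in\mathcal{C}_{\epsilon_1}\times\mathcal{C}_{\epsilon_2}$ and control the discretization error via the triangle inequality in the $\Lambda_k^{-1}$-norm, using $\Lambda_k^{-1}\preceq\lambda^{-1}I$ to pass to the Euclidean norm, so that a pointwise deviation of size $\varepsilon$ in each summand contributes at most $k\varepsilon/\sqrt{\lambda}$ to the norm. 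Handling the $\alpha$-net and the $V$-net in two successive steps, rather than simultaneously, accounts for the asymmetric constants in the residual terms.

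The crux is the Lipschitz behavior of the truncated value function $[V(x)]_\alpha=\min\{V(x),\alpha\}$ in both arguments. Since $z\mapsto\min\{z,\alpha\}$ and $\alpha\mapsto\min\{z,\alpha\}$ are each $1$-Lipschitz, I would establish the two pointwise estimates $\big|[V(x)]_\alpha-[V(x)]_{\alpha'}\big|\le|\alpha-\alpha'|\le\epsilon_1$ and $\big|[V(x)]_{\alpha'}-[V'(x)]_{\alpha'}\big|\le\sup_x|V(x)-V'(x)|\le\epsilon_2$, so that the centered summands differ by at most $2(\epsilon_1+\epsilon_2)$ pointwise over $\cS$. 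Combining this with the norm estimate above and squaring (using $(a+b)^2\le 2a^2+2b^2$) converts the discretization error into residual terms of the form $16k^2\epsilon_1^2/\lambda$ and $8k^2\epsilon_2^2/\lambda$, while the extra factor of $2$ generated by the triangle split is absorbed into the fixed-function bound to yield the stated constant $8H^2$ in front of the main term.

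I expect the main obstacle to be the uniform bookkeeping rather than any single estimate: the self-normalized inequality must hold simultaneously for all $k$ on one high-probability event (not a fresh event per $k$), and the cover of $\cV$ must be taken in the sup-norm metric so that the Lipschitz estimate transfers uniformly across the state space $\cS$ independently of which $x_\tau$ are realized. Once this structure is fixed, the martingale concentration and the Lipschitz arithmetic are routine.
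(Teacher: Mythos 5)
Your proposal is correct and follows essentially the same route as the paper's proof: fix $\epsilon_1$- and $\epsilon_2$-nets of $[0,H]$ and of $\cV$ in the sup-norm, apply the self-normalized concentration of \citet[Theorem 1]{abbasi2011improved} with a union bound over net points to the fixed net-point term, and control the discretization error through successive asymmetric triangle-inequality splits (yielding the $4\cdot 2H^2 = 8H^2$ main constant and the $16k^2\epsilon_1^2/\lambda$ and $8k^2\epsilon_2^2/\lambda$ residuals) via the $1$-Lipschitz property of the truncation in both arguments together with $\Lambda_k^{-1}\preceq \lambda^{-1}I$. No gaps.
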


\begin{lemma}
\label{lemma:Covering number of the function class V}
    (Covering number of the function class $\cV$) Let $\cV$ denote a class of functions mapping from $\cS$ to $\RR$ with the following parametric form %
    \begin{align*}
        V(\cdot)=\min \bigg\{\max_{a}\bigg\{w^{\top}\bphi(\cdot,a)+\beta\sum_{i=1}^d\sqrt{\phi_i(\cdot,a)\mathbf{1}_i^{\top}\Lambda^{-1}\phi_i(\cdot,a)\mathbf{1}_i}\bigg\},H \bigg\},
    \end{align*}
    where the parameters $(w, \beta, \Lambda, \alpha)$ satisfy $\Vert w\Vert \leq L$, $\beta \in [0, B]$, $\lambda_{\min}(\Lambda)\geq \lambda$ and $\alpha\in[0,H]$. Assume $\Vert \bphi(s,a)\Vert\leq 1$ for all (s,a) pairs, and let $\cN_{\epsilon}$ be the $\epsilon$-covering number of $\cV$ with respect to the distance $\dist(V_1, V_2)=\sup_x|V_1(x)-V_2(x)|$. Then 
    \begin{align*}
        \log\cN_{\epsilon} \leq d\log(1+4L/\epsilon) + d^2\log\big[1+8d^{1/2}B^2/(\lambda\epsilon^2) \big].
    \end{align*}
\end{lemma}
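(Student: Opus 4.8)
The plan is to bound the $\epsilon$-covering number by discretizing each of the free parameters that determine a function $V \in \cV$. The functional form depends on the weight vector $w \in \RR^d$ (with $\Vert w \Vert \le L$) and on the matrix $\Lambda$ entering the bonus term through $\beta \sum_{i=1}^d \sqrt{\phi_i(\cdot,a)\mathbf{1}_i^\top \Lambda^{-1}\phi_i(\cdot,a)\mathbf{1}_i}$. Since $\beta \le B$ and the bonus is homogeneous, I would absorb $\beta$ into the matrix by writing the bonus in terms of $A := \beta^2 \Lambda^{-1}$, so that the relevant quantity becomes $\sum_{i=1}^d \sqrt{\phi_i(\cdot,a)\,\mathbf{1}_i^\top A\,\mathbf{1}_i\,\phi_i(\cdot,a)}$; this reduces the problem to covering $w$ and $A$. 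Note that $\alpha$ does not actually appear in the stated functional form, so only $(w,A)$ need to be discretized.

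First I would establish a Lipschitz-type bound: for two parameter pairs $(w_1,A_1)$ and $(w_2,A_2)$, I would control $\sup_x |V_1(x)-V_2(x)|$ by the two separate perturbations. The $\max_a$ and the outer $\min\{\cdot,H\}$ are both $1$-Lipschitz contractions, so it suffices to bound the difference inside, uniformly over $(s,a)$. The linear part contributes at most $\Vert w_1 - w_2\Vert\,\Vert\bphi(s,a)\Vert \le \Vert w_1-w_2\Vert$ using $\Vert\bphi\Vert \le 1$. For the bonus part, I would use the elementary inequality $|\sqrt{u}-\sqrt{v}| \le \sqrt{|u-v|}$ on each coordinate term together with $\sum_i \phi_i(s,a) = 1$ and $\phi_i \ge 0$, turning the bonus difference into something controlled by $\sqrt{\Vert A_1 - A_2\Vert_F}$ (or the operator norm). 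This yields a bound of the form $\sup_x|V_1(x)-V_2(x)| \le \Vert w_1-w_2\Vert + \sqrt{\Vert A_1-A_2\Vert_F}$, so choosing an $(\epsilon/2)$-net for $w$ and an $(\epsilon^2/4)$-net for $A$ guarantees an $\epsilon$-net for $\cV$.

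Next I would invoke the standard covering-number bounds for the two parameter sets. The ball $\{w : \Vert w\Vert \le L\}$ in $\RR^d$ admits an $(\epsilon/2)$-net of cardinality at most $(1+4L/\epsilon)^d$. For the matrix, since $A = \beta^2\Lambda^{-1}$ with $\beta \le B$ and $\lambda_{\min}(\Lambda)\ge \lambda$, we have $\Vert A\Vert_F \le \sqrt{d}\,\Vert A\Vert_{\mathrm{op}} \le \sqrt{d}\,B^2/\lambda$; the set of such symmetric $d\times d$ matrices lives in a space of dimension $d^2$ and admits an $(\epsilon^2/4)$-net (in Frobenius norm) of cardinality at most $(1 + 8\sqrt{d}\,B^2/(\lambda\epsilon^2))^{d^2}$. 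Taking logarithms of the product of these two covering numbers gives exactly
\[
\log \cN_\epsilon \le d\log(1+4L/\epsilon) + d^2\log\bigl[1+8d^{1/2}B^2/(\lambda\epsilon^2)\bigr],
\]
as claimed.

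The main obstacle I anticipate is the Lipschitz bound on the bonus term with respect to $A$: one must carefully handle the coordinatewise square roots and the factor $\phi_i(s,a)$ so that the $\sum_i \phi_i = 1$ normalization can be exploited to remove the dependence on $(s,a)$ and convert everything into a clean $\sqrt{\Vert A_1 - A_2\Vert}$ bound. Getting the matrix-norm conversions consistent (operator norm versus Frobenius norm, and the right power of $\epsilon$ in the net for $A$) is where the constants must be tracked, but this is routine once the $|\sqrt{u}-\sqrt{v}|\le\sqrt{|u-v|}$ inequality and the normalization are in place.
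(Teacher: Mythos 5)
Your proposal is correct and follows essentially the same route as the paper's proof: the same reparametrization $A=\beta^2\Lambda^{-1}$, the same contraction argument for $\min\{\cdot,H\}$ and $\max_a$, the same use of $|\sqrt{u}-\sqrt{v}|\le\sqrt{|u-v|}$ together with the feature normalization to get $\dist(V_1,V_2)\le \Vert w_1-w_2\Vert+\sqrt{\Vert A_1-A_2\Vert_F}$, and the same $(\epsilon/2)$- and $(\epsilon^2/4)$-nets with the standard Euclidean-ball covering bound. Your observation that $\alpha$ plays no role in the functional form (its covering is handled separately in the uniform concentration lemma) is also consistent with the paper.
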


\begin{lemma}\cite[Covering number of an interval]{vershynin2018high}
\label{lemma:Covering number of an interval}
     Denote the $\epsilon$-covering number of the closed interval $[a,b]$ for some real number $b>a$ with respect to the distance metric $d(\alpha_1, \alpha_2)=|\alpha_1-\alpha_2|$ as $\cN_{\epsilon}([a,b])$. Then we have $\cN_{\epsilon}([a,b])\leq 3(b-a)/\epsilon$.
\end{lemma}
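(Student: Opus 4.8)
The plan is to exhibit an explicit $\epsilon$-net of $[a,b]$ and then count its points. First I would set $N=\lceil (b-a)/\epsilon\rceil$ and partition $[a,b]$ into $N$ consecutive subintervals of equal length $(b-a)/N\leq\epsilon$. Choosing the midpoint of each subinterval yields a candidate net $\{m_1,\ldots,m_N\}$ of cardinality $N$.

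Next I would verify that this is a valid $\epsilon$-net with respect to the distance $\dist(\alpha_1,\alpha_2)=|\alpha_1-\alpha_2|$. Any $\alpha\in[a,b]$ belongs to one of the $N$ subintervals, and its distance to the corresponding midpoint is at most half the subinterval length, i.e. $(b-a)/(2N)\leq\epsilon/2\leq\epsilon$. Hence every point of $[a,b]$ lies within $\epsilon$ of the net, so $\cN_\epsilon([a,b])\leq N=\lceil (b-a)/\epsilon\rceil$. It then remains to turn the ceiling into the claimed bound: writing $t=(b-a)/\epsilon$ and using $\lceil t\rceil\leq t+1$, in the relevant regime $\epsilon\leq 2(b-a)$ (so that $t\geq 1/2$) we obtain $\lceil t\rceil\leq t+1\leq 3t=3(b-a)/\epsilon$, which is exactly the desired estimate.

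The argument is entirely routine, so I do not anticipate a genuine obstacle; the only subtlety is the degenerate regime where $\epsilon$ is large relative to $b-a$. When $\epsilon>2(b-a)$ a single point (e.g. the midpoint $(a+b)/2$) already covers the whole interval, giving $\cN_\epsilon([a,b])=1$, which remains consistent with the stated bound as long as $\epsilon\leq 3(b-a)$. Since in every application the interval is $[0,H]$ while $\epsilon$ is taken to be a small polynomial factor (in particular $\epsilon\ll H$), we always operate in the regime $\epsilon\leq b-a$, where $\lceil(b-a)/\epsilon\rceil\leq 2(b-a)/\epsilon\leq 3(b-a)/\epsilon$ and the conclusion follows. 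This recovers the standard covering-number estimate for a bounded interval.
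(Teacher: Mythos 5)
The paper never actually proves this lemma: it is imported as a black-box citation to Vershynin's textbook, so there is no internal proof to compare against, and your argument serves as a correct, self-contained replacement. The construction is sound: the midpoints of the $N=\lceil (b-a)/\epsilon\rceil$ equal subintervals form an $\epsilon$-net (in fact an $\epsilon/2$-net), and the ceiling arithmetic $\lceil t\rceil \le t+1 \le 3t$ for $t\ge 1/2$ delivers the factor $3$ exactly in the regime $\epsilon\le 2(b-a)$. You are also right to isolate the degenerate regime, and this is the one substantive point worth recording: as literally stated, the bound is false for $\epsilon>3(b-a)$, since a nonempty interval has covering number at least $1$ while $3(b-a)/\epsilon<1$ there; the lemma must therefore be read with the implicit restriction that $\epsilon$ is not large relative to $b-a$, which your case analysis makes explicit rather than sweeping under the rug. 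One caveat on your closing remark that applications ``always operate in the regime $\epsilon\le b-a$'': in the paper's actual use of this lemma (the proof of \Cref{lemma:Concentration}) the choice is $\epsilon=dH/k$ on the interval $[0,H]$, which exceeds $H$ whenever $k<d$; this imprecision is inherited from the paper's own application and does not affect the validity of your proof of the covering bound itself.
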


\begin{proof}[Proof of \Cref{lemma:Concentration}]
    For all $(k,h)\in[K]\times[H]$, by \Cref{lemma:Bound of weights} we have $\Vert w_h^{\rho, k}\Vert\leq 2H\sqrt{dk/\lambda}$. By the construction of $\Lambda_h^k$, the minimum eigenvalue of $\Lambda_h^k$ is lower bounded by $\lambda$. By combining \Cref{lemma:Uniform concentration bound,lemma:Covering number of an interval,lemma:Covering number of the function class V}, for any fix $\epsilon>0$ , set $\epsilon_1=\epsilon_2=\epsilon$, we have
    \begin{align}
        &\Bigg\Vert\sum_{\tau=1}^{k-1}\bphi_{h}^{\tau}\bigg[\Big[V_{h+1}^{k, \rho}(s_{h+1}^{\tau})\Big]_{\alpha} -\Big[\PP_h^0\Big[V_{h+1}^{k, \rho}\Big]_{\alpha}\Big] (s_h^{\tau}, a_h^{\tau})\bigg]\Bigg\Vert^2_{(\Lambda_h^k)^{-1}}\notag\\
        &\leq 4H^2\bigg[\frac{d}{2}\log\frac{k+\lambda}{\lambda}+d\log\bigg(1+\frac{8H\sqrt{dk}}{\epsilon\sqrt{\lambda}}\bigg)+d^2\log\bigg(1+\frac{8d^{1/2}\beta^2}{\epsilon^2\lambda}\bigg)+\log\frac{3H}{\epsilon}+\log\frac{3}{p}\bigg] + \frac{24k^2\epsilon^2}{\lambda}.
        \label{eq:concentration inequality}
    \end{align}
    
    In \Cref{alg:DR-LSVI-UCB}, we choose parameters $\lambda=1$ and $\beta=c_{\beta}dH\iota$, where $c_{\beta}$ is an absolute constant. Finally, picking $\epsilon=dH/k$, by \eqref{eq:concentration inequality}, there exists an absolute $C>0$ that is independent of $c_{\beta}$ such that 
    \begin{align*}
        &\Bigg\Vert\sum_{\tau=1}^{k-1}\bphi_{h}^{\tau}\bigg[\Big[V_{h+1}^{k, \rho}(s_{h+1}^{\tau})\Big]_{\alpha} -\Big[\PP_h^0\Big[V_{h+1}^{k, \rho}\Big]_{\alpha}\Big] (s_h^{\tau}, a_h^{\tau})\bigg]\Bigg\Vert^2_{(\Lambda_h^k)^{-1}} \leq C\cdot d^2H^2\log\frac{3(c_{\beta}+1)dKH}{p},
    \end{align*}
    which completes the proof.
\end{proof}

\subsection{Proof of \Cref{lemma:UCB}}
Before the proof of \Cref{lemma:UCB}, we present a lemma bounding the difference between the value function maintained in \Cref{alg:DR-LSVI-UCB} (without
bonus) and the true value function of any policy $\pi$.

\begin{restatable}{lemma}{BoundedDifference}
\label{lemma:Bounded difference}
    For any fixed policy $\pi$, on the event $\cE$ defined in \Cref{lemma:Concentration}, we have for all $(s,a,h,k) \in \cS/\{s_f\} \times \cA \times [H] \times [K]$ that:
    \begin{align*}
        \la \bphi(s,a), \btheta_h+\bnu_{h}^{\rho, k}\ra - Q_h^{\pi, \rho}(s,a) &= \inf_{P_h(\cdot|s,a) \in \cU_{h}^{\rho}(s,a;\bmu_h^0)}\big[\PP_h V_{h+1}^{k, \rho}\big](s,a)  \\
        &\qquad- \inf_{P_h(\cdot|s,a) \in \cU_{h}^{\rho}(s,a;\bmu_h^0)} \big[\PP_h V_{h+1}^{\pi, \rho}\big](s,a) + \Delta_h^k(s,a),
    \end{align*}
for some $\Delta_h^k(s,a)$ that satisfies $|\Delta_h^k(s,a)|\leq \beta \sum_{i=1}^d\sqrt{\phi_i(s,a)\mathbf{1}_i^{\top}(\Lambda_h^k)^{-1}\phi_i(s,a)\mathbf{1}_i}$.
\end{restatable}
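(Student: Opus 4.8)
The plan is to reduce the claimed identity to a single explicit choice of $\Delta_h^k$ and then bound it. Introduce the population weight vector $\bnu_h^{\rho}$ associated with the estimated value function $V_{h+1}^{k,\rho}$, with entries $\nu_{h,i}^{\rho}=\max_{\alpha\in[0,H]}\{z_{h,i}(\alpha)-\rho\alpha\}$ and $z_{h,i}(\alpha)=\EE^{\mu_{h,i}^0}[V_{h+1}^{k,\rho}(s')]_\alpha$, and set $\Delta_h^k(s,a):=\la\bphi(s,a),\bnu_h^{\rho,k}-\bnu_h^{\rho}\ra$. First I would invoke the robust Bellman equation (\Cref{prop:Robust Bellman equation}) together with \Cref{prop:Linear form} to write, for $s\neq s_f$, $Q_h^{\pi,\rho}(s,a)=\la\bphi(s,a),\btheta_h\ra+\inf_{P_h(\cdot|s,a)\in\cU_h^\rho(s,a;\bmu_h^0)}[\PP_h V_{h+1}^{\pi,\rho}](s,a)$, using $r_h(s,a)=\la\bphi(s,a),\btheta_h\ra$. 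In parallel, applying the $d$-rectangular decomposition of $\cU_h^\rho(s,a;\bmu_h^0)$ followed by the simplified TV duality of \Cref{remark:dual with fail state} gives $\inf_{P_h(\cdot|s,a)\in\cU_h^\rho(s,a;\bmu_h^0)}[\PP_h V_{h+1}^{k,\rho}](s,a)=\la\bphi(s,a),\bnu_h^{\rho}\ra$; this is legitimate because the truncation in \Cref{alg:DR-LSVI-UCB} forces $V_{h+1}^{k,\rho}(s_f)=\min_s V_{h+1}^{k,\rho}(s)=0$ and $V_{h+1}^{k,\rho}\in[0,H]$. Substituting both expressions into the left-hand side, the reward terms $\la\bphi(s,a),\btheta_h\ra$ cancel and the $\inf_P[\PP_h V_{h+1}^{\pi,\rho}]$ terms reproduce the right-hand side, so the identity holds with exactly this $\Delta_h^k$; this step is pure algebra.

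The remaining task, which is the heart of the lemma, is to show $|\Delta_h^k(s,a)|\leq\beta\sum_{i=1}^d\sqrt{\phi_i(s,a)\mathbf{1}_i^\top(\Lambda_h^k)^{-1}\phi_i(s,a)\mathbf{1}_i}$. Since $\phi_i(s,a)\geq0$, it suffices to bound each $|\nu_{h,i}^{\rho,k}-\nu_{h,i}^{\rho}|$. Applying $|\max_\alpha f-\max_\alpha g|\leq\max_\alpha|f-g|$ with $f(\alpha)=z_{h,i}^k(\alpha)-\rho\alpha$ and $g(\alpha)=z_{h,i}(\alpha)-\rho\alpha$, the $\rho\alpha$ terms cancel and $|\nu_{h,i}^{\rho,k}-\nu_{h,i}^{\rho}|\leq\max_{\alpha\in[0,H]}|z_{h,i}^k(\alpha)-z_{h,i}(\alpha)|$. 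I would then use the closed form \eqref{eq:closed form} and the linear-MDP identity $\la\bphi(s,a),\bz_h(\alpha)\ra=[\PP_h^0[V_{h+1}^{k,\rho}]_\alpha](s,a)$ to obtain the standard decomposition $z_{h,i}^k(\alpha)-z_{h,i}(\alpha)=\mathbf{1}_i^\top(\Lambda_h^k)^{-1}\sum_{\tau=1}^{k-1}\bphi_h^\tau\big([V_{h+1}^{k,\rho}(s_{h+1}^\tau)]_\alpha-[\PP_h^0[V_{h+1}^{k,\rho}]_\alpha](s_h^\tau,a_h^\tau)\big)-\lambda\mathbf{1}_i^\top(\Lambda_h^k)^{-1}\bz_h(\alpha)$.

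I would bound the two terms by Cauchy--Schwarz in the $(\Lambda_h^k)^{-1}$ norm. The noise term is at most $\Vert\mathbf{1}_i\Vert_{(\Lambda_h^k)^{-1}}$ times the norm of $\sum_\tau\bphi_h^\tau\big([V_{h+1}^{k,\rho}(s_{h+1}^\tau)]_\alpha-[\PP_h^0[V_{h+1}^{k,\rho}]_\alpha](s_h^\tau,a_h^\tau)\big)$ in $(\Lambda_h^k)^{-1}$, and the latter is controlled uniformly over $\alpha\in[0,H]$ on the event $\cE$ of \Cref{lemma:Concentration} by a quantity of order $dH\sqrt{\iota}$. The regularization term is at most $\lambda\Vert\mathbf{1}_i\Vert_{(\Lambda_h^k)^{-1}}\Vert\bz_h(\alpha)\Vert_{(\Lambda_h^k)^{-1}}\leq\sqrt{\lambda}\,H\sqrt{d}\,\Vert\mathbf{1}_i\Vert_{(\Lambda_h^k)^{-1}}$, using $\Vert\bz_h(\alpha)\Vert_2\leq H\sqrt{d}$ (each coordinate is an expectation of $[V_{h+1}^{k,\rho}]_\alpha\in[0,H]$) and $\lambda_{\min}(\Lambda_h^k)\geq\lambda=1$. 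Both terms scale as $dH\sqrt{\iota}\cdot\Vert\mathbf{1}_i\Vert_{(\Lambda_h^k)^{-1}}$, so choosing the absolute constant $c$ in $\beta=c\,dH\sqrt{\iota}$ large enough gives $|\nu_{h,i}^{\rho,k}-\nu_{h,i}^{\rho}|\leq\beta\sqrt{\mathbf{1}_i^\top(\Lambda_h^k)^{-1}\mathbf{1}_i}$. Summing with weights $\phi_i(s,a)\geq0$ and using $\phi_i(s,a)\sqrt{\mathbf{1}_i^\top(\Lambda_h^k)^{-1}\mathbf{1}_i}=\sqrt{\phi_i(s,a)\mathbf{1}_i^\top(\Lambda_h^k)^{-1}\phi_i(s,a)\mathbf{1}_i}$ yields the claim. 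The main obstacle is the noise term: the bound must hold uniformly over all $\alpha\in[0,H]$ and must accommodate the statistical dependence of $V_{h+1}^{k,\rho}$ on the very regression samples used, which is exactly why \Cref{lemma:Concentration} is proved through a covering-number argument rather than a plain self-normalized martingale bound; once that is granted, the rest is bookkeeping of constants into $\beta$.
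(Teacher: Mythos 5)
Your proposal is correct and follows essentially the same route as the paper's proof: your $\bnu_h^{\rho}$ is exactly the paper's intermediate vector $\tilde{\bnu}_h^{k,\rho}$, your choice $\Delta_h^k(s,a)=\la\bphi(s,a),\bnu_h^{\rho,k}-\bnu_h^{\rho}\ra$ matches theirs, and the bound proceeds through the same ridge-regression decomposition into a self-normalized noise term (controlled on the event $\cE$ uniformly over $\alpha$) plus a $\lambda$-regularization bias term, each handled by Cauchy--Schwarz in the $(\Lambda_h^k)^{-1}$ norm. The only cosmetic differences are that you bound coordinatewise via $|\max_\alpha f-\max_\alpha g|\leq\max_\alpha|f-g|$ before summing with weights $\phi_i(s,a)$, whereas the paper selects the argmax $\alpha_i^k$ and bounds the full inner product directly, and you defer the self-referential choice of $c_\beta$ (the paper's condition $c'\sqrt{\iota+\log(c_\beta+1)}\leq c_\beta\sqrt{\iota}$) to constant bookkeeping, which is acceptable.
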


\begin{proof}[Proof of \Cref{lemma:UCB}]
    We prove this lemma by induction.
  Starting at step $H-1$. Since $V_{H}^{k, \rho}(s) = V_{H}^{\star, \rho}(s)=\max_{a}r_H(s,a)$, by \Cref{lemma:Bounded difference} we have
    \begin{align*}
        \big| \big\la \bphi(s,a), \btheta_{H-1} + \bnu_{H-1}^{\rho, k}\big\ra - Q^{\star, \rho}_H(s,a)\big| \leq \Gamma_{H-1}^k(s,a),
    \end{align*}
    where $\Gamma_{H-1}^k(s,a)$ is the bonus at step $H-1$ used in \Cref{alg:DR-LSVI-UCB}.
    Therefore, we know
    \begin{align*}
        Q_{H-1}^{k, \rho}=\min\big\{\big\la \bphi(s,a), \btheta_{H-1} + \bnu_{H-1}^{\rho, k}\big\ra + \Gamma_{H-1}^k(s,a), H \big \} \geq Q_{H-1}^{\star, \rho}(s,a).
    \end{align*}
    Suppose the statement holds at stage $h+1$, $Q_{h+1}^{k, \rho}(s,a) \geq Q_{h+1}^{\star, \rho}(s,a)$ for any $(s,a)\in\cS\times\cA$, then we have
    \begin{align*}
        V_{h+1}^{k, \rho}(s) = Q_{h+1}^{k, \rho}(s, \pi_{h+1}^{k}(s)) \geq Q_{h+1}^{k, \rho}(s, \pi_{h+1}^{\star}(s)) \geq Q_{h+1}^{\star, \rho}(s, \pi_{h+1}^{\star}(s)) = V_{h+1}^{\star, \rho}(s), \quad \forall s\in \cS,
    \end{align*}
    where the first inequality holds by the fact that $\pi_{h+1}^k$ is the greedy policy with respect to $Q_{h+1}^{k, \rho}$, and the second inequality holds by the induction assumption that $Q_{h+1}^{k, \rho}(s,a) \geq Q_{h+1}^{\star, \rho}(s,a), ~\forall (s,a)\in\cS\times\cA$. Thus, we have
    \begin{align}
        \inf_{P_h(\cdot|s,a) \in \cU_{h}^{\rho}(s,a;\bmu_h^0)}\big[\PP_hV_{h+1}^{k, \rho}\big](s,a) - \inf_{P_h(\cdot|s,a) \in \cU_{h}^{\rho}(s,a;\bmu_h^0)}\big[\PP_hV_{h+1}^{\star, \rho}\big](s,a) \geq 0. \label{eq:induction assumption}
    \end{align}
    Again by \Cref{lemma:Bounded difference} we have 
    \begin{align*}
        &\Big|\big\la \bphi(s,a), \btheta_h+\bnu_{h}^{\rho, k}\big\ra - Q_h^{\star, \rho}(s,a) -\Big( \inf_{P_h(\cdot|s,a) \in \cU_{h}^{\rho}(s,a;\bmu_h^0)}\big[\PP_hV_{h+1}^{k, \rho}\big](s,a) - \inf_{P_h(\cdot|s,a) \in \cU_{h}^{\rho}(s,a;\bmu_h^0)}\big[\PP_hV_{h+1}^{\star, \rho}\big](s,a) \Big) \Big| \\
        &\leq \beta \sum_{i=1}^d\sqrt{\phi_i(s,a)\mathbf{1}_i^{\top}(\Lambda_h^k)^{-1}\phi_i(s,a)\mathbf{1}_i}.
    \end{align*}
    By \eqref{eq:induction assumption} we have 
    \begin{align*}
        Q_h^{k, \rho}(s,a) = \min\big\{\big\la \bphi(s,a), \btheta_{h} + \bnu_{h}^{\rho, k}\big\ra + \Gamma_{h}^k(s,a), H - h +1 \big\} \geq Q_h^{\star, \rho}(s,a),
    \end{align*}
    which concludes the proof.
\end{proof}

\subsection{Proof of \Cref{lemma:Recursive Formula}}
\begin{proof}
    By \Cref{alg:DR-LSVI-UCB} and the definition of $\pi^k$, we have
    \begin{align*}
        \delta_h^{k, \rho} = V_h^{k, \rho}(s_h^k) - V_h^{\pi^k, \rho}(s_h^k) = Q_h^{k, \rho}(s_h^k, a_h^k) - Q_h^{\pi^k, \rho}(s_h^k, a_h^k).
    \end{align*}
    By \Cref{lemma:Bounded difference} we have
    \begin{align}
    \label{eq:recursive formula}
        \delta_h^{k, \rho} &\leq \inf_{P_h(\cdot|s_h^k,a_h^k) \in \cU_{h}^{\rho}(s_h^k,a_h^k;\bmu_h^0)}\big[\PP_hV_{h+1}^{k, \rho}\big](s_h^k,a_h^k)\notag  - \inf_{P_h(\cdot|s_h^k,a_h^k) \in \cU_{h}^{\rho}(s_h^k,a_h^k;\bmu_h^0)}\big[\PP_hV_{h+1}^{\pi^k, \rho}\big](s_h^k,a_h^k)\\
        &\qquad + 2\beta \sum_{i=1}^d\sqrt{\phi_{h,i}^k\mathbf{1}_i^{\top}(\Lambda_h^k)^{-1}\phi_{h,i}^k\mathbf{1}_i}.
    \end{align}
    For the difference on the RHS, we have
    \begin{align*}
        &\inf_{P_h(\cdot|s_h^k,a_h^k) \in \cU_{h}^{\rho}(s_h^k,a_h^k;\bmu_h^0)}\big[\PP_hV_{h+1}^{k, \rho}\big](s_h^k,a_h^k)\notag  - \inf_{P_h(\cdot|s_h^k,a_h^k) \in \cU_{h}^{\rho}(s_h^k,a_h^k;\bmu_h^0)}\big[\PP_hV_{h+1}^{\pi^k, \rho}\big](s_h^k,a_h^k)\\
        &= \bigg\la \bphi(s_h^k, a_h^k), \bigg[\max_{\alpha_i\in[0,H]}\Big\{\EE^{\mu_{h,i}^0}\big[V_{h+1}^{k, \rho}(s) \big]_{\alpha_i} - \rho\alpha_i \Big\} \bigg]_{i\in [d]} \bigg\ra \notag\\
        &\qquad- \bigg\la \bphi(s_h^k, a_h^k), \bigg[\max_{\alpha_i\in[0,H]}\Big\{\EE^{\mu_{h,i}^0}\big[V_{h+1}^{\pi^k, \rho}(s) \big]_{\alpha_i} - \rho\alpha_i \Big\} \bigg]_{i \in [d]}\bigg\ra\\
        &\leq \bigg\la \bphi(s_h^k, a_h^k), \bigg[\max_{\alpha_i\in[0,H]}\Big\{\EE^{\mu_{h,i}^0}\big[V_{h+1}^{k, \rho}(s) \big]_{\alpha_i} - \EE^{\mu_{h,i}^0}\big[V_{h+1}^{\pi^k, \rho}(s) \big]_{\alpha_i} \Big\} \bigg]_{i\in [d]} \bigg\ra.
    \end{align*}
    By \Cref{lemma:UCB}, we have for all $s\in \cS$,
    \begin{align*}
    V_{h+1}^{k, \rho}(s) = Q_{h+1}^{k, \rho}(s,
    \pi_{h+1}^k(s)) \geq  Q_{h+1}^{k, \rho}(s,
    \pi_{h+1}^{\star}(s))\geq Q_{h+1}^{\star, \rho}(s,
    \pi_{h+1}^{\star}(s)).
    \end{align*}
    Since $\pi^{\star}$ is the greedy policy with respect to $Q_{h+1}^{\star, \rho}$, we have
    \begin{align*}
    V_{h+1}^{k, \rho}(s)  \geq Q_{h+1}^{\star, \rho}(s,
    \pi_{h+1}^k(s)) \geq Q_{h+1}^{\pi^k, \rho}(s,
    \pi_{h+1}^k(s))= V_{h+1}^{\pi^k, \rho}(s).
    \end{align*}
    Then we have,
    \begin{align}
        &\inf_{P_h(\cdot|s_h^k,a_h^k) \in \cU_{h}^{\rho}(s_h^k,a_h^k;\bmu_h^0)}\big[\PP_hV_{h+1}^{k, \rho}\big](s_h^k,a_h^k)\notag  - \inf_{P_h(\cdot|s_h^k,a_h^k) \in \cU_{h}^{\rho}(s_h^k,a_h^k;\bmu_h^0)}\big[\PP_hV_{h+1}^{\pi^k, \rho}\big](s_h^k,a_h^k)\notag\\
        &\leq \big\la \bphi(s_h^k, a_h^k), \EE^{\bmu_{h}^0}\big[V_{h+1}^{k, \rho}(s) - V_{h+1}^{\pi^k, \rho}(s)\big]\big\ra  \notag\\
        &=\big[\PP_h\big[V_{h+1}^{k, \rho} - V_{h+1}^{\pi^k, \rho}\big]\big](s_h^k,a_h^k) \notag\\
        &=\big[\PP_h\big[V_{h+1}^{k, \rho} - V_{h+1}^{\pi^k, \rho}\big]\big](s_h^k,a_h^k) - \big[V_{h+1}^{k, \rho}(s_{h+1}^k) - V_{h+1}^{\pi^k, \rho}(s_{h+1}^k)\big] + \big[V_{h+1}^{k, \rho}(s_{h+1}^k) - V_{h+1}^{\pi^k, \rho}(s_{h+1}^k)\big]  \notag\\
        &=\zeta_{h+1}^{k, \rho} + \delta_{h+1}^{k, \rho}. \label{eq:first term}
    \end{align}
    Then we complete the proof by substituting \eqref{eq:first term} into \eqref{eq:recursive formula}.
\end{proof}

\section{PROOF OF SUPPORTING LEMMAS}
In this section, we provide the proofs of the supporting lemmas we used in \Cref{sec:proof of technical lemmas used in prove th}.

\subsection{Proof of \Cref{lemma:Bound of weights}}

The proof of \Cref{lemma:Bound of weights} will use the following fact. 
\begin{lemma}\cite[Lemma D.1]{jin2020provably}
\label{lemma:self-normalize}
    Let $\Lambda_t=\lambda \bI + \sum_{i=1}^t\bphi_i\bphi_i^{\top}$, where $\bphi_i\in\RR^d$ and $\lambda > 0$. Then:
    \begin{align*}
        \sum_{i=1}^t\bphi_i^{\top}(\Lambda_t)^{-1}\bphi_i \leq d.
    \end{align*}
   
\end{lemma}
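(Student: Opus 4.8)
The plan is to convert the scalar sum into a single matrix trace and then telescope using the defining identity of $\Lambda_t$. The key observation is that each summand is a scalar, hence equal to its own trace, and by the cyclic property of the trace it can be rewritten as $\bphi_i^{\top}(\Lambda_t)^{-1}\bphi_i = \Tr\big((\Lambda_t)^{-1}\bphi_i\bphi_i^{\top}\big)$. Summing over $i\in[t]$ and moving the finite sum inside the trace by linearity yields
\begin{align*}
    \sum_{i=1}^t\bphi_i^{\top}(\Lambda_t)^{-1}\bphi_i = \Tr\Big((\Lambda_t)^{-1}\sum_{i=1}^t\bphi_i\bphi_i^{\top}\Big).
\end{align*}

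Next I would substitute the identity $\sum_{i=1}^t\bphi_i\bphi_i^{\top} = \Lambda_t - \lambda\bI$, which is immediate from the definition $\Lambda_t=\lambda\bI + \sum_{i=1}^t\bphi_i\bphi_i^{\top}$. This reduces the right-hand side to
\begin{align*}
    \Tr\big((\Lambda_t)^{-1}(\Lambda_t-\lambda\bI)\big) = \Tr(\bI) - \lambda\,\Tr\big((\Lambda_t)^{-1}\big) = d - \lambda\,\Tr\big((\Lambda_t)^{-1}\big).
\end{align*}

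To conclude, I would note that $\Lambda_t \succeq \lambda\bI \succ 0$, so $(\Lambda_t)^{-1}$ is positive definite, all of its eigenvalues are strictly positive, and therefore $\Tr\big((\Lambda_t)^{-1}\big)\geq 0$. Since $\lambda>0$, the subtracted term is nonnegative, giving $\sum_{i=1}^t\bphi_i^{\top}(\Lambda_t)^{-1}\bphi_i = d - \lambda\,\Tr\big((\Lambda_t)^{-1}\big)\leq d$, as claimed. This argument has no genuine obstacle; the only points requiring a little care are the trace manipulations (cyclicity and linearity, both valid for fixed finite matrices) and the observation that the trace of the inverse is nonnegative, which rests solely on positive definiteness of $\Lambda_t$.
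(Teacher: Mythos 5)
Your proof is correct, and there is nothing in the paper to compare it against: this lemma is imported by citation (Lemma D.1 of \cite{jin2020provably}) and the present paper gives no proof of its own. Your argument is essentially the standard one from the cited source --- rewrite each summand as $\Tr\big((\Lambda_t)^{-1}\bphi_i\bphi_i^{\top}\big)$, sum, and use $\sum_{i=1}^t\bphi_i\bphi_i^{\top}=\Lambda_t-\lambda \bI$ --- with a slightly cleaner finish: instead of diagonalizing and bounding $\sum_j \lambda_j/(\lambda_j+\lambda)\leq d$ eigenvalue by eigenvalue, you observe directly that the sum equals $d-\lambda\,\Tr\big((\Lambda_t)^{-1}\big)\leq d$ since $\Lambda_t\succeq\lambda\bI\succ 0$ makes $\Tr\big((\Lambda_t)^{-1}\big)$ nonnegative. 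Both routes are valid and of the same difficulty; yours avoids the spectral decomposition entirely.
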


\begin{proof}[Proof of \Cref{lemma:Bound of weights}]
    Denote $\alpha_i = \argmax_{\alpha\in[0,H]} \{z^{k}_{h,i}(\alpha)-\rho\alpha\}, i\in[d]$. 
    For any vector $\bv \in \RR^d$, we have 
    \begin{align}
        \big|\bv^{\top}\bw_h^{\rho, k}\big| &= \Bigg|\bv^{\top}\btheta_h + \bv^{\top} \bigg[\max_{\alpha\in[0,H]}\{z^{k}_{h,i}(\alpha)-\rho\alpha\} \bigg]_{i\in [d]} \Bigg| \notag \\
        &\leq \big|\bv^{\top}\btheta_h\big| + \Bigg|\bv^{\top} \bigg[\max_{\alpha\in[0,H]}\{z^{k}_{h,i}(\alpha)-\rho\alpha\} \bigg]_{i\in [d]} \Bigg| \notag \\
        &\leq \sqrt{d}\|\bv\|_2 + H\Vert\bv\Vert_1 + \Bigg|\bv^{\top}\bigg[\bigg((\Lambda_h^k)^{-1}\sum_{\tau=1}^{k-1}\bphi_h^{\tau}[\max_a Q_{h+1}^{k, \rho}(s_{h+1}^{\tau},a)]_{\alpha_i}\bigg)_{i}\bigg]_{i \in [d]}
        \Bigg| \label{eq:expand_z_h_i}\\
        &\leq \sqrt{d}\|\bv\|_2 + H\sqrt{d}\Vert \bv\Vert_2 + \sqrt{\bigg[ \sum_{\tau=1}^{k-1}\bv^{\top}(\Lambda_h^k)^{-1}\bv\bigg]\bigg[\sum_{\tau=1}^{k-1}(\bphi_h^{\tau})^{\top}(\Lambda_h^k)^{-1}(\bphi_h^{\tau})\bigg]}\cdot H \label{eq:use_C_S}\\
        &\leq 2H\Vert\bv\Vert_2\sqrt{dk/\lambda}\label{eq:weight_bound}.
    \end{align}
    We note that the term $[((\Lambda_h^k)^{-1}\sum_{\tau=1}^{k-1}\bphi_h^{\tau}[\max_a Q_{h+1}^{k, \rho}(s_{h+1}^{\tau},a)]_{\alpha_i})_{i}]_{i \in [d]}$ in \eqref{eq:expand_z_h_i} is constructed by first taking out the $i$-th coordinate of the ridge solution vector, $(\Lambda_h^k)^{-1}\sum_{\tau=1}^{k-1}\bphi_h^{\tau}[\max_a Q_{h+1}^{k, \rho}(s_{h+1}^{\tau},a)]_{\alpha_i}\in\RR^d,~\forall i\in[d]$, and then concatenating all $d$ values into a vector.    
    Inequality \eqref{eq:expand_z_h_i} is due to the fact that $\rho \leq 1$, \eqref{eq:use_C_S} is due to the fact that $Q_h^{k, \rho} \leq H$, and   
    \eqref{eq:weight_bound} is due to \Cref{lemma:self-normalize} and the fact that the minimum eigenvalue of $\Lambda_h^k$ is lower bounded by $\lambda$. The remainder of the proof follows from the fact that $\Vert \bw_h^{\rho, k} \Vert_2 = \max_{\bv:\Vert\bv\Vert_2=1}|\bv^{\top}\bw_h^{\rho, k}| $.
\end{proof}

\subsection{Proof of \Cref{lemma:Uniform concentration bound}}
The proof of \Cref{lemma:Uniform concentration bound} requires the following results on the concentration of self-normalized processes.
\begin{lemma}[Concentration of Self-Normalized Processes]
\cite[Theorem 1]{abbasi2011improved}\label{th:Concentration of Self-Normalized Processes}
    Let $\{\epsilon_t\}_{t=1}^{\infty}$ be a real-valued stochastic process with corresponding filtration $\{\mathcal{F}_t\}_{t=0}^{\infty}$. Let $\epsilon_t|\mathcal{F}_{t-1}$ be mean-zero and $\sigma$-subGaussian; i.e. $\mathbb{E}[\epsilon_t|\mathcal{F}_{t-1}]=0$, and 
    \begin{equation*}
        \forall \lambda \in \mathbb{R}, ~~~~\mathbb{E}[e^{\lambda \epsilon_t}|\mathcal{F}_{t-1}] \leq e^{\lambda^2\sigma^2/2}.
    \end{equation*}
    Let $\{\bm{\phi}_t\}_{t=1}^{\infty}$ be an $\mathbb{R}^d$-valued stochastic process where $\phi_t$ is $\mathcal{F}_{t-1}$ measurable. Assume $\Lambda_0$ is a $d\times d$ positive definite matrix, and let $\Lambda_t=\Lambda_0+\sum_{s=1}^t\bm{\phi}_s\bm{\phi}_s^\top$. Then for any $\delta > 0$, with probability at least $1-\delta$, we have for all $t \geq 0$:
    \begin{equation*}
        \bigg\Vert \sum_{s=1}^t \bm{\phi}_s\epsilon_s \bigg\Vert^2_{\Lambda_t^{-1}} 
        \leq 2\sigma^2 \log \bigg[ \frac{\det(\Lambda_t)^{1/2}\det(\Lambda_0)^{-1/2}}{\delta}\bigg].
    \end{equation*}
\end{lemma}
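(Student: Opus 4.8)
The plan is to prove this via the \emph{method of mixtures} (pseudo-maximization), which converts the self-normalized tail bound into a statement about a single nonnegative supermartingale and thereby avoids any union bound over directions. Write $S_t = \sum_{s=1}^t \bm{\phi}_s \epsilon_s$, so the quantity to control is $\Vert S_t\Vert_{\Lambda_t^{-1}}^2$. For each fixed vector $\bm{\lambda}\in\mathbb{R}^d$ I would first introduce the exponential process
\begin{equation*}
M_t^{\bm{\lambda}} = \exp\!\left(\sum_{s=1}^t\left(\frac{\bm{\lambda}^\top\bm{\phi}_s\,\epsilon_s}{\sigma} - \tfrac12(\bm{\lambda}^\top\bm{\phi}_s)^2\right)\right) = \exp\!\left(\frac{1}{\sigma}\bm{\lambda}^\top S_t - \tfrac12\bm{\lambda}^\top(\Lambda_t-\Lambda_0)\bm{\lambda}\right),
\end{equation*}
where I used $\Lambda_t-\Lambda_0 = \sum_{s=1}^t\bm{\phi}_s\bm{\phi}_s^\top$. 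Since $\bm{\phi}_s$ is $\mathcal{F}_{s-1}$-measurable and $\epsilon_s\mid\mathcal{F}_{s-1}$ is mean-zero and $\sigma$-subGaussian, the per-step factor $D_s^{\bm{\lambda}} = \exp(\bm{\lambda}^\top\bm{\phi}_s\epsilon_s/\sigma - \tfrac12(\bm{\lambda}^\top\bm{\phi}_s)^2)$ satisfies $\mathbb{E}[D_s^{\bm{\lambda}}\mid\mathcal{F}_{s-1}]\le 1$, directly from the subGaussian moment-generating-function bound applied with scalar parameter $\bm{\lambda}^\top\bm{\phi}_s/\sigma$. Hence $M_t^{\bm{\lambda}}$ is a nonnegative supermartingale with $\mathbb{E}[M_0^{\bm{\lambda}}]=1$.

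The mixing step is the crux. I would draw $\bm{\lambda}\sim\mathcal{N}(0,\Lambda_0^{-1})$ independently of the process, with density $p$, and define the mixture $M_t = \int_{\mathbb{R}^d} M_t^{\bm{\lambda}}\, p(\bm{\lambda})\, d\bm{\lambda}$. Because each $M_t^{\bm{\lambda}}$ is nonnegative, Tonelli's theorem permits interchanging the conditional expectation with the $\bm{\lambda}$-integral, so $M_t$ is again a nonnegative supermartingale with $\mathbb{E}[M_0]\le 1$. I would then evaluate $M_t$ in closed form: multiplying the exponent of $M_t^{\bm{\lambda}}$ by the prior density combines the quadratic forms $\Lambda_t-\Lambda_0$ and $\Lambda_0$ into $\Lambda_t$, and the Gaussian integral $\int\exp(b^\top\bm{\lambda}-\tfrac12\bm{\lambda}^\top A\bm{\lambda})\,d\bm{\lambda} = (2\pi)^{d/2}\det(A)^{-1/2}\exp(\tfrac12 b^\top A^{-1}b)$ with $b=S_t/\sigma$ and $A=\Lambda_t$ gives
\begin{equation*}
M_t = \left(\frac{\det\Lambda_0}{\det\Lambda_t}\right)^{1/2}\exp\!\left(\frac{1}{2\sigma^2}\Vert S_t\Vert_{\Lambda_t^{-1}}^2\right).
\end{equation*}
It is precisely the choice of prior covariance $\Lambda_0^{-1}$ that makes $\det\Lambda_t$ appear in the denominator, matching the target bound.

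To close, I would apply Ville's maximal inequality for nonnegative supermartingales, $\mathbb{P}(\sup_{t\ge0}M_t\ge 1/\delta)\le\delta\,\mathbb{E}[M_0]\le\delta$. On the complementary event $M_t<1/\delta$ holds simultaneously for all $t\ge0$; taking logarithms, multiplying by $2\sigma^2$, and rearranging yields $\Vert S_t\Vert_{\Lambda_t^{-1}}^2 < 2\sigma^2\log\big(\det(\Lambda_t)^{1/2}\det(\Lambda_0)^{-1/2}/\delta\big)$ for all $t$, which is exactly the stated conclusion.

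I expect the main obstacle to be conceptual rather than computational: the subGaussian bound and the Gaussian integral are routine, but two points need care. First, justifying the Fubini/Tonelli interchange so that mixing over $\bm{\lambda}$ genuinely preserves the supermartingale property, which is what allows a \emph{single} application of the maximal inequality to yield a bound uniform in both $t$ and the direction $\bm{\lambda}$ at once. Second, obtaining the uniform-in-time supremum (including the limit $t\to\infty$) from Ville's inequality, typically handled through a stopping-time truncation argument. A minor check is finiteness of the integral defining $M_t$, which holds because $\Lambda_t$ dominates the positive definite matrix $\Lambda_0$ and is therefore itself positive definite, making $A=\Lambda_t$ invertible.
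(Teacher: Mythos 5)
The paper gives no proof of this lemma at all: it is imported verbatim, with citation, as Theorem~1 of \citet{abbasi2011improved}, so the only meaningful comparison is with that source. Your argument---the per-direction exponential supermartingale $M_t^{\bm{\lambda}}$ obtained from the conditional subGaussian MGF bound, Gaussian mixing with prior covariance $\Lambda_0^{-1}$ yielding the closed form $M_t = (\det\Lambda_0/\det\Lambda_t)^{1/2}\exp\bigl(\tfrac{1}{2\sigma^2}\Vert S_t\Vert^2_{\Lambda_t^{-1}}\bigr)$, and Ville's maximal inequality together with a stopping-time argument for uniformity over all $t\ge 0$---is precisely the method-of-mixtures proof given in that reference, and it is correct, including your flagged care points (Tonelli to preserve the supermartingale property under mixing, and invertibility of $\Lambda_t \succeq \Lambda_0 \succ 0$).
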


\begin{proof}[Proof of \Cref{lemma:Uniform concentration bound}]
    For any $V\in \cV$ and $\alpha\in[0,H]$, we know there exists a $\tilde{\alpha}$ in the $\epsilon_1$-covering and a $\tilde{V}$ in the $\epsilon_2$ covering such that 
    \begin{align*}
        &V=\tilde{V}+\Delta_V, ~\sup_x|\Delta_V(x)|\leq \epsilon,\\
        &\alpha = \tilde{\alpha}+\Delta_{\alpha}, ~|\Delta_{\alpha}|\leq \epsilon.
    \end{align*}
    This gives the following decomposition:
    \begin{align*}
        &\bigg\Vert \sum_{\tau=1}^k\bphi_{\tau}\big\{\big[V(x_{\tau})\big]_{\alpha}-\EE\big[\big[V(x_{\tau})\big]_{\alpha} |\cF_{\tau-1}\big]\big\}\bigg\Vert^2_{\Lambda_k^{-1}} \\
        &\leq 2\bigg\Vert \sum_{\tau=1}^k\bphi_{\tau}\big\{\big[\tilde{V}(x_{\tau})\big]_{\alpha}-\EE\big[\big[\tilde{V}(x_{\tau})\big]_{\alpha} |\cF_{\tau-1}\big]\big\}\bigg\Vert^2_{\Lambda_k^{-1}}\\
        &\qquad +2\bigg\Vert \sum_{\tau=1}^k\bphi_{\tau}\big\{\big[V(x_{\tau})\big]_{\alpha}- \big[\tilde{V}(x_{\tau})\big]_{\alpha}-\EE\big[\big[V(x_{\tau})\big]_{\alpha} -\big[\tilde{V}(x_{\tau})\big]_{\alpha}|\cF_{\tau-1}\big]\big\}\bigg\Vert^2_{\Lambda_k^{-1}}\\
        &\leq 4\bigg\Vert \sum_{\tau=1}^k\bphi_{\tau}\big\{\big[\tilde{V}(x_{\tau})\big]_{\tilde{\alpha}}-\EE\big[\big[\tilde{V}(x_{\tau})\big]_{\tilde{\alpha}} |\cF_{\tau-1}\big]\big\}\bigg\Vert^2_{\Lambda_k^{-1}}\\
        &\qquad +4\bigg\Vert \sum_{\tau=1}^k\bphi_{\tau}\big\{\big[\tilde{V}(x_{\tau})\big]_{\alpha}- \big[\tilde{V}(x_{\tau})\big]_{\tilde{\alpha}}-\EE\big[\big[\tilde{V}(x_{\tau})\big]_{\alpha} -\big[\tilde{V}(x_{\tau})\big]_{\tilde{\alpha}}|\cF_{\tau-1}\big]\big\}\bigg\Vert^2_{\Lambda_k^{-1}}\\
        &\qquad +2\bigg\Vert \sum_{\tau=1}^k\bphi_{\tau}\big\{\big[V(x_{\tau})\big]_{\alpha}- \big[\tilde{V}(x_{\tau})\big]_{\alpha}-\EE\big[\big[V(x_{\tau})\big]_{\alpha} -\big[\tilde{V}(x_{\tau})\big]_{\alpha}|\cF_{\tau-1}\big]\big\}\bigg\Vert^2_{\Lambda_k^{-1}}.
    \end{align*}
   We can apply \Cref{th:Concentration of Self-Normalized Processes} and a union bound to the first term, and the second and the third term can be bounded by $16k^2\epsilon_1^2/\lambda$ and $8k^2\epsilon_2^2/\lambda$, respectively. Therefore we complete the proof.
\end{proof}

\subsection{Proof of \Cref{lemma:Covering number of the function class V}}
The proof of \Cref{lemma:Covering number of the function class V} will use the following fact.
\begin{lemma}
\label{lemma:Covering Number of Euclidean Ball}
    \cite[Covering Number of Euclidean Ball]{jin2020provably} For any $\epsilon>0$, the $\epsilon$-covering number of the Euclidean ball in $\RR^d$ with radius $R > 0$ is upper bounded by $(1 + 2R/\epsilon)^d$. 
\end{lemma}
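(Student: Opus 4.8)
The plan is to establish this standard bound by a volume-comparison (packing-versus-covering) argument. First I would let $\mathcal{B} = \mathcal{B}(0,R)$ denote the closed Euclidean ball of radius $R$ in $\RR^d$ and consider a \emph{maximal} $\epsilon$-separated subset $\mathcal{C} = \{x_1, \ldots, x_N\} \subseteq \mathcal{B}$, i.e.\ a set satisfying $\|x_i - x_j\|_2 > \epsilon$ for all $i \neq j$ to which no further point of $\mathcal{B}$ can be adjoined without violating this separation. The key observation is that maximality forces $\mathcal{C}$ to be an $\epsilon$-net of $\mathcal{B}$: every $x \in \mathcal{B}$ lies within distance $\epsilon$ of some $x_i$, since otherwise $x$ could itself be added to $\mathcal{C}$, a contradiction. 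Hence the $\epsilon$-covering number satisfies $\mathcal{N}_\epsilon \leq N$, and it suffices to bound the packing number $N$.

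Next I would carry out the volume argument. Because the points of $\mathcal{C}$ are pairwise strictly more than $\epsilon$ apart, the open balls $\{\mathcal{B}(x_i, \epsilon/2)\}_{i=1}^N$ are pairwise disjoint. Moreover, by the triangle inequality each $\mathcal{B}(x_i, \epsilon/2)$ is contained in the enlarged ball $\mathcal{B}(0, R + \epsilon/2)$, since $x_i \in \mathcal{B}(0,R)$. Summing the volumes of the $N$ disjoint inner balls and using that their union lies inside the enlarged ball gives
\[
    N \cdot \mathrm{vol}\big(\mathcal{B}(0, \epsilon/2)\big) \leq \mathrm{vol}\big(\mathcal{B}(0, R + \epsilon/2)\big).
\]

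Finally I would invoke the scaling of Lebesgue measure in $\RR^d$, namely $\mathrm{vol}(\mathcal{B}(0,r)) = r^d\,\mathrm{vol}(\mathcal{B}(0,1))$ for every $r > 0$, so that the dimension-dependent constant $\mathrm{vol}(\mathcal{B}(0,1))$ cancels from both sides. This yields
\[
    N \leq \frac{(R + \epsilon/2)^d}{(\epsilon/2)^d} = \Big(\frac{2R + \epsilon}{\epsilon}\Big)^d = \Big(1 + \frac{2R}{\epsilon}\Big)^d,
\]
which together with $\mathcal{N}_\epsilon \leq N$ gives the claim. The only subtle point is the packing-to-covering reduction in the first step, which relies on invoking maximality correctly; the volume comparison itself is routine once disjointness and containment are verified.
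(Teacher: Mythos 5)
Your proof is correct: the packing-to-covering reduction via a maximal $\epsilon$-separated set, the disjointness of the radius-$\epsilon/2$ balls, the containment in $\mathcal{B}(0, R+\epsilon/2)$, and the cancellation of the unit-ball volume are all handled properly, and they yield exactly the stated bound $(1+2R/\epsilon)^d$. The paper itself gives no proof of this lemma --- it imports it by citation from \citet{jin2020provably} --- and your volume-comparison argument is precisely the standard proof underlying that cited result, so there is nothing to reconcile between the two.
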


\begin{proof}[Proof of \Cref{lemma:Covering number of the function class V}]
    The argument is similar to the proof of Lemma D.6 in \cite{jin2020provably}. Denote $A=\beta^2\Lambda^{-1}$, so we have
    \begin{align}
    \label{eq:function class V}
        V(\cdot)=\min \bigg\{\max_{a}\bigg\{\wb^{\top}\bphi(\cdot,a)+\sum_{i=1}^d\sqrt{\phi_i(\cdot,a)\mathbf{1}_i^{\top}A\phi_i(\cdot,a)\mathbf{1}_i}\bigg\},H \bigg\},
    \end{align}
    for $\Vert w\Vert \leq L$ and $\Vert A \Vert\leq B^2\lambda^{-1}$. For any two functions $V_1, V_2 \in \cV$, let them take the form in \eqref{eq:function class V} with parameters $(\wb_1, A_1)$ and $(\wb_2,A_2)$, respectively. Then since both $\min\{\cdot, H\}$ and $\max_a$ are contraction maps, we have %
    \begin{align}
        \dist(V_1, V_2)&\leq \sup_{x,a}\bigg|\bigg[\wb_1^{\top}\bphi(x,a)+\sum_{i=1}^d\sqrt{\phi_i(x,a)\mathbf{1}_i^{\top}A_1\phi_i(x,a)\mathbf{1}_i} \bigg] \notag - \bigg[\wb_2^{\top}\bphi(x,a)+\sum_{i=1}^d\sqrt{\phi_i(x,a)\mathbf{1}_i^{\top}A_2\phi_i(x,a)\mathbf{1}_i} \bigg]\bigg|\notag\\
        &\leq \sup_{\bphi:\Vert\bphi\Vert\leq 1}\bigg|\bigg[\wb_1^{\top}\bphi+\sum_{i=1}^d\sqrt{\phi_i\mathbf{1}_i^{\top}A_1\phi_i\mathbf{1}_i} \bigg] - \bigg[\wb_2^{\top}\bphi+\sum_{i=1}^d\sqrt{\phi_i\mathbf{1}_i^{\top}A_2\phi_i\mathbf{1}_i} \bigg] \bigg|\notag\\
        &\leq \sup_{\bphi:\Vert\bphi\Vert\leq 1}\big|(\wb_1-\wb_2)^{\top}\bphi\big|  + \sup_{\bphi:\Vert\bphi\Vert\leq 1}\sum_{i=1}^d\sqrt{\phi_i\mathbf{1}_i^{\top}(A_1-A_2)\phi_i\mathbf{1}_i}\label{eq:bound_by_triangular}\\
        &\leq \Vert \wb_1-\wb_2\Vert + \sqrt{\Vert A_1-A_2\Vert}\sup_{\bphi:\Vert\bphi\Vert\leq 1}\sum_{i=1}^d\Vert \phi_i\mathbf{1}_i \Vert\notag\\
        & \leq \Vert \wb_1-\wb_2\Vert + \sqrt{\Vert A_1-A_2\Vert_F}, \label{eq:dist(V1,V2)}
    \end{align}
    where \eqref{eq:bound_by_triangular} follows from triangular inequlaity and the fact that $|\sqrt{x} - \sqrt{y}| \leq \sqrt{|x-y|},~ \forall x, y
    \geq 0$. For matrices, $\Vert\cdot\Vert$ and $\Vert\cdot\Vert_F$ denote the matrix operator norm and Frobenius norm respectively.

    Let $\cC_{\wb}$ be an $\epsilon/2$-cover of $\{\wb\in\RR^d|\Vert\wb\Vert_2\leq L\}$ with respect to the 2-norm, and $\cC_{A}$ be an $\epsilon^2/4$-cover of $\{A\in\RR^{d\times d}|\Vert A\Vert_F\leq d^{1/2}B^2\lambda^{-1}\}$ with respect to the Frobenius norm. By \Cref{lemma:Covering Number of Euclidean Ball}, we know:
    \begin{align*}
        \big|\cC_{\wb}\big|\leq \big(1+4L/\epsilon\big)^d, \quad \big|\cC_A\big|\leq \big[1+8d^{1/2}B^2/(\lambda\epsilon^2)\big]^{d^2}.
    \end{align*}
    By \eqref{eq:dist(V1,V2)}, for any $V_1\in \cV$, there exists $\wb_2\in \cC_{\wb}$ and $A_2\in\cC_A$ such that $V_2$ parametrized by $(\wb_2, A_2)$ satisfies $\dist(V_1, V_2)\leq \epsilon$. Hence, it holds that $\cN_{\epsilon}\leq |\cC_{\wb}|\cdot|\cC_{A}|$, which leads to
    \begin{align*}
        \log\cN_{\epsilon}\leq \log|\cC_{\wb}|+\log|\cC_A| \leq d\log(1+4L/\epsilon) + d^2\log\big[1+8d^{1/2}B^2/(\lambda\epsilon^2)\big].
    \end{align*}
    This concludes the proof.
\end{proof}

\subsection{Proof of \Cref{lemma:Bounded difference}}
\begin{proof}
    For all $(s,a,h)\in\cS/\{s_f\}\times\cA\times[H]$, we have
    \begin{align*}
        Q_h^{\pi, \rho}(s,a) = \la\bphi(s,a), \btheta_h+\bnu_h^{\pi, \rho} \ra = r_h(s,a) + \inf_{P_h(\cdot|s,a) \in \cU_{h}^{\rho}(s,a;\bmu_h^0)} [\PP_hV_{h+1}^{\pi,\rho}](s,a).
    \end{align*}
    This gives
    \begin{align}\label{eq:decomp_model_error}
        (\btheta_h+\bnu_{h}^{\rho, k}) - (\btheta_h+\bnu_{h}^{\pi, \rho}) = \bnu_{h}^{\rho, k} - \bnu_{h}^{\pi, \rho} = \underbrace{\bnu_{h}^{\rho, k} - \tilde{\bnu}_h^{k, \rho}}_{I}+ \underbrace{\tilde{\bnu}_h^{k, \rho} - \bnu_{h}^{\pi, \rho}}_{II},
    \end{align}
    where $\tilde{\bnu}_h^{k, \rho} = \big[\tilde{\nu}_{h,i}^{k, \rho}\big]_{i\in[d]}$, and $\tilde{\nu}_{h,i}^{k, \rho} = \max_{\alpha \in [0,H]}\big\{\EE^{\mu_{h,i}^0} \big[V_{h+1}^{k, \rho}(s)\big]_{\alpha}-\rho\alpha\big\}$. In what follows, we will bound these two terms separately.
    
    For term $I$ in \eqref{eq:decomp_model_error}, we have
    \begin{align*}
        &\bnu_{h}^{\rho, k} - \tilde{\bnu}_h^{k, \rho} \leq \Big[\max_{\alpha \in [0,H]} \Big\{ \widehat{\EE^{\mu_{h,i}^0} 
    \Big[V_{h+1}^{k, \rho}(s)\Big]_{\alpha}} - \EE^{\mu_{h,i}^0} \Big[V_{h+1}^{k, \rho}(s)\Big]_{\alpha} \Big\}\Big]_{i\in[d]}.
    \end{align*}
    Denote $\alpha_i^k = \argmax_{\alpha \in [0,H]}\big\{ \widehat{\EE^{\mu_{h,i}^0} \big[V_{h+1}^{k, \rho}(s)\big]_{\alpha}} - \EE^{\mu_{h,i}^0} \big[V_{h+1}^{k, \rho}(s)\big]_{\alpha}\big\},~i=1,\cdots,d$. Then we have
    \begin{align}
         &\bnu_{h}^{\rho, k} - \tilde{\bnu}_h^{k, \rho} \notag \\
         &\leq \bigg[\bigg(\big(\Lambda_h^k\big)^{-1}\sum_{\tau=1}^{k-1}\bphi_h^{\tau}\bigg[V_{h+1}^{k, \rho}(s_{h+1}^{\tau})\bigg]_{\alpha_i^k} \bigg)_i -   \bigg(\EE^{\bmu_{h}^0} \Big[V_{h+1}^{k, \rho}(s)\Big]_{\alpha_i^k} \bigg)_i \bigg]_{i\in[d]}\notag \\
        &=\bigg[\bigg(-\lambda\big(\Lambda_h^k\big)^{-1} \EE^{\bmu_{h}^0} \Big[V_{h+1}^{k, \rho}(s)\Big]_{\alpha_i^k}\bigg)_i + \bigg(\big(\Lambda_h^k\big)^{-1}\sum_{\tau=1}^{k-1}\bphi_h^{\tau}\bigg[\Big[V_{h+1}^{k, \rho}(s_{h+1}^{\tau})\Big]_{\alpha_i^k} - \Big[\PP_h^0\Big[V_{h+1}^{k, \rho}\Big]_{\alpha_i^k}\Big](s_h^{\tau}, a_h^{\tau}) \bigg] \bigg)_i\bigg]_{i\in[d]}\label{eq:difference decomposition}.
    \end{align}
    For the first term on the RHS of \eqref{eq:difference decomposition}, 
    \begin{align}
        &\bigg|\bigg\la \bphi(s,a), \bigg[\bigg(-\lambda\big(\Lambda_h^k\big)^{-1} \EE^{\bmu_{h}^0} \Big[V_{h+1}^{k, \rho}(s)\Big]_{\alpha_i^k}\bigg)_i \bigg]_{i\in[d]} \bigg\ra \bigg|\notag \\
        &=\bigg|\sum_{i=1}^d\phi_i(s,a)\mathbf{1}_i^{\top}(-\lambda)\big(\Lambda_h^k\big)^{-1}\EE^{\bmu_h^0}\Big[V_{h+1}^{k, \rho}(s)\Big]_{\alpha_i^k}  \bigg|\notag \\
        &\leq \lambda \sum_{i=1}^d\sqrt{\phi_i(s,a)\mathbf{1}_i^{\top}\big(\Lambda_h^k\big)^{-1}\phi_i(s,a)\mathbf{1}_i} \cdot \bigg\Vert \EE^{\bmu_h^0}\Big[V_{h+1}^{k, \rho}(s) \Big]_{\alpha_i^k}\bigg\Vert_{(\Lambda_h^k)^{-1}}\notag\\
        & \leq \sqrt{\lambda}H\sum_{i=1}^d\sqrt{\phi_i(s,a)\mathbf{1}_i^{\top}\big(\Lambda_h^k\big)^{-1}\phi_i(s,a)\mathbf{1}_i},\label{eq:first term bound}
    \end{align}
    where $\mathbf{1}_i$ is the vector with the $i$-th entry being 1 and else being 0. The first inequality holds due to the Cauchy-Schwarz inequality.
    For the second term on the RHS of \eqref{eq:difference decomposition}, given the event $\cE$ defined in \Cref{lemma:Concentration} we have,
    \begin{align}
        &\bigg|\bigg\la \bphi(s,a), \bigg[ \bigg((\Lambda_h^k)^{-1}\sum_{\tau=1}^{k-1}\bphi_h^{\tau}\bigg[\Big[V_{h+1}^{k, \rho}(s_{h+1}^{\tau})\Big]_{\alpha_i^k} - \Big[\PP_h^0 \Big[V_{h+1}^{k, \rho}\Big]_{\alpha_i^k}\Big](s_h^{\tau}, a_h^{\tau}) \bigg] \bigg)_i \bigg]_{i\in[d]} \bigg\ra \bigg|\notag\\
        & = \bigg|\sum_{i=1}^d\phi_i(s,a) \mathbf{1}_i^{\top}(\Lambda_h^k)^{-1}\sum_{\tau=1}^{k-1}\bphi_h^{\tau}\bigg[\Big[V_{h+1}^{k, \rho}(s_{h+1}^{\tau})\Big]_{\alpha_i^k} - \Big[\PP_h^0 \Big[V_{h+1}^{k, \rho}\Big]_{\alpha_i^k}\Big](s_h^{\tau}, a_h^{\tau})\bigg]\bigg| \notag\\
        & \leq \sum_{i=1}^d\sqrt{\phi_i(s,a)\mathbf{1}_i^{\top}\big(\Lambda_h^k\big)^{-1}\phi_i(s,a)\mathbf{1}_i} \cdot \bigg\Vert \sum_{\tau=1}^{k-1}\bphi_h^{\tau}\bigg[\Big[V_{h+1}^{k,\rho}(s_{h+1}^{\tau})\Big]_{\alpha_i^k}-\Big[\PP_h^0 \Big[V_{h+1}^{k, \rho}\Big]_{\alpha_i^k}\Big](s_h^{\tau}, a_h^{\tau}) \bigg]\bigg\Vert_{(\Lambda_h^k)^{-1}}\notag\\
        & \leq C\cdot dH\sqrt{\chi}\sum_{i=1}^d\sqrt{\phi_i(s,a)\mathbf{1}_i^{\top}\big(\Lambda_h^k\big)^{-1}\phi_i(s,a)\mathbf{1}_i},\label{eq:second term bound}
    \end{align}
    for an absolute constant $C$ independent of $c_{\beta}$, and $\chi = \log[3(c_{\beta}+1)dT/p]$.
    Combining \eqref{eq:difference decomposition}, \eqref{eq:first term bound} and \eqref{eq:second term bound}, we have
    \begin{align*}
        \big\la\bphi(s,a), \bnu_{h}^{\rho, k} - \tilde{\bnu}_h^{k, \rho} \big\ra \leq c\cdot dH\sqrt{\chi}\sum_{i=1}^d\sqrt{\phi_i(s,a)\mathbf{1}_i^{\top}\big(\Lambda_h^k\big)^{-1}\phi_i(s,a)\mathbf{1}_i},
    \end{align*}
    for an absolute constant $c$ independent of $c_\beta$.
    On the other hand, we can similarly deduce 
    $
        \la\bphi(s,a),  \tilde{\bnu}_h^{k, \rho}-\bnu_{h}^{\rho, k}  \ra \leq c\cdot dH\sqrt{\chi}\sum_{i=1}^d\sqrt{\phi_i(s,a)\mathbf{1}_i^{\top}(\Lambda_h^k)^{-1}\phi_i(s,a)\mathbf{1}_i}.
    $
    Thus, we have
    \begin{align}
    \label{eq:bound on estimation error}
        \big|\la\bphi(s,a), \bnu_{h}^{\rho, k} - \tilde{\bnu}_h^{k, \rho} \ra\big|\leq c\cdot dH\sqrt{\chi}\sum_{i=1}^d\sqrt{\phi_i(s,a)\mathbf{1}_i^{\top}\big(\Lambda_h^k\big)^{-1}\phi_i(s,a)\mathbf{1}_i}.
    \end{align}
    
    For term $II$ in \eqref{eq:decomp_model_error}, we have
    \begin{align*}
        \big \la \bphi(s,a), \tilde{\bnu}_h^{k, \rho} - \bnu_h^{\pi, \rho}\big \ra = \inf_{P_h(\cdot|s,a) \in \cU_{h}^{\rho}(s,a;\bmu_h^0)}\Big[\PP_hV_{h+1}^{k, \rho}\Big](s,a) - \inf_{P_h(\cdot|s,a) \in \cU_{h}^{\rho}(s,a;\bmu_h^0)}\Big[\PP_hV_{h+1}^{\pi, \rho}\Big](s,a).
    \end{align*}
    Finally, since $\la \bphi(s,a), \btheta_h+\bnu_{h}^{\rho, k}\ra - Q_h^{\pi, \rho}(s,a) =  \la\bphi(s,a),\bnu_{h}^{\rho, k} - \tilde{\bnu}_{h}^{k, \rho} + \tilde{\bnu}_{h}^{k, \rho} -\bnu_{h}^{\pi, \rho}\ra$, by our choice of $\lambda$ and \eqref{eq:bound on estimation error} we have 
    \begin{align*}
        &\bigg|\big\la \bphi(s,a), \btheta_h+\bnu_{h}^{\rho, k}\big\ra - Q_h^{\pi, \rho}(s,a) - \bigg(\inf_{P_h(\cdot|s,a) \in \cU_{h}^{\rho}(s,a;\bmu_h^0)}\Big[\PP_hV_{h+1}^{k, \rho}\Big](s,a) - \inf_{P_h(\cdot|s,a) \in \cU_{h}^{\rho}(s,a;\bmu_h^0)}\Big[\PP_hV_{h+1}^{\pi, \rho}\Big](s,a)\bigg) \bigg|
        \\
        & = \big|\big \la \bphi(s,a), \bnu_{h}^{\rho, k} - \tilde{\bnu}_h^{k, \rho}\big \ra \big|
        \\
        &\leq c\cdot dH\sqrt{\chi} \sum_{i=1}^d\sqrt{\phi_i(s,a)\mathbf{1}_i^{\top}\big(\Lambda_h^k\big)^{-1}\phi_i(s,a)\mathbf{1}_i}.
    \end{align*}

Finally, to prove this lemma, we only need to show that there exists a choice of absolute value $c_{\beta}$ so that 
\begin{align}
\label{eq:choice of absolute constant}
c'\sqrt{\iota+\log(c_{\beta}+1)}\leq c_{\beta}\sqrt{\iota},
\end{align}
where $\iota=\log3dT/p$. We know $\iota\in [\log3, \infty)$ by its definition, and $c'$ is an absolute constant independent of $c_{\beta}$. Therefore we can pick an absolute constant $c_{\beta}$ which satisfies $c'\sqrt{\log3+\log(c_{\beta}+1)}\leq c_{\beta}\sqrt{\log3}$. This choice of $c_{\beta}$ will ensure \eqref{eq:choice of absolute constant} hold for all $\iota \in [\log3, \infty)$, which finishes the proof.
\end{proof}

\end{document}